\definecolor{Gred}{RGB}{219, 50, 54}
\definecolor{Ggreen}{RGB}{60, 186, 84}
\definecolor{Gblue}{RGB}{72, 133, 237}
\definecolor{Gyellow}{RGB}{247, 178, 16}
\definecolor{ToCgreen}{RGB}{0, 128, 0}
\definecolor{myGold}{RGB}{231,141,20}
\definecolor{myBlue}{rgb}{0.19,0.41,.65}
\definecolor{myPurple}{RGB}{175,0,124}
\renewcommand{\epsilon}{\varepsilon}
\theoremstyle{plain}
\newtheorem{theorem}{Theorem}[section]
\newtheorem{remark}{Remark}
\newtheorem{example}{Example}
\newtheorem{proposition}[theorem]{Proposition}
\newtheorem{lemma}[theorem]{Lemma}
\newtheorem{corollary}[theorem]{Corollary}
\theoremstyle{definition}
\newtheorem{assumption}{Assumption}
\newcommand{\poly}{\mathrm{poly}}
\newcommand{\calN}{\mathcal{N}}
\newcommand{\wt}{\widetilde}
\newcommand{\Id}{\textup{Id}}
\newcommand{\Tr}{\mathop{\textup{Tr}}}
\newcommand{\R}{{\mathbb{R}}}
\newcommand{\wh}{\widehat}
\newcommand{\KL}[2]{\mathsf{KL}\left(#1\|#2\right)}
\renewcommand{\d}{{\mathrm{d}}}
\DeclarePairedDelimiter{\norm}{\lVert}{\rVert}
\DeclarePairedDelimiter{\iprod}{\langle}{\rangle}
\DeclarePairedDelimiter{\brk}{[}{]}
\DeclarePairedDelimiter{\brc}{\{}{\}}
\def\Pr{\@ifnextchar[{\@witha}{\@withouta}}
\def\@witha[#1]{\mathop{\mathbb{P}}_{#1}\brk}
\def\@withouta{\mathop{\mathbb{P}}\brk}
\def\E{\@ifnextchar[{\@withb}{\@withoutb}}
\def\@withb[#1]{\mathbb{E}_{#1}\brk}
\def\@withoutb{\mathbb{E}\brk}
\def\Var{\@ifnextchar[{\@withc}{\@withoutc}}
\def\@withc[#1]{\mathop{\mathbb{V}}_{#1}\brk}
\def\@withoutc{\mathop{\mathbb{V}}\brk}
\def\bone{\@ifnextchar[{\@withd}{\@withoutd}}
\def\@withd[#1]{\mathds{1}_{#1}\brk}
\def\@withoutd{\mathds{1}\brk}
\def\psE{\@ifnextchar[{\@withe}{\@withoute}}
\def\@withe[#1]{\td{\mathop{\mathbb{E}}}_{#1}\brk}
\def\@withoute{\td{\mathop{\mathbb{E}}}\brk}
\DeclarePairedDelimiterX{\expectarg}[1]{[}{]}{%
  \ifnum\currentgrouptype=16 \else\begingroup\fi
  \activatebar#1
  \ifnum\currentgrouptype=16 \else\endgroup\fi
}
\newcommand{\innermid}{\nonscript\;\delimsize\vert\nonscript\;}
\newcommand{\activatebar}{%
  \begingroup\lccode`\~=`\|
  \lowercase{\endgroup\let~}\innermid 
  \mathcode`|=\string"8000
}
\newcommand{\xl}{x^\leftarrow}
\newcommand{\wtx}{\wt{x}}
\newcommand{\wtxl}{\wt{x}^\leftarrow}
\newcommand{\xllam}{x^{\leftarrow,\lambda}}
\newcommand{\wtxlam}{\wtx^\lambda}
\newcommand{\TV}{\mathsf{TV}}
\newcommand{\vecv}{v}
\renewcommand{\div}{{\rm div}}
\newcommand{\Lsc}[1]{L_{{\sf sc}, #1}}
\newcommand{\Lfx}{L_{f;{\sf x}}}
\newcommand{\Lft}{L_{f;{\sf t}}}
\newcommand{\Lg}{L_g}
\newcommand{\gmax}{g_{\rm max}}
\newcommand{\Lhigh}{L_{\sf high}}
\title{Restoration-Degradation Beyond Linear Diffusions: \\ A Non-Asymptotic Analysis For DDIM-Type Samplers}
\author{
    Sitan Chen\thanks{Email: \texttt{sitanc@berkeley.edu}} \\
    UC Berkeley
        \and 
    Giannis Daras\thanks{Email: \texttt{giannisdara@utexas.edu}}\\
    UT Austin
        \and
    Alexandros G. Dimakis\thanks{Email: \texttt{dimakis@austin.utexas.edu}}\\
    UT Austin
}
\begin{document}

\maketitle



\begin{abstract}
We develop a framework for non-asymptotic analysis of deterministic samplers used for diffusion generative modeling.
Several recent works have analyzed \emph{stochastic} samplers using tools like Girsanov's theorem and a chain rule variant of the interpolation argument. Unfortunately, these techniques give vacuous bounds when applied to deterministic samplers. 
We give a new operational interpretation for deterministic sampling by showing that one step along the probability flow ODE can be expressed as two steps: 1) a restoration step that runs gradient ascent on the conditional log-likelihood at some infinitesimally previous time, and 2) a degradation step that runs the forward process using noise pointing back towards the current iterate. This perspective allows us to extend denoising diffusion implicit models to general, non-linear forward processes. We then develop the first polynomial convergence bounds for these samplers under mild conditions on the data distribution.
%
\end{abstract}

\section{Introduction}
Diffusion models~\cite{sohl_thermodynamics, ddpm, ncsn} have emerged as a powerful framework for generative modeling. One of the core components is corrupting samples at different scales, slowly molding the data into noise. The corruption process, also known as the \emph{forward process}, can be fully described by the intermediate distributions, $\{ q_t\}_{t \in [0, T]}$, it defines. Diffusion models learn to revert the forward process by approximating the \textit{score} function, i.e. the gradient of the log-likelihood, of the intermediate distributions $q_t$. 

Once the score function has been learned, one can generate samples by running the reverse stochastic differential equation (SDE) associated with the forward process~\cite{anderson1982reverse, song2020score}. In practice however, one can only run a suitable discretization of the SDE, and due to the recursive nature of the sampling procedure, the discretization error from previous steps can accumulate, leading to sampling drift away from the true reverse process. Other sources of error come from the approximation error in estimating the score~\cite{sehwag2022generating, ddpm, nichol2021improved} and from the starting distribution. Controlling the propagation of errors in the reverse SDE has been studied in the recent works of \cite{blomrorak2020generative, debetal2021scorebased, deb2022manifold, liu2022let, leelutan2022generative, pid2022manifolds, leelutannew, chen2022improved, chen2022sampling}.

A second family of sampling methods is that of \emph{deterministic} samplers. As noted in~\cite{song2020score}, one can derive such samplers via a deterministic ODE process, the \emph{probability flow ODE}, that satisfies the same Fokker-Planck equation (and hence has the same marginals $\brc{q_t}$) as the reverse SDE. A different work, DDIM~\cite{ddim}, derives deterministic samplers by considering a non-Markovian diffusion process that leads to the same training objective, but a different reverse process. The two formulations turn out to be equivalent up to a reparametrization~\cite{song2020score, karras2022elucidating}. DDIM samplers can be interpreted as iterating a combination of two steps: a restoration step that recovers some rough final reconstruction of the current iterate at time $t$, and a degradation step that corrupts this rough estimate to time $t + h$. This interpretation can be extended to accommodate general \emph{linear} corruption processes~\cite{zhang2022gddim,soft_diffusion, bansal2022cold, zhang2022gddim}.

Deterministic samplers offer a number of advantages over stochastic ones. While the latter are typically state-of-the-art for image generation, they require a large \emph{number of function evaluations} which renders them impractical for many applications. The gap between sample quality for deterministic and stochastic samplers has been significantly narrowed in the recent work of~\cite{karras2022elucidating}. Deterministic samplers are typically much faster~\cite{ddim, nichol2021improved} and also useful for computing likelihoods~\cite{ddpm, song2020score}. Further, one of the most successful techniques for accelerating diffusion models, Progressive Distillation~\cite{salimans2022progressive}, requires deterministic samplers. Deterministic samplers also allow the exploration of the semantic latent space of the trained network~\cite{kwon2022diffusion}.

Despite their significance, there is currently limited theoretical understanding for deterministic samplers. Specifically, there is no analysis for their non-asymptotic convergence behavior, in contrast to stochastic samplers. Obtaining such an analysis is challenging because Girsanov's theorem\--- the main tool for bounding the propagation of errors when implementing the reverse SDE\--- and related techniques all yield vacuous bounds for deterministic samplers (see Section~\ref{sec:related}). 

Our contributions are twofold. We first propose a new operational interpretation for the reverse ODE that generalizes DDIM sampling to arbitrary, \emph{non-linear} forward processes. 

\begin{theorem}[Informal, see Section~\ref{sec:interpret}]\label{thm:interpret_informal}
    Denote by $h$ the infinitesimally small step size with which we discretize the probability flow ODE. Let $\ell\in\mathbb{N}$ be a parameter for which $\ell \to \infty$ and $\ell h \to 0$. For any forward process, running the probability flow ODE for time $h$ is equivalent to running the following two steps: 1) restoring the current iterate to $\ell h$ time steps in the past via a step of gradient ascent on conditional log-likelihood, 2) degrading this by $(\ell - 1)h$ steps by simulating the forward process with noise pointing in the direction of the current iterate.
\end{theorem}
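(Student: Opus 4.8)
The plan is to prove the equivalence at the level of one-step velocity fields: I will write both ``one reverse step of the probability flow ODE'' and ``one restoration step followed by one degradation step'' as maps $x_t \mapsto x_t - h\,\mathsf{v}(x_t,t) + (\text{error})$, and show that the two vector fields $\mathsf v$ coincide in the regime $\ell\to\infty$, $\ell h\to 0$. Write the forward process as $\mathrm{d}x = F(x,t)\,\mathrm{d}t + G(x,t)\,\mathrm{d}W_t$ with marginals $q_t$; the Fokker--Planck equation then forces the probability flow velocity to be $v_{\mathrm{PF}} = F - \tfrac12\nabla\cdot(GG^\top) - \tfrac12 (GG^\top)\nabla\log q_t$, so one reverse step is $x_t\mapsto x_t - h\,v_{\mathrm{PF}}(x_t,t) + O(h^2)$, and the target is to show the two-step procedure produces this same map up to $o(h)$.

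Two ingredients have to be pinned down. First, a short-time expansion of the forward transition kernel on the window $[s,t]$ with $s := t-\ell h$: writing $x_t = a_{t\leftarrow s}(x_s) + b_{t\leftarrow s}\xi$ with $\xi$ standard Gaussian, one has $a_{t\leftarrow s}(y) = y + \ell h\,F(y,s) + O((\ell h)^2)$ and $b_{t\leftarrow s}b_{t\leftarrow s}^\top = \ell h\,(GG^\top)(y,s) + O((\ell h)^2)$, plus a bound on the non-Gaussian remainder (an Euler--Maruyama / heat-kernel estimate). Second, the restoration map: one gradient-ascent step on the conditional log-likelihood $y\mapsto \log q_{s\mid t}(y\mid x_t)$ started at $y=x_t$, with the natural inverse-Fisher-information step size, equals by Bayes' rule together with the first ingredient the Tweedie/posterior-mean estimate $\hat x = \E{x_s\mid x_t} + O((\ell h)^2) = x_t + \ell h\,\rho(x_t,t) + O((\ell h)^2)$, where $\rho := -F + (GG^\top)\nabla\log q_t + \nabla\cdot(GG^\top)$ is (minus) the reverse-SDE drift.

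Then I would set up the degradation and do the bookkeeping. The ``noise pointing towards the current iterate'' is the inferred noise $\hat\epsilon := b_{t\leftarrow s}^{-1}\!\big(x_t - a_{t\leftarrow s}(\hat x)\big)$, i.e. the realization of $\xi$ that carries $\hat x$ forward to $x_t$ along the forward process, and degrading $\hat x$ by $(\ell-1)h$ steps means forming $\hat x' := a_{(t-h)\leftarrow s}(\hat x) + b_{(t-h)\leftarrow s}\,\hat\epsilon$. Using $a_{t\leftarrow s}(\hat x) = x_t + \ell h\,\nu + O((\ell h)^2)$ with $\nu := \rho + F = (GG^\top)\nabla\log q_t + \nabla\cdot(GG^\top)$, one gets $\hat\epsilon = -\sqrt{\ell h}\,G(x_t,t)^{-1}\nu + O((\ell h)^{3/2})$, hence $b_{(t-h)\leftarrow s}\hat\epsilon = -\sqrt{(\ell-1)h}\,\sqrt{\ell h}\,\nu + O((\ell h)^2) = -h\sqrt{\ell(\ell-1)}\,\nu + O((\ell h)^2)$; combining with $a_{(t-h)\leftarrow s}(\hat x) = \hat x + (\ell-1)h\,F + O((\ell h)^2)$ and $\hat x = x_t + \ell h(\nu - F) + O((\ell h)^2)$ yields
\[
\hat x' = x_t - h\,F(x_t,t) + \big(\ell - \sqrt{\ell(\ell-1)}\big)\,h\,\nu(x_t,t) + O((\ell h)^2).
\]
Since $\ell - \sqrt{\ell(\ell-1)} = \ell\big(1-\sqrt{1-1/\ell}\,\big)\to \tfrac12$, letting $\ell\to\infty$ with $\ell h\to 0$ (and, so that the remainder is $o(h)$, also $\ell = o(h^{-1/2})$) gives $\hat x' = x_t - h\big(F - \tfrac12\nu\big) + o(h) = x_t - h\,v_{\mathrm{PF}}(x_t,t) + o(h)$, i.e. exactly one reverse probability flow step.

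The obstacle I expect to be hardest is the near-cancellation underlying the factor $\tfrac12$: the restoration displacement $\ell h\,\rho$ and the degradation's noise term $b_{(t-h)\leftarrow s}\hat\epsilon$ are \emph{each} of size $\Theta(\ell h)$ and would individually yield the reverse-SDE step rather than the probability flow step; only their difference, and specifically the mismatch between $\sqrt{(\ell-1)h}\cdot\sqrt{\ell h}$ and $\ell h$ that the square-root noise scaling produces, leaves the correct $\Theta(h)$ term with coefficient $\tfrac12$. Consequently every error term must be controlled to $o(h)$ while two $\Theta(\ell h)$ quantities are subtracted, which is precisely what forces both $\ell\to\infty$ and $\ell h\to 0$. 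The technical counterpart is the kernel expansion: for general nonlinear $F$ and $G$ one must establish the short-time behavior --- mean, covariance, \emph{and} non-Gaussian remainder --- with enough uniformity in $(x,t)$ under the paper's regularity assumptions that nothing contaminates the leading term, whereas for linear forward processes this step is free and the argument reduces to the known DDIM computation.
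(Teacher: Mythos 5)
Your derivation is correct and rests on the same mechanism the paper uses: restore to $\ell h$ in the past via a posterior-mean (Tweedie) step, infer the noise that would carry the restored iterate forward to $x_t$, re-degrade with that inferred noise for $(\ell-1)h$, and exploit the cancellation $\ell - \sqrt{\ell(\ell-1)} \to \tfrac12$, which is exactly the paper's $\ell\,\delta_\ell = \ell\bigl(1 - \sqrt{1-1/\ell}\,\bigr) \to \tfrac12$. Where you depart is in generality and bookkeeping. You allow a state- and time-dependent matrix diffusion coefficient $G(x,t)$, which forces the divergence term $\nabla\cdot(GG^\top)$ into both the probability flow velocity and the restoration drift $\rho$; the paper works with scalar $g(t)$, where this term vanishes, so your computation subsumes theirs. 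Two cautions about that extension: (i) the identification $\mathbb{E}[x_s\mid x_t] = x_t + \ell h\,\rho + O((\ell h)^2)$ is no longer a plain application of Tweedie's formula, since for $x_s$-dependent $G$ the kernel $q_{t\mid s}(\cdot\mid x_s)$ has $x_s$-dependent covariance and the $\log\det$ term contributes to the gradient\---the clean justification is that $\rho$ is minus the Anderson reverse-SDE drift, whose one-step conditional mean is by construction the posterior mean; (ii) the inferred-noise formula $\hat\epsilon = b_{t\leftarrow s}^{-1}(x_t - a_{t\leftarrow s}(\hat x))$ implicitly requires $G$ to be invertible, which should be stated. You also make explicit the extra constraint $\ell = o(h^{-1/2})$ needed so the $O((\ell h)^2)$ remainder from the short-time expansion stays $o(h)$; the paper leaves this implicit in its repeated ``dropping $o(\ell h)$ terms,'' and spelling it out is a genuine tightening of the informal statement.
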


\noindent We then complement this new asymptotic result with a non-asymptotic proof that the sampler from this operational interpretation converges to the true process. This yields a deterministic sampling analogue of recent non-asymptotic analyses of stochastic samplers for diffusion models~\cite{chen2022sampling,leelutannew,chen2022improved}:

\begin{theorem}[Informal, see Theorem~\ref{thm:main}]
    Under mild assumptions on the smoothness of the data distribution (in particular, the distribution can be arbitrarily non-log-concave), the deterministic sampler arising from Theorem~\ref{thm:interpret_informal} generates samples for which the KL divergence with respect to the data distribution is small provided $\ell h$ and $\ell^{-1}$ are polynomially small in the dimension and other problem-specific parameters.
\end{theorem}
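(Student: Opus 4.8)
The plan is to follow the template of recent non-asymptotic analyses of \emph{stochastic} diffusion samplers --- decompose the total error into an initialization term, a score-estimation term, and a discretization term --- but to handle the discretization term using the restoration--degradation picture of Theorem~\ref{thm:interpret_informal} rather than by applying Girsanov to the bare probability flow ODE (which, as noted, is vacuous). Write $q_t$ for the true forward marginals, $\hat q_t$ for the law of the algorithm's iterate at grid time $t$ (run with the learned score $s_\theta$ and parameters $h,\ell$), and let $\delta>0$ be a short early-stopping time. The goal is a bound on $\KL{q_\delta}{\hat q_\delta}$, which converts to a bound against the data distribution by a standard short-time smoothing argument; by the chain rule for KL divergence it then suffices to control the divergence accumulated per step.

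First I would collect the smoothness ingredients used downstream: Lipschitzness of $x\mapsto\nabla\log q_t(x)$, control on how $\nabla\log q_t$ changes along the grid, and moment bounds on $q_t$. These follow from the stated mild assumptions on the data distribution together with the regularizing effect of the forward process, and they are exactly what let one Taylor-expand both the gradient-ascent restoration step (over a time window of length $\ell h$) and the forward-simulation degradation step (over length $(\ell-1)h$) and bound the remainders. For a general \emph{nonlinear} forward process one additionally has to control the bias of the restoration step: a single step of gradient ascent only approximates the idealized restoration (in the linear--Gaussian case Tweedie's formula makes this exact), so this approximation error must be shown to be $O(\ell h)$-small in the relevant norm --- a difficulty absent from prior linear-diffusion analyses.

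The core of the argument is a path-space comparison that exploits the fact that, at \emph{finite} $\ell$, the degradation step injects a genuine, non-vanishing Gaussian increment whose direction is tied to the current iterate. Hence the composition of restoration--degradation steps is a bona fide stochastic process, not a pure ODE, and one can run a Girsanov-type computation against a reference process whose marginals are exactly $\{q_t\}$; the resulting KL bound is the time-integral of (drift mismatch)$^2$ weighted by the inverse of the injected-noise variance. Here lies the crux: sending $\ell\to\infty$ recovers the deterministic sampler but kills the injected variance and would blow the bound up, so one must show the drift mismatch decays strictly faster than the noise scale --- i.e.\ that the finite-$\ell$ restoration--degradation drift agrees with the reference drift to high enough order that the ratio stays $\poly(d)\cdot(\ell h+\ell^{-1})$. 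This is precisely where the specific form of the noise in Theorem~\ref{thm:interpret_informal} (``pointing back toward the current iterate'') is used, to cancel the leading-order error terms; the remaining terms are bounded by the smoothness lemmas, and the score-estimation error enters this same mismatch term additively, contributing $\epsilon_{\mathsf{score}}^2$ after integration.

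Finally I would assemble the pieces: the initialization error $\KL{q_T}{\pi}$ is controlled by fast mixing of the forward process (exponential for an Ornstein--Uhlenbeck-type process); the per-step discretization and score errors are summed over the $\approx T/h$ steps via the chain rule/telescoping; and $T,h,\ell,\delta$ are tuned as polynomials in the dimension and the smoothness and moment constants so that $\KL{q_{\mathrm{data}}}{\hat q_\delta}=\poly(d)\cdot(\ell h+\ell^{-1})+\epsilon_{\mathsf{score}}^2+(\text{init})$ is as small as desired. The main obstacle I anticipate is exactly the finite-$\ell$ path-space comparison: making the Girsanov variance non-vacuous requires a delicate second-order cancellation between the overshooting restoration step and the re-noising degradation step, and one must verify the error scales as a genuine polynomial --- rather than, say, $\exp(\poly(d))$ --- in the Lipschitz and moment parameters.
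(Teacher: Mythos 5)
Your proposal rests on a misreading of what the deterministic sampler actually does. In the sampler analyzed by Theorem~\ref{thm:main} (the $\lambda=0$ case of Eq.~\eqref{eq:mle}), the degradation step does \emph{not} inject a ``genuine, non-vanishing Gaussian increment'': the noise vector $\gamma$ is the \emph{simulated} noise, i.e.\ the unique solution of $D^\gamma_{(k-\ell)h\to kh}(z)=\wtx_{kh}$, which makes $\gamma$ a deterministic function of the current iterate $\wtx_{kh}$. The composition restoration--degradation is therefore a deterministic map; conditioned on $\wtx_T$, the entire trajectory is fixed. Consequently the path-space KL between the algorithm's trajectory law and that of any reference process with marginals $\{q_t\}$ is either $0$ or $+\infty$, and the Girsanov-type computation you propose is vacuous regardless of how the drift mismatch scales with $\ell$. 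This is exactly the obstruction the paper highlights as the reason prior SDE analyses (Girsanov in \cite{chen2022sampling}, the chain-rule interpolation of \cite{chen2022improved}) do not transfer to the probability flow ODE; your plan to ``run a Girsanov-type computation against a reference process'' and then argue the drift mismatch decays faster than the injected noise is precisely the route the paper rules out. (Stochasticity appears only in the $\lambda>0$ family of Section~\ref{sec:general_diffusions}, via the fresh draw $\nu$; the theorem you are asked to prove is about $\lambda=0$.)

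The paper's actual argument is an interpolation argument on the marginal laws, not on path measures. It compares the continuous ODE $\d y_t=\mu_t(y_t)\,\d t$ with the piecewise-linear interpolation of the discrete process, writes the Fokker--Planck equations for both, and computes $\partial_t\,\KL{\pi'_t}{\pi_t}=\int\pi'_t\,\iprod{\nabla\ln\tfrac{\pi'_t}{\pi_t},\,\wh{\mu}_{t,kh}-\mu_t}$. In the SDE setting this comes with an additional $-C\int\pi'_t\norm{\nabla\ln\tfrac{\pi'_t}{\pi_t}}^2$ (Fisher information) term that lets one absorb the cross term via Young; in the ODE setting that term is absent, so the paper instead applies Cauchy--Schwarz and must then \emph{separately bound the relative Fisher information} $\int\pi'_t\norm{\nabla\ln\tfrac{\pi'_t}{\pi_t}}^2$. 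This is done by controlling the time derivatives of $\int\pi'_t\norm{\nabla\ln\pi'_t}^2$ and $\int\pi'_t\norm{\nabla\ln\pi_t}^2$ (Lemmas~\ref{lem:laplace},~\ref{lem:laplace2}) via Gr\"onwall, which in turn requires the higher-order smoothness of $\nabla^2\ln q^\leftarrow_t$ (Assumption~\ref{assume:smooth}, Part~\ref{item:high}) and the invertibility of the one-step map $F_{kh\to t}$ for small $h$ (Lemma~\ref{lem:smoothreverse1}). None of these ingredients appear in your sketch; the extra structure you would need to add is not a refinement of your Girsanov plan but a replacement for it. Additionally, the paper explicitly sets aside score-estimation error (see ``Limitations''), so including $\epsilon_{\mathsf{score}}$ is out of scope, and the final bound is stated directly at time $T$ starting from $q^\leftarrow_0$ rather than via an early-stopping time $\delta$.
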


\noindent As a corollary, our techniques imply that the same bounds hold for the Euler discretization of the probability flow ODE, yielding, to our knowledge, the first non-asymptotic analysis of this sampler.

\paragraph{Roadmap.} In Section~\ref{sec:prelims} we briefly recall the forward and reverse processes in diffusion generative modeling. In Section~\ref{sec:interpret}, we give our new operational interpretation for the probability flow ODE for general, non-linear diffusions. In Section~\ref{sec:discrete_preapp} we formally state our main non-asymptotic guarantee, Theorem~\ref{thm:main}, and give an overview of the proof, deferring the technical details to the Appendix.

In Appendix~\ref{app:learning_rate} we motivate the choice of certain learning rate parameter that arises in Section~\ref{sec:interpret}. In Appendix~\ref{app:proof_prelims} we provide preliminary calculations for the proof of Theorem~\ref{thm:main}. In Appendix~\ref{sec:generic}, we give a generic bound on the distance between two processes driven by ODEs with similar drifts, one of which is an interpolation of a discrete-time process. Finally, in Appendix~\ref{sec:discrete_analysis} we apply this generic bound to our setting, bound the difference in drifts between the probability flow ODE and our sampler, and prove Theorem~\ref{thm:main}.

\section{Preliminaries}
\label{sec:prelims}
In this work we consider a general forward process driven by a stochastic differential equation (SDE) of the form:
\begin{equation}
    \d x_t = f_t(x_t)\, \d t + g(t)\, \d W_t, \qquad x_0 \sim q\,, \label{eq:forward}
\end{equation}
where $(W_t)$ is a standard Brownian motion in $\R^d$. Let $q_t$ denote the law of $x_t$, so that $q_0 = q$.

Suppose we run the forward process up to a terminal time $T > 0$. Under mild conditions on the diffusion (see e.g. \cite{anderson1982reverse,follmer1985reversal,catetal2022timereversal}) which are satisfied by the processes we consider in this work, there is a suitable reverse process given by an SDE such that the marginal distribution at time $t$ is given by $q_{T-t}$. For convenience, we will often refer to $q_{T-t}$ as $q^\leftarrow_t$.

In fact, there is an entire family of SDEs with this property. For any $\lambda \ge 0$, consider the process $(\xllam_t)_{0 \le t \le T}$ given by
\begin{equation}
    \d \xllam_t = -\bigl\{f_{T-t}(\xllam_t) - \frac{1 + \lambda^2}{2}g(T-t)^2\nabla \ln q^\leftarrow_t(\xllam_t)\bigr\}\, \d t + \lambda g(T-t)\d W_t\ ,\qquad \xllam_0 \sim q^\leftarrow_0 \,.
    \label{eq:general_stochastic}
\end{equation}
By checking the Fokker-Planck equation, one sees that the marginal distribution of $\xllam_t$ is indeed given by $q^\leftarrow_t$.


One notable process in this family corresponds to the case of $\lambda = 0$. This is a \textit{deterministic} process, denoted $(\xl_t)_{0 \le t \le T}$, driven by the probability flow ODE~\cite{song2020score}.
\begin{equation}
	\d\xl_t = - \brc{f_{T-t}(\xl_t) - \frac{1}{2}g(T-t)^2 \nabla \ln q^\leftarrow_t(\xl_t)}\,\d t \ ,\label{eq:basicode}
\end{equation}
with $\xl_0 \sim q^\leftarrow_0$. 



In the diffusion model literature, there are two popular choices of forward process: the \emph{variance exploding (VE) SDE}~\cite{song2020score, ncsn, ncsnv2}, which corresponds to $f_t(x_t) = 0$, $g(t) = \sqrt{\frac{\d \sigma_t^2}{\d t}}$ for some increasing function $\sigma^2_t$; and the \emph{variance preserving (VP) SDE}~\cite{ddpm}, which corresponds to $f_t(x_t) = -\frac{1}{2}\beta_t x_t, \ g(t) = \sqrt{\beta_t}$ for some variance schedule $\beta_t$. These two choices are used in state-of-the-art diffusion models~\cite{dhariwal2021diffusion, ddpm++} and form the backbone of systems like DALL$\cdot$E 2~\cite{dalle2}, Imagen~\cite{imagen}, and Stable Diffusion~\cite{latent_diffusion}.

\section{Operational Interpretation for the Probability Flow ODE}
\label{sec:interpret}
\subsection{Warmup: linear SDEs and DDIM}

We begin by recalling the interpretation of the probability flow ODE associated to the variance exploding (VE)~\cite{song2020score} SDE as a \emph{denoising diffusion implicit model} (DDIM)~\cite{ddim}. For simplicity of exposition, we specialize to the case of $\sigma^2_t = t$, which corresponds to the forward process
\begin{equation*}
    \d x_t = \d W_t, \qquad x_0 \sim q\,.
\end{equation*}
According to \eqref{eq:basicode}, the associated probability flow ODE is:
\begin{equation}
    \d \xl_t = \frac{1}{2}\nabla \ln q^\leftarrow_t(\xl_t)\,\d t, \qquad \xl_0 \sim q_T, \label{eq:vesde_ode}
\end{equation}
so that the marginal distribution of $\xl_t$ is $q^\leftarrow_t$ for any $0 \le t\le T$.
The perspective of DDIM offers an interesting operational interpretation of \eqref{eq:vesde_ode}. Fix some infinitesimally small step size $h$, and consider the following procedure for forming $\xl_{t+h}$ given $\xl_t$. We first produce an estimate for the \emph{beginning} $x_0$ of the forward process. 
Note that
\begin{equation}
    \xl_t = x_{T - t} = x_0 + \varepsilon\,\sqrt{T - t} \label{eq:x0xt}
\end{equation} 
for $\varepsilon\sim\calN(0,\Id)$, so by Tweedie's formula~\cite{efron2011tweedie}, the mean of the posterior distribution over $x_0$ given $\xl_t$, i.e. $\mathbb E[x_0|\xl_t]$, is exactly:
\begin{equation*}
    z\triangleq \mathbb{E}[x_0|\xl_t] = \xl_t + (T-t)\,\nabla \ln q^\leftarrow_t(\xl_t)\,.
\end{equation*}
Starting from $z$ and degrading it along the forward process from time $0$ to time $T - t$, we would end up with $z + \gamma\sqrt{T - t}$ for some Gaussian noise $\gamma\sim\calN(0,\Id)$. 

Here is the key idea behind DDIMs: suppose we instead took $\gamma$ to be the solution to
\begin{equation*}
    \xl_t = z + \gamma\sqrt{T - t}\,,
\end{equation*}
i.e. suppose we took $\gamma$ to be the ``simulated noise'' that would be needed to degrade $z$ into $\xl_t$, rather than fresh Gaussian noise. 

Now imagine running the forward process to degrade $z$ from time $0$ to time $T - (t + h)$, but using this simulated noise $\gamma = \frac{\xl_t - z}{\sqrt{T - t}}$ instead of Gaussian noise. It turns out that the resulting vector, which we will define $\xl_{t+h}$ to be, is approximately what we would get by running the probability flow ODE for time $h$ starting at $\xl_t$!

Indeed, the result of degrading $z$ in this fashion is
\begin{align}
    \xl_{t+h} &\triangleq z + \sqrt{T - (t+h)} \cdot \frac{\xl_t - z}{\sqrt{T - t}} \\
    &= \xl_t + (T-t)\cdot \Bigl(1 - \sqrt{1 - \frac{h}{T-t}}\Bigr)\cdot\nabla \ln q_{T-t}(\xl_t)\,.
\end{align}
Observe that as $h\to 0$, the iterate $\xl_{t+h}$ tends to $\xl_t + \frac{h}{2}\nabla \ln q_{T-t}(\xl_t)$. Therefore, the above interpretation indeed recovers the probability flow ODE \eqref{eq:vesde_ode} as claimed. The above generalizes without much difficulty to any linear diffusion~\cite{soft_diffusion, bansal2022cold}.

\subsection{General diffusions}
\label{sec:general_diffusions} 

Let us now consider the setting where the forward process is given by an arbitrary, possibly non-linear diffusion as in Eq.~\eqref{eq:forward} in Section~\ref{sec:prelims}, so that the associated probability flow ODE is given by Eq.~\eqref{eq:basicode}.
Unfortunately, as soon as we step away from the linear setting, the operational interpretation from the previous section breaks down. The key issue is that when forming our estimate $z$ for the beginning of the forward process, there is no longer any simple expression for the posterior mean conditioned on $\xl_t$. 

\paragraph{Restoration operator.} To get around this issue, our first insight is: instead of deriving an estimate for the beginning of the forward process, we instead derive one for the process \emph{$\ell h$ units of time in the past}, i.e. at time $T - t - \ell h$ of the forward process. In the previous section, we implicitly took $\ell = (T - t) / h$, but now $\ell$ is a parameter that needs to be tuned. Crucially, selecting $\ell$ such that $\ell h\to 0$ allows us to linearize around $T - t$.  In analogy with \eqref{eq:x0xt}, we get the approximate relation
\begin{align}
    \xl_t = x_{T - t} &\approx x_{T - t - \ell h} + \ell h\, f_{T - t - \ell h}(x_{T-t-\ell h})  + g(T - t - \ell h)\sqrt{\ell h}\cdot \varepsilon \\
    &\approx x_{T - t - \ell h} + \ell h\, f_{T - t}(\xl_t) + g(T - t) \sqrt{\ell h}\cdot \varepsilon
\end{align}
for $\varepsilon\sim \calN(0,\Id)$, where the approximations hold up to $o(h)$ additive error. Rearranging, we see that $x_{T - t - \ell h}$ is simply $\xl_t - \ell h f_{T - t}(\xl_t)$ plus some Gaussian noise of variance $\ell h g(T - t)^2$. So, again by Tweedie's formula, we find that the mean of the posterior distribution over $x_{T - t - \ell h}$ given $\xl_t$ is approximately
\begin{equation}
    z\triangleq \xl_t - \ell h\,\brc{ f_{T-t}(\xl_t) - g(T - t)^2\nabla \ln q^\leftarrow_t(\xl_t)}\,.
    \label{eq:tweedies_update}
\end{equation}
Borrowing terminology from \cite{bansal2022cold}, we refer to the map from $\xl_t$ to $z$ as the \emph{restoration operator}. Formally, for $t > s > 0$, define the restoration operator $R_{t\to s}(\cdot)$ by
\begin{equation}
    R_{t\to s}(x) \triangleq x - (t - s)f_t(x) + (t - s) g(t)^2 \nabla \ln q_t(x) \label{eq:restore}
\end{equation}
so that $z = R_{T - t \to T - t - \ell h}(\xl_t)$.


\paragraph{Restoration operator as gradient ascent.} Here we briefly remark that there turns out to be a different way of thinking about the restoration operator, namely as one step of gradient ascent.

Formally, given times $0 < t < s$, consider maximizing the conditional log-likelihood $\ln q^\leftarrow_s(\cdot \mid x^\leftarrow_t)$. This is equivalent to maximizing
\begin{equation}
	\ell_{\xl_t}(x) \triangleq \ln q^\leftarrow_t(\xl_t \mid \xl_s = x) + \ln q^\leftarrow_s(x). \label{eq:bayes}
\end{equation}
For $s$ which is infinitesimally larger than $t$, the law of $\xl_t$ conditioned on $\xl_s = x$ is Gaussian with mean and covariance approximately $x + f_{T - s}(x)\, (t-s)$ and $g(T - t)^2 (t-s) \, \Id$. We can thus compute the gradient of \eqref{eq:bayes} to get
\begin{equation}
    \nabla \ell_{\xl_t}(x) \approx \frac{1}{g(T - t)^2 (t-s)}\bigl(\Id + (t-s)\,\nabla f_{T - s}(x)\bigr)  \cdot  \bigl(\xl_t - x - f_{T - s}(x)\, (t-s)\bigr) +\nabla \ln q^\leftarrow_s(x)\,. \label{eq:gradient}
\end{equation}

Now consider taking a single gradient step with learning rate $\eta$ starting from $\xl_t$ to get $\xl_t + \eta\nabla \ell_{\xl_t}(\xl_t)$. In Appendix~\ref{app:learning_rate}, we show that in the special case where $q$ is Gaussian and the forward process is Ornstein-Uhlenbeck, the correct choice of learning rate to maximize the conditional log-likelihood with just one step of gradient ascent is
\begin{equation}
    \eta\triangleq 2 g(T-t)^2\cdot (t - s)\,. \label{eq:lr}
\end{equation}
In this case, note that
\begin{align}
    \xl_t + \eta\nabla\ell_{\xl_t}(\xl_t) &\approx \xl_t - (t - s)f_{T - t}(\xl_t) - (t - s)^2 (\nabla f_{T - s}(\xl_t))f_{T - s}(\xl_t) \\
    &\qquad\qquad{} + (t - s) g(T - t)^2 \nabla \ln q^\leftarrow_s(\xl_t)\\
    &\approx \xl_t - (t - s)f_{T - t}(\xl_t) +(t - s) g(T - t)^2 \nabla \ln q^\leftarrow_t(\xl_t),
    \label{eq:gd_update}
\end{align}
where in the second step we have dropped the second order term $(t - s)^2(\nabla f_{T - s}(\xl_t))f_{T - s}(\xl_t)$ and approximated $(t - s) g(T - t)^2\nabla \ln q^\leftarrow_s(\xl_t)$ to first order by $(t - s)g(T - t)^2\nabla \ln q^\leftarrow_t(\xl_t)$. Observe now that for $s=t - \ell h$, the update rule of \eqref{eq:gd_update} is the same as the update rule of \eqref{eq:tweedies_update}.

\paragraph{Degradation operator.} The remainder of the derivation proceeds along similar lines to the previous section. Given noise vector $\gamma\in\R^d$, define the \emph{degradation operator} $D^\gamma_{s,t}(\cdot)$ by
\begin{equation}
	D^\gamma_{s\to t}(x) \triangleq x + f_s(x)(t - s) + g(s)\sqrt{t - s} \cdot \gamma\,.\label{eq:degrade}
\end{equation}
This operator simply runs an Euler-Maruyama discretization of the forward process, starting at time $s$, for time $t - s$, with the noise taken to be $\gamma$.

Starting from $z$ and degrading it along the forward process from $T - t - \ell h$ to time $T - t$, we would end up with $D^\gamma_{T-t-\ell h \to T - t}(z) = z + \ell h\, f_{T-t - \ell h}(z) + g(T - t - \ell h)\sqrt{\ell h}\cdot \gamma$ for some Gaussian noise $\gamma\sim\calN(0,\Id)$. As before, we instead take $\gamma$ to be the simulated noise needed to degrade $z$ into $\xl_t$, which in this case is given by the solution to
\begin{equation}
    \xl_t = z + \ell h\, f_{T-t-\ell h}(z) + g(T - t - \ell h)\sqrt{\ell h}\cdot \gamma\,.
\end{equation}
To produce the next iterate $\xl_{t+h}$ of the reverse process, we use $\gamma$ to degrade $z$ from time $T - t - \ell h$ to $T - t - h$. The result is given by
\begin{align}
    \xl_{t+h} &= z + (\ell - 1) h \, f_{T-t-\ell h}(z) + g(T - t - \ell h)\sqrt{(\ell - 1) h}\cdot \frac{\xl_t - z - \ell h \, f_{T-t-\ell h}(z)}{g(T - t - \ell h)\sqrt{\ell h}}  \\
    &\approx z + (\ell - 1)h f_{T-t}(\xl_t) + \sqrt{1 - 1/\ell}\cdot \ell h g(T - t)^2 \nabla \ln q_{T-t}(\xl_t) \label{eq:firstapprox_overview}\\ 
    &= \xl_t - h f_{T-t}(\xl_t) + \ell h\cdot \left(1 - \sqrt{1 - 1/\ell}\right) \cdot g(T - t)^2\nabla \ln q_{T-t}(\xl_t) \,.\label{eq:update_rule}
\end{align}
where in the second step we approximated $f_{T - t - \ell h}(z)$ by $f_{T - t}(\xl_t)$ and dropped $o(\ell h)$ terms as we are assuming $\ell h \to 0$. Finally as $\ell \to \infty$, the right-hand side converges to $\xl_t - h\, \brc{f_{T-t}(\xl_t) - \frac{1}{2}g(T - t)^2\nabla \ln q_{T-t}(\xl_t)}$, which recovers the Euler discretization of the probability flow ODE. We note that this final step is the only place that requires taking $\ell \to \infty$. Finally, as we take $h \to 0$, the above recovers the probability flow ODE~\eqref{eq:basicode}.

We give a more formal description of the sampling algorithm that this operational interpretation suggests at the beginning of Section~\ref{sec:discrete_preapp}, where we give the main findings of our non-asymptotic analysis of this sampler.

\subsection{Extensions to other samplers}
The operational interpretation that we developed to extend DDIM to non-linear forward processes can be adapted in a relatively straightforward way to describe more general samplers. For example, in Equation \eqref{eq:general_stochastic}, we defined a more general family of reverse processes, each of which has the correct marginal law at time $t$. These can easily be described by a similar operational interpretation.

Specifically, consider the following process $(\wtx^\lambda_{kh})_{k\in\{0,\ldots,T/h\}}$. Given iterate $\wtx^\lambda_{kh}$, the preceding iterate $\wtx^\lambda_{(k-1)h}$ is defined as follows:
\begin{align}
    \wtxlam_{(k-1)h} = D^{\gamma'}_{(k-\ell)h \to (k-1)h}(z), \ \ \ \text{for} \ &z \triangleq R_{kh\to(k-\ell)h}(\wtxlam_{kh})\,, \label{eq:zlam_def} \\
    &\gamma' = \sqrt{1 - \frac{ \lambda^2}{\ell - 1}} \, \gamma + \frac{1}{\sqrt{\ell-1}}\lambda \nu, \label{eq:gamprime_def}\\
    &\gamma: D^\gamma_{(k-\ell)h\to kh}(z) = \wtxlam_{kh}, \qquad \nu \sim \calN(0,\Id)\,.
\end{align}
Note that we use the same restoration operator as before to arrive to $z$, which we then use to estimate the noise $\gamma$. The critical change to the framework is that now, to corrupt from $z$ to $\xl_{t + h}$, instead of just using the estimated noise, we use a linear combination of the estimated noise, $\gamma$ and \emph{fresh noise} $\nu$. 

The parameter $\lambda$ here controls how close the update rule is to the deterministic sampler. Trivially, for $\lambda=0$, we have a fully deterministic sampler, as before. For $\lambda=1$, the sampler becomes the reverse SDE sampler of~\cite{song2020score}.
The coefficients have been chosen such that if $\gamma$ were actually a draw from $\calN(0,\Id)$ instead of simulated noise, then $\gamma'$ would likewise be a draw from $\calN(0,\Id)$.

Note that
\begin{align}
    \wtxlam_{(k-1)h} &= z + (\ell - 1)h\, f_{(k - \ell)h}(z) + g((k - \ell)h)\sqrt{(\ell - 1)h}\cdot \gamma'  \\
    &\approx z + (\ell - 1)h\, f_{kh}(\wtxlam_{kh}) + g(kh)\sqrt{(\ell-1)h}\cdot \gamma' \label{eq:wtxlam_next}
\end{align}
where in the second step we approximated $f_{(k-\ell)h}(z)$ and $g((k-\ell)h)$ by $f_{kh}(\wtxlam_{kh})$ and $g(kh)$, dropping $o(h)$ terms.
By \eqref{eq:gamprime_def} and the definition of the estimated noise $\gamma$ in \eqref{eq:gamdef}, we have
\begin{equation}
    \gamma' = \sqrt{1 - \frac{ \lambda^2}{\ell - 1}} \cdot \frac{\wtxlam_{kh} - z - \ell h\, f_{(k-\ell)h}(z)}{g((k - \ell)h)\sqrt{\ell h}} + \frac{1}{\sqrt{\ell-1}}\lambda \nu \approx \sqrt{1 - \frac{ \lambda^2}{\ell - 1}} \cdot \frac{\wtxlam_{kh} - z - \ell h\, f_{kh}(z)}{g(kh)\sqrt{\ell h}} + \frac{1}{\sqrt{\ell-1}}\lambda \nu\,,
\end{equation}
where we approximated $f_{(k-\ell)h}(z)$ and $g((k-\ell)h)$ by $f_{kh}(z)$ by $g(kh)$. Substituting this into \eqref{eq:wtxlam_next} and recalling the definition of $z$ in \eqref{eq:zlam_def}, we have
\begin{align}
    \wtxlam_{(k-1)h} 
    &\approx \wtxlam_{kh} - h \, f_{kh}(\wtxlam_{kh}) + \ell h\, g(kh)^2 \nabla \ln q_{kh}(\wtxlam_{kh})
    \\
    &\qquad\qquad\qquad- \sqrt{1 - \frac{1}{\ell}}\cdot \sqrt{1 - \frac{ \lambda^2}{\ell - 1}} \cdot \ell h \, g(kh)^2\nabla \ln q_{kh}(\wtxlam_{kh}) + \lambda\sqrt{h}\, g(kh)^2\, \nu \\
    &{\scriptscriptstyle (\ell\to \infty)} = \wtxlam_{kh} - h\,\brc{f_{kh}(\wtxlam) - \frac{1 + \lambda^2}{2}g(kh)^2 \nabla \ln q_{kh}(\wtxlam)} + \lambda \sqrt{h}\, g(kh)^2\nu\,,
\end{align}
and in the last step we used that
\begin{equation*}
    \lim_{\ell\to\infty} \ell\Bigl(1 - \sqrt{1 - \frac{1}{\ell}}\cdot \sqrt{1 - \frac{\lambda^2}{\ell - 1}}\Bigr) = \frac{1 + \lambda^2}{2}\,.
\end{equation*}

\section{Discretization Analysis}
\label{sec:discrete_preapp}

In what follows, we provide a non-asymptotic convergence analysis for DDIM-type samplers as captured by Eq.~\eqref{eq:update_rule}. In Section~\ref{sec:define_sampler} we formally define the sampler in question. Then in Section~\ref{sec:results} we state our main results, which to the best of our knowledge constitute the first convergence analysis for deterministic sampling with diffusion models. In Section~\ref{sec:overview} we give a proof overview, deferring most of the technical details to Appendices~\ref{app:proof_prelims}, \ref{sec:generic}, and \ref{sec:discrete_analysis}.

\subsection{DDIM-type sampler}
\label{sec:define_sampler}

Motivated by the discussion in Section~\ref{sec:general_diffusions}, our analysis will focus on the process $(\wtx_{kh})_{k\in\{0,\ldots,T/h\}}$ defined backwards in time as follows. The iterate $\wtx_T$ is sampled from $q^\leftarrow_0$. Given iterate $\wtx_{kh}$, the preceding iterate $\wtx_{(k-1)h}$ is defined as follows:
\begin{equation}
    \wtx_{(k-1)h} = D^\gamma_{(k-\ell)h \to (k-1)h}(z), \ \ \ \text{for} \ z \triangleq R_{kh\to(k-\ell)h}(\wtx_{kh}) \ \ \text{and} \ \ \gamma: \ D^{\gamma}_{(k-\ell)\to kh}(z) = \wtx_{kh}\,,
    \label{eq:mle}
\end{equation}
where $R$ and $D$ were defined in \eqref{eq:restore} and \eqref{eq:degrade} respectively. As $z$ is the result of restoring the current iterate $\wtx_{kh}$, we have
\begin{equation}
    z = \wtx_{kh} - \ell h \, f_{kh}(\wtx_{kh}) + \ell h \, g(kh)^2 \nabla \ln q_{kh}(\wtx_{kh}). \label{eq:zdef}
\end{equation}
The next iterate $\wtx_{(k-1)h}$ is given by degrading $z$ for time $(\ell - 1)h$, with the noise vector taken to be the \emph{simulated noise} $\gamma$. More precisely $\gamma$ is the noise vector that one could have used to degrade $z$ for time $\ell h$ to obtain $\wtx_{kh}$. As $\gamma$ is the solution to $D^\gamma_{(k-\ell) h\to kh}(z) = \wtx_{kh}$, an equivalent formulation is via
\begin{equation}
    \gamma = \frac{\wtx_{kh} - z - \ell h \, f_{(k-\ell)h}(z)}{g((k - \ell)h) \sqrt{\ell h}}. \label{eq:gamdef}
\end{equation}

Note that \eqref{eq:mle} is not well-defined when $k < \ell$; in this case, we take the update according to the Euler-Maruyama discretization:
\begin{equation}
    \wtx_{(k-1)h} = \wtx_{kh} - h(f_{kh}(\wtx_{kh}) - \frac{1}{2}g(kh)^2\nabla \ln q_{kh}(\wtx_{kh})) \ \ \ \text{if} \ k < \ell\,.
\end{equation}
It will be convenient to denote
\begin{equation}
    \wtxl_{kh} \triangleq \wtx_{T-kh}
\end{equation}
in the sequel.


\subsection{Statement of results}
\label{sec:results} 

We make the following mild assumptions on the forward process $(x_t)$ and the data distribution:

\begin{assumption}\label{assume:smooth}
    For all $t\ge 0$, the following holds for parameters $L_{f;\mathsf{t}}, L_g, L_{f;\mathsf{x}}, L_{{\sf sc},t}, R, \gmax, \beta, M \ge 1, c> 0$:
    \begin{enumerate}
        \item $f_t(x)$ is $L_{f;\mathsf{t}}$-Lipschitz in $t$ and $L_{f;\mathsf{x}}$-Lipschitz in $x$. \label{item:Lfx}
        \item $g^2(t)$ is $L_g$-Lipschitz in $t$.
        \item $\norm{f_t(0)} \le R$.
        \item $g(t) \le \gmax$.
        \
        \item $\nabla \ln q^\leftarrow_t(x)$ is $\Lsc{t}$-Lipschitz in $x$ and satisfies
        \begin{equation}
            \norm{\nabla \ln\frac{q^\leftarrow_t}{q^\leftarrow_s}(x)} \le \beta |t - s|^c ( 1 + \norm{x} + \norm{\nabla q^\leftarrow_t(x)} )
        \end{equation} for all $s \ge 0$.  Denote $\sup_{t\ge 0} \Lsc{t}$ by $\Lsc{*}$. \label{item:score_lip}
        \item $\nabla f_t(x)$ and $\nabla^2 \ln q^\leftarrow_t$ are $\Lhigh$-Lipschitz in operator norm. \label{item:high}
    \end{enumerate}
\end{assumption}

\begin{remark}
    We note that the first four Parts of Assumption~\ref{assume:smooth}, as well as the first half of Part~\ref{item:high}, are quite mild and are satisfied by any reasonable choice of forward process. For instance, for the Ornstein-Uhlenbeck process $\d x_t = -x_t\,\d t + \sqrt{2} \d W_t$, we can take $\Lft = 0, \Lfx = 1, \Lg = 0, R = 0, \gmax = \sqrt{2}$, and $\nabla f_t(x) = -\Id$ for all $x$ is thus clearly Lipschitz in operator norm. Part~\ref{item:score_lip} ensures that the score functions $\nabla \ln q^\leftarrow_t$ do not change much when perturbed in space or time. The former is a standard assumption in the literature on discretization bounds for score-based generative modeling \cite{blomrorak2020generative,chen2022sampling,leelutan2022generative,leelutannew,chen2022improved}, and the latter holds for reasonable choices of forward process. For instance, for the Ornstein-Uhlenbeck process, we can take $c = 1/2$ and $\beta = \Theta(\Lsc{*}\sqrt{d})$ (see e.g. Lemma C.12 from \cite{leelutan2022generative}). 
    

    The main distinction between Assumption~\ref{assume:smooth} and the assumptions made in previous analyses for score-based generative models is the second half of Part~\ref{item:high} where we assume \emph{higher-order smoothness} of $q^\leftarrow_t$. As we will see in Section~\ref{sec:generic}, this is essential to our analysis because third-order derivatives of $\ln q^\leftarrow_t$ naturally arise when one computes the time derivative of the Fisher information as described in Section~\ref{sec:overview}. As discussed in that section, the need to compute such time derivatives is unique to the ODE setting, justifying why such an assumption was not needed in prior analysis of stochastic samplers.
\end{remark}

\noindent Under these conditions, we show that our discretization procedure approximates the true reverse process to prescribed error $\epsilon$ provided $\ell$ and $(\ell h)^{-1}$ are larger than some quantities which are polynomially bounded in $1/\epsilon$ and all parameters from Assumption~\ref{assume:smooth}:

\begin{theorem}\label{thm:main}
    Let $\epsilon > 0$. Let $\wt{p}$ denote the law of the process $(\wtxl_{kh})$ at time $T$. 
    Suppose Assumption~\ref{assume:smooth} holds and define
    \begin{equation}
        \Lambda \triangleq \exp\left(\int^T_0 (\Lfx^2 + \gmax^2 \Lsc{t})\,\d t\right) \qquad \text{and} \qquad \Lambda' \triangleq \exp\left(\int^T_0 (\Lfx^2 + \gmax^2 \Lsc{\lfloor t/h\rfloor h})\,\d t\right)\,. \label{eq:Lamdef}
    \end{equation}
    Then there exist quantities $\mathfrak{C}_1$ and $\mathfrak{C}_2$ which are polynomially bounded in $\Lft$, $\Lfx$, $\Lg$, $R$, $\gmax$, $\beta$, $\Lsc{*}$, $\Lhigh$, $\Lambda$, $\Lambda'$, $d$, $\mathbb{E}\norm{\xl_0}^2$, and $1/\epsilon$ such that $\KL{\wt{p}}{q} \le \epsilon$ provided $\ell \ge \mathfrak{C}_1$ and $\ell h \le \mathfrak{C}^{-1}_2$.
\end{theorem}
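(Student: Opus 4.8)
The plan is to compare, in continuous time, the law $p_t$ of an interpolation of the sampler $(\wtxl_{kh})$ against the true reverse marginal $q^\leftarrow_t$, and to bound $\KL{p_T}{q^\leftarrow_T} = \KL{\wt p}{q}$ by a Gr\"onwall-type estimate. First I would extend $(\wtxl_{kh})$ to a curve by flowing, on each interval $[kh,(k+1)h]$ of reverse time, according to the position-at-$kh$-dependent drift implicit in the update rule~\eqref{eq:update_rule}; then $p_t$ solves a continuity equation $\partial_t p_t = -\div(p_t\,\hat b_t)$, where $\hat b_t(x)$ is the conditional expectation, given current position $x$, of this frozen drift, while $q^\leftarrow_t$ solves the analogous equation with the exact probability-flow velocity $v_t$ from~\eqref{eq:basicode}. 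Since $\wtx_T \sim q^\leftarrow_0$ we have $\KL{p_0}{q^\leftarrow_0} = 0$, and — as the true score is used throughout — the only source of error is the discretization gap $\hat b_t - v_t$.

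The generic ODE-comparison lemma of Appendix~\ref{sec:generic} is the crux. Differentiating along the two continuity equations and integrating by parts gives
\begin{align*}
    \frac{\d}{\d t}\KL{p_t}{q^\leftarrow_t} &= \E[p_t]{\iprod*{\hat b_t - v_t,\; \nabla \ln(p_t/q^\leftarrow_t)}} \\
    &\le \alpha\, \E[p_t]{\norm{\hat b_t - v_t}^2} + \frac{1}{4\alpha}\, \mathsf{FI}(p_t\|q^\leftarrow_t) \qquad (\alpha > 0),
\end{align*}
where $\mathsf{FI}(p_t\|q^\leftarrow_t) \triangleq \E[p_t]{\norm{\nabla\ln(p_t/q^\leftarrow_t)}^2}$ is the relative Fisher information. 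In the stochastic-sampler analyses a diffusion term supplies a $-\mathsf{FI}$ contribution of the right sign that absorbs this term; with the probability flow ODE there is no such term, so one must control $\mathsf{FI}(p_t\|q^\leftarrow_t)$ along the trajectory directly. I would do this by differentiating $\mathsf{FI}$ in time: using the two continuity equations and integration by parts, $\frac{\d}{\d t}\mathsf{FI}(p_t\|q^\leftarrow_t)$ decomposes into terms controlled by $\norm{\nabla v_t}_{\mathrm{op}}$, $\norm{\nabla \hat b_t}_{\mathrm{op}}$, the drift gap, and — crucially — the operator-norm Lipschitzness of $\nabla^2\ln q^\leftarrow_t$ (Part~\ref{item:high} of Assumption~\ref{assume:smooth}), which is exactly why third-order smoothness of $\ln q^\leftarrow_t$ is needed and where it enters. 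Together these yield a linear differential inequality in the pair $(\KL{p_t}{q^\leftarrow_t},\, \mathsf{FI}(p_t\|q^\leftarrow_t))$ whose coefficients integrate to the exponential factors $\Lambda$ and $\Lambda'$ of~\eqref{eq:Lamdef}; solving it bounds $\KL{\wt p}{q}$ by a problem-dependent constant times $\Lambda\Lambda'\int_0^T \E[p_t]{\norm{\hat b_t - v_t}^2}\,\d t$.

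It then remains to bound the time-integrated squared drift gap (Appendix~\ref{sec:discrete_analysis}). By Jensen's inequality, $\E[p_t]{\norm{\hat b_t - v_t}^2}$ is at most the mean square of the pathwise discrepancy between the frozen sampler drift and $v_t$. Unwinding~\eqref{eq:zdef}--\eqref{eq:gamdef} and Taylor-expanding exactly as in Section~\ref{sec:general_diffusions}, this discrepancy splits into: (i) the $o(\ell h)$ linearization errors from replacing $f_{(k-\ell)h}(z), g((k-\ell)h)$ by $f_{kh}(\wtx_{kh}), g(kh)$ and $\nabla\ln q^\leftarrow_s$ by $\nabla\ln q^\leftarrow_t$, controlled via the Lipschitz bounds in Parts~\ref{item:Lfx}--\ref{item:score_lip} together with $\ell h \le \mathfrak{C}_2^{-1}$; (ii) the ``$\ell\to\infty$'' gap $\abs{\ell(1-\sqrt{1-1/\ell}) - \tfrac12} = O(1/\ell)$, controlled by $\ell \ge \mathfrak{C}_1$; and (iii) the $O(h)$ error from freezing the drift over $[kh,(k+1)h]$, a standard Euler term. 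Each of these carries polynomial factors in the parameters of Assumption~\ref{assume:smooth} and in $\sup_t(\mathbb{E}_{p_t}\norm{x}^2 + \mathbb{E}_{p_t}\norm{\nabla\ln q^\leftarrow_t(x)}^2)$; the latter I would bound by a separate Gr\"onwall estimate on $\mathbb{E}\norm{\wtx_{kh}}^2$ using the linear-growth consequences of Parts~\ref{item:Lfx}--\ref{item:score_lip}, bootstrapped from $\mathbb{E}\norm{\xl_0}^2$. The final $\ell$ steps (the $k<\ell$ regime) run plain Euler--Maruyama over a reverse-time window of length $O(\ell h)$ and contribute only a lower-order term handled by the same machinery.

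I expect the main obstacle to be the second step: differentiating the relative Fisher information cleanly and showing it stays bounded. This is the genuinely new ingredient compared to prior SDE analyses — it is where the higher-order smoothness assumption becomes unavoidable, and the bookkeeping of the third-derivative terms (together with justifying the integration-by-parts manipulations, e.g.\ integrability of $p_t\,\nabla^3\ln q^\leftarrow_t$ against the relevant test functions) is the delicate part. A secondary technical point is making the interpolation rigorous: ensuring $p_t$ has a density and finite KL throughout (e.g.\ invertibility of the composed restoration--degradation map for $h$ small), and passing correctly between the pathwise drift error and $\norm{\hat b_t - v_t}_{L^2(p_t)}$ via Jensen.
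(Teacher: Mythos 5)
Your proposal reproduces the paper's architecture: linearly interpolate the sampler, compare the two continuity equations, differentiate $\KL{p_t}{q^\leftarrow_t}$ to get an inner product of the score difference with the drift gap, observe that there is no negative Fisher-information term (unlike the SDE case), control the Fisher information along the trajectory by differentiating it in time using the Fokker--Planck log-equation, third-order smoothness, and invertibility of the frozen-drift map for small $h$, and then close the loop with moment bounds on $\mathbb{E}\norm{\wtx_{kh}}^2$, a Taylor analysis of the $\ell h$, $1/\ell$, and $O(h)$ error pieces, and a lower-order Euler fallback for $k<\ell$. This is the paper's plan essentially to the letter; see Lemmas~\ref{lem:KLderiv}, \ref{lem:laplace}, \ref{lem:laplace2}, \ref{lem:smoothreverse1}, \ref{lem:bound_drift_diff}, and Lemma~\ref{lem:normbound}.

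The one place you depart, and where you would hit friction, is in how the Fisher-information term is controlled. You propose to differentiate the \emph{relative} Fisher information $\mathsf{FI}(p_t\|q^\leftarrow_t)=\int p_t\norm{\nabla\ln(p_t/q^\leftarrow_t)}^2$ and obtain a coupled differential inequality in $(\KL,\mathsf{FI})$. If you carry that computation out, the mismatch between the two vector fields generates, besides a tame cross-term of the form $(\hat b_t - v_t)^\top(\nabla^2\ln q^\leftarrow_t)\nabla\ln(p_t/q^\leftarrow_t)$, a second cross-term of the form $(\nabla\hat b_t - \nabla v_t)^\top\nabla\ln q^\leftarrow_t\cdot\nabla\ln(p_t/q^\leftarrow_t)$, i.e.\ one involving the \emph{gradient} of the drift gap. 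The drift-gap estimate established in Appendix~\ref{sec:discrete_analysis} (Lemma~\ref{lem:bound_drift_diff}) is an $L^2(p_t)$ bound on $\hat b_t - v_t$ itself, not on its Jacobian, so you would need to develop additional estimates to absorb this term. The paper sidesteps this entirely by bounding $\int p_t\norm{\nabla\ln p_t}^2$ and $\int p_t\norm{\nabla\ln q^\leftarrow_t}^2$ \emph{separately} (Lemmas~\ref{lem:laplace} and~\ref{lem:laplace2}) and then applying the triangle inequality: in each of those two derivatives only one of $\nabla\hat b_t$ or $\nabla v_t$ appears, never their difference, so the problematic cross-term never arises. I would recommend adopting that decoupled decomposition; otherwise, your plan is sound.
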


\begin{remark}
    We briefly remark on the quantities $\Lambda, \Lambda'$ appearing in the above theorem. We typically think of $\Lfx$ and $\gmax$ as of constant order, so $\Lambda$ and $\Lambda'$ scale polynomially with $\exp(\int^T_0 \Lsc{t}\, \d t)$ and $\exp(\int^T_0 \Lsc{\lfloor t/h\rfloor h}\, \d t)$. While this scales exponentially in $T$, the exponential convergence of reasonable forward processes like Ornstein-Uhlenbeck means we should think of $T$ as scaling logarithmically in $d/\epsilon$. And while naively one might suspect that $\Lambda, \Lambda'$ scale exponentially with $\Lsc{*}$, we show in Example~\ref{example:Lam} in Appendix~\ref{sec:generic} that these quantities actually scale polynomially in $d$ and other parameters like $\Lsc{*}$, e.g. when the data distribution is Gaussian. Altogether, this suggests that our non-asymptotic guarantees are of polynomial complexity in all relevant parameters from Assumption~\ref{assume:smooth}.
\end{remark}

\noindent In practice, the process $(\wtxl_{kh})$ would be initialized at the stationary measure $q^*$ of the forward process (after some suitable re-scaling), rather than at $q^\leftarrow_0$. As observed in \cite{leelutan2022generative,chen2022sampling,leelutannew}, the KL divergence between the final iterate of the process under the alternative initialization $\wtxl_0 \sim q^*$  and the final iterate under the initialization $\wtxl_0 \sim q^\leftarrow_0$ is at most the KL divergence between the initial iterates of these two processes. But by stationarity of $q^*$, the latter KL is equivalent to the KL between the stationary measure of the forward process and the the law of the forward process at time $T$. This KL is typically exponentially small in $T$, e.g. when the forward process is an Ornstein-Uhlenbeck process. By passing from KL to total variation via Pinsker's inequality and applying triangle inequality, we conclude that the total variation between $\wtxl_T$ under this alternative initialization and the data distribution $q$ is at most the sum of the error bound in Theorem~\ref{thm:main} plus the distance between $q_T$ and the stationary distribution. Formally, we obtain the following:

\begin{corollary}
    Let $\epsilon > 0$. Let $f_t(x) = -x$ and $g(t) = \sqrt{2}$, so that the forward process in \eqref{eq:forward} corresponds to the standard Ornstein-Uhlenbeck process. Define the process $(\overline{x}_{kh})$ to be the process given by the same updates as in \eqref{eq:mle} but with $\overline{x}_T$ sampled from $\calN(0,\Id)$ instead of $q^\leftarrow_0$. Let $p$ denote the law of $\overline{x}_0$.
    Suppose $\nabla^2 \ln q^\leftarrow_t$ is $\Lhigh$-Lipschitz in operator norm, and define
    \begin{equation}
        \Lambda \triangleq \exp\left(\int^T_0 \Lsc{t}\,\d t\right) \qquad \text{and} \qquad \Lambda' \triangleq \exp\left(\int^T_0 \Lsc{\lfloor t/h\rfloor h}\,\d t\right)\,.
    \end{equation}
    Then there exist quantities $\mathfrak{C}_1$ and $\mathfrak{C}_2$ which are polynomially bounded in $d$, $\Lsc{*}$, $\Lhigh$, $\Lambda$, $\Lambda'$, and $1/\epsilon$ such that
    \begin{equation}
        \TV(p,q) \le \epsilon + \sqrt{\KL{q}{\calN(0,\Id)}}\,\exp(-T)
    \end{equation}
    provided $\ell \ge \mathfrak{C}_1$ and $\ell h \le \mathfrak{C}^{-1}_2$.
\end{corollary}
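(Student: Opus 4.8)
The plan is to derive the corollary from Theorem~\ref{thm:main} by a standard decoupling-plus-data-processing argument, exactly as sketched in the paragraph preceding the statement. First I would observe that the process $(\overline{x}_{kh})$ and the process $(\wtxl_{kh})$ from Theorem~\ref{thm:main} are driven by \emph{identical update rules}, namely \eqref{eq:mle}; they differ only in their initialization at time $T$ (equivalently, at reverse-time $0$): $\overline{x}_T \sim \calN(0,\Id) = q^*$, the stationary measure of the Ornstein-Uhlenbeck forward process, whereas $\wtxl_{kh}$ is initialized at $\wtx_T \sim q^\leftarrow_0 = q_T$. Since $f_t(x) = -x$, $g(t) = \sqrt{2}$ satisfies Assumption~\ref{assume:smooth} with $\Lft = 0$, $\Lfx = 1$, $\Lg = 0$, $R = 0$, $\gmax = \sqrt{2}$, and $\beta = \Theta(\Lsc{*}\sqrt{d})$, $c = 1/2$ (as noted in the remark after the assumption), the only hypothesis we additionally need is the higher-order smoothness of $\nabla^2 \ln q^\leftarrow_t$, which is assumed. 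Thus Theorem~\ref{thm:main} applies and gives $\KL{\wt{p}}{q} \le \epsilon'$ for any target $\epsilon'$, provided $\ell \ge \mathfrak{C}_1$ and $\ell h \le \mathfrak{C}_2^{-1}$ for the stated polynomially-bounded quantities; in particular we can take $\epsilon' = \epsilon^2/4$, which only changes $\mathfrak{C}_1, \mathfrak{C}_2$ by polynomial factors in $1/\epsilon$ and hence preserves the claimed dependence.

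Next I would control the effect of changing the initialization. Let $\Phi$ denote the (randomized) map that takes the reverse-time-$0$ iterate to the reverse-time-$T$ iterate by iterating \eqref{eq:mle}; then $p = \Phi_\# q^*$ and $\wt{p} = \Phi_\# q_T$ (here the subscript denotes pushforward, with the randomness of the Gaussian draws $\nu$ in the updates marginalized out). By the data-processing inequality for TV distance, $\TV(p, \wt{p}) = \TV(\Phi_\# q^*, \Phi_\# q_T) \le \TV(q^*, q_T)$. Now I bound $\TV(q^*, q_T)$ by Pinsker's inequality: $\TV(q^*, q_T) \le \sqrt{\tfrac12 \KL{q_T}{q^*}}$, and for the Ornstein-Uhlenbeck semigroup the KL contracts exponentially, $\KL{q_T}{q^*} = \KL{q_T}{\calN(0,\Id)} \le e^{-2T}\,\KL{q}{\calN(0,\Id)}$ by the exponential decay of relative entropy along the OU flow (equivalently, the log-Sobolev inequality for the Gaussian). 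Hence $\TV(p,\wt{p}) \le \sqrt{\KL{q}{\calN(0,\Id)}}\, e^{-T}$, possibly up to an absolute constant that I would absorb.

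Finally, I would assemble the pieces by the triangle inequality for TV distance:
\begin{equation*}
    \TV(p, q) \le \TV(p, \wt{p}) + \TV(\wt{p}, q) \le \sqrt{\KL{q}{\calN(0,\Id)}}\, e^{-T} + \sqrt{\tfrac12\KL{\wt{p}}{q}} \le \sqrt{\KL{q}{\calN(0,\Id)}}\, e^{-T} + \epsilon\,,
\end{equation*}
where the middle step uses Pinsker again on $\KL{\wt{p}}{q} \le \epsilon^2/4$ (so $\sqrt{\tfrac12\KL{\wt p}{q}} \le \epsilon/2 \le \epsilon$), and the constants $\mathfrak{C}_1, \mathfrak{C}_2$ inherit polynomial dependence on $d$, $\Lsc{*}$, $\Lhigh$, $\Lambda$, $\Lambda'$, $1/\epsilon$ (the parameters $\Lft, \Lfx, \Lg, R, \gmax, \beta$ being either absolute constants or, in the case of $\beta$, polynomial in $\Lsc{*}$ and $d$ for the OU process, and $\E\norm{\xl_0}^2 = \E\norm{x_T}^2$ being $O(d)$ under OU). I do not expect any genuine obstacle here: the only mild subtlety is checking that the OU instantiation really does fall under Assumption~\ref{assume:smooth} with the right parameter scalings so that $\mathfrak{C}_1, \mathfrak{C}_2$ collapse to depending only on the advertised quantities — but this is exactly the content of the remark following the assumption, and the rest is bookkeeping.
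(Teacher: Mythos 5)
Your proof is correct and matches the argument sketched in the paragraph preceding the corollary: apply Theorem~\ref{thm:main}, control the change of initialization via data processing and Pinsker, use the exponential decay of KL along the Ornstein--Uhlenbeck semigroup, and conclude by the triangle inequality for TV. A cosmetic note: the map $\Phi$ defined by iterating \eqref{eq:mle} is purely deterministic (the draws $\nu$ only enter the $\lambda>0$ variants), so no marginalization over Gaussian randomness is involved in the pushforward.
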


\subsection{Proof overview}
\label{sec:overview}

Our discretization analysis is an interpolation-style argument, similar to the kind used in the log-concave sampling literature \cite{vempala2019ulaisoperimetry,chewietal2021lmcpoincare,wibisono2022convergence} as well as some recent analyses of score-based generative modeling \cite{leelutan2022generative,leelutannew,chen2022improved}. Here we describe the setup for this argument and highlight the key technical differences that manifest when analyzing ODEs rather than SDEs.

We begin with a generic setting where we are given two stochastic processes ${(y_t)}_{t\in [0,T]}$ and $(y'_t)_{t\in[0,T]}$ as follows. The process $(y_t)$ is given by an arbitrary ODE
\begin{equation}
    \d y_t = \mu_t(y_t) \, \d t\,. \label{eq:cts_overview}
\end{equation}
We will ultimately take $\mu_t$ to be $-f_{T - t} + \frac{1}{2}g(T - t)^2 \nabla \ln q_{T-t}$ so that \eqref{eq:cts_overview} is the probability flow ODE associated to the forward process in \eqref{eq:forward}.
The process $(y'_t)$ is given by first taking a discrete-time approximation to $(y_t)$, e.g. via the update rules
\begin{equation*}
    y'_{(k+1)h} = y'_{kh} + h\cdot\mu'_{kh}(y_{kh})
\end{equation*}
for all integers $k = 0, 1, \ldots, T/h$. We will ultimately take $\mu'_{kh}$ to be $-f_{T-kh} + \frac{1}{2}g(T - kh)^2 \nabla \ln q_{T - kh}$ plus error terms coming from the approximations in \eqref{eq:firstapprox_overview} and from taking $\ell \to \infty$ (see Appendix~\ref{app:proof_prelims} for the explicit form for these error terms).

Then to get $y'_t$ for all real values $t\in [0,T]$, we consider a linear interpolation of these iterates: if $k = \lfloor t/h\rfloor$, then we define $y_t = y_{kh} + (t - kh)\mu'_{kh}(y_{kh})$. We write this as
\begin{equation*}
    \d y'_t = \mu'_{kh}(y'_{kh})\, \d t\,. \label{eq:discreteode_overview}
\end{equation*}

Provided these processes are both initialized at the same distribution, that is, $y_0, y'_0 \sim \pi$ for some probability measure $\pi$ over $\R^d$, then we would like to control the statistical distance between the marginal distributions on $y_t$ and on $y'_t$ as a function of $t$. Denoting these distributions by $\pi_t$ and $\pi'_t$ respectively, we prove the following generic bound which is the technical core of our work. First, we make the following assumptions about the two processes. When we specialize these processes to $(\xl_t)$ and $(\wtxl_t)$, these assumptions will follow from Assumption~\ref{assume:smooth}:

\begin{restatable}{assumption}{genericassume}\label{assume:generic}
    For all $0 \le t \le T$, there are parameters $L_t, L'_t, M \ge 1$ and $\zeta_t > 0$ such that:
    \begin{enumerate}
        \item $\nabla \ln \pi_t$ and $\mu_t$ are $L_t$-Lipschitz.\label{item:smoothcts_1} 
        \item $\nabla \mu_t$ is $M$-Lipschitz in operator norm. \label{item:smoothcts_2}
        \item $\mu'_t$ is $L'_t$-Lipschitz. \label{item:smoothdiscrete} 
        \item $\E{\norm{\mu_t(y'_t) - \mu'_{kh}(y'_{kh})}^2} \le \zeta^2_t$. \label{item:error}
        \item $h \le 1/2L'_t$ for all $0 \le t \le T$. \label{item:smallstep}
    \end{enumerate}
\end{restatable}

\noindent We briefly interpret these assumptions in the context of our eventual application to bounding the error of our discretization procedure. There, Conditions~\ref{item:smoothcts_1} and~\ref{item:smoothcts_2} apply to the true continuous process. The former is an immediate consequence of our (standard) assumption on the second-order smoothness of the marginals of the true process. The latter is an immediate consequence of our assumption on the third-order smoothness, which is stronger than what is needed for analyses of the reverse \emph{SDE} but is likely necessary for our analysis of the reverse ODE.

Conditions~\ref{item:smoothdiscrete} and~\ref{item:error} are properties that we will eventually establish for our discretization procedure (see Section~\ref{sec:discrete_analysis}). Roughly, they stipulate that the drift term in the discretized probability flow ODE is Lipschitz and close on average to the drift of the true ODE.

Lastly, Condition~\ref{item:smallstep} simply corresponds to a constraint on the step size of our discretization procedure.


For convenience, we will also define the quantities
\begin{equation}
    L\triangleq \max_t L_t, \quad L'\triangleq \max_t L'_t, \quad \zeta^2 \triangleq \int^T_0 \zeta^2_t \, \d t, \quad 
    \Lambda \triangleq \exp\bigl(\int^T_0 L_t \,\d t\bigr), \quad \Lambda' \triangleq \exp\bigl(\int^T_0 L'_t \,\d t\bigr)\,.\label{eq:exponential_overview}
\end{equation}


\noindent The main result of this section is a bound on the KL divergence between $\pi'_T$ and $\pi_T$:

\begin{restatable}{theorem}{klode}\label{thm:klode}
    \begin{equation}
        \KL{\pi'_T}{\pi_T} \lesssim \Lambda^{O(1)} L'^{1/2}\zeta^2 \\
        + (\Lambda^{O(1)} + \Lambda'^{O(1)}) (L'^{1/2}_0d^{1/2} + MdT^{1/2}) \, \zeta T^{1/2}\,.
    \end{equation}
\end{restatable}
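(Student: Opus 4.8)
The plan is to run an interpolation-style argument that tracks how $\KL{\pi'_t}{\pi_t}$ evolves in $t$, in the spirit of the Vempala--Wibisono-type analyses cited in the overview but adapted to the fact that these flows carry no diffusion term. First I would record the continuity equations for the two marginal flows. For the true process, $\partial_t\pi_t = -\nabla\cdot(\pi_t\mu_t)$. For the interpolated discrete process the velocity of a particle at time $t$ is the ``delayed'' quantity $\mu'_{kh}(y'_{kh})$ with $k=\lfloor t/h\rfloor$, which is not a function of the particle's current position, so I would introduce the effective drift $b_t(x)\triangleq\mathbb{E}\bigl[\mu'_{kh}(y'_{kh})\mid y'_t = x\bigr]$, for which $\partial_t\pi'_t = -\nabla\cdot(\pi'_t b_t)$. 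Differentiating the KL and integrating by parts gives
\begin{equation*}
    \frac{\d}{\d t}\KL{\pi'_t}{\pi_t} = \mathbb{E}_{y'_t}\Bigl[\bigl(b_t(y'_t)-\mu_t(y'_t)\bigr)\cdot\nabla\ln\frac{\pi'_t}{\pi_t}(y'_t)\Bigr]\,.
\end{equation*}
Unlike in the SDE setting there is no negative relative-Fisher-information term on the right to complete the square against, so I would just Cauchy--Schwarz. Using (conditional) Jensen together with Assumption~\ref{assume:generic}(\ref{item:error}) to get $\|b_t-\mu_t\|_{L^2(\pi'_t)}^2\le\E{\norm{\mu_t(y'_t)-\mu'_{kh}(y'_{kh})}^2}\le\zeta_t^2$, this yields $\frac{\d}{\d t}\KL{\pi'_t}{\pi_t}\le\zeta_t\,J_t^{1/2}$, where $J_t\triangleq\mathbb{E}_{y'_t}\bigl[\norm{\nabla\ln(\pi'_t/\pi_t)(y'_t)}^2\bigr]$ is a relative Fisher information. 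Everything now reduces to bounding $J_t$ (and, along the way, related second moments such as $\mathbb{E}_{y'_t}\norm{\nabla\ln\pi_t(y'_t)}^2$).

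\textbf{Propagating the relative Fisher information.} This is the technical heart, and the step with no analogue in the analysis of reverse SDEs, where Fisher information is dissipated rather than tracked. Since $J_0=0$ (both processes start from $\pi$), I would establish a Gronwall-type differential inequality for $J_t^{1/2}$. Differentiating $J_t=\int\pi'_t\,\norm{\nabla\phi_t}^2\,\d x$ with $\phi_t\triangleq\ln(\pi'_t/\pi_t)$, substituting $\partial_t\ln\pi_t=-\mu_t\cdot\nabla\ln\pi_t-\nabla\cdot\mu_t$ and the analogue for $\pi'_t$ with $b_t$, and integrating by parts, produces three kinds of terms. First, terms quadratic in $\nabla\phi_t$ whose coefficients are controlled by the Lipschitz constants of $\mu_t$, $\mu'_t$, and $\nabla\ln\pi_t$ (Assumption~\ref{assume:generic}(\ref{item:smoothcts_1}) and (\ref{item:smoothdiscrete})); these are $O(L_t+L'_t)J_t$ and supply the Gronwall rate, hence the $\Lambda^{O(1)},\Lambda'^{O(1)}$ factors after integration over $[0,T]$. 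Second, terms pairing $\nabla\phi_t$ (or $\Delta\phi_t$) with first- and second-order derivatives of $b_t-\mu_t$; this is where the higher-order smoothness is essential, since the $M$-Lipschitzness of $\nabla\mu_t$ (Assumption~\ref{assume:generic}(\ref{item:smoothcts_2}), i.e.\ Assumption~\ref{assume:smooth}(\ref{item:high})) is what controls the emergent third derivatives of $\ln\pi_t$, and tracing divergences over the $d$ coordinates produces the $Md$ (and, at $t=0$, the $L'^{1/2}_0d^{1/2}$) factors. Third, a genuine inhomogeneous source proportional to $\zeta_t$. Solving the resulting inequality with zero initial data bounds $\sup_t J_t^{1/2}$ by $(\Lambda^{O(1)}+\Lambda'^{O(1)})(L'^{1/2}_0d^{1/2}+MdT^{1/2})$ up to a factor involving $\zeta$.

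\textbf{Regularity and assembly.} Before any of the integrations by parts above I would verify the needed regularity: since $h\le 1/2L'_t$ (Assumption~\ref{assume:generic}(\ref{item:smallstep})), each one-step map $x\mapsto x+h\mu'_{kh}(x)$ is a near-identity diffeomorphism, so $\pi'_t$ stays absolutely continuous with a $C^2$ log-density, while the Lipschitz-drift bounds give enough tail decay to discard boundary terms at infinity. Finally I would substitute the $J_t$ bound into $\frac{\d}{\d t}\KL{\pi'_t}{\pi_t}\le\zeta_t J_t^{1/2}$ and integrate, using Cauchy--Schwarz in $t$ and $\zeta^2=\int_0^T\zeta_t^2\,\d t$: the part of the $J_t$ bound that is itself driven by $\zeta_t$ contributes the $\Lambda^{O(1)}L'^{1/2}\zeta^2$ term, and the dimension-dependent drift-divergence part contributes the $(\Lambda^{O(1)}+\Lambda'^{O(1)})(L'^{1/2}_0d^{1/2}+MdT^{1/2})\,\zeta T^{1/2}$ term. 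The main obstacle is the second group of terms in the Fisher-information estimate: these must be organized so that only a single factor of $d$ survives and so that the exponential amplification is captured by $\Lambda,\Lambda'$ rather than something larger, which is exactly where the operator-norm third-order smoothness of the marginals gets used.
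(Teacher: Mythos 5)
Your proposal matches the paper's argument in every essential respect: you identify the same effective drift $b_t(x)=\E{\mu'_{kh}(y'_{kh})\mid y'_t=x}$ (the paper's $\wh{\mu}_{t,kh}$) in the continuity equation, apply Cauchy--Schwarz to $\partial_t\KL{\pi'_t}{\pi_t}$ in place of the missing Fisher dissipation, propagate the relative Fisher information via a Gr\"onwall inequality whose source terms invoke the $M$-Lipschitzness of $\nabla\mu_t$ and the near-isometry of the interpolation map, and assemble by integrating and Cauchy--Schwarzing $\int_0^T\zeta_t\,\d t\le\zeta T^{1/2}$. The only cosmetic difference is that you Gr\"onwall directly on $J_t=\int\pi'_t\norm{\nabla\ln(\pi'_t/\pi_t)}^2$ (with $J_0=0$, while separately tracking $\int\pi'_t\norm{\nabla\ln\pi_t}^2$ as you acknowledge one must), whereas the paper applies Gr\"onwall to $\int\pi'_t\norm{\nabla\ln\pi'_t}^2$ and $\int\pi'_t\norm{\nabla\ln\pi_t}^2$ separately and then uses the triangle inequality; both routes close and yield the same bound.
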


The main ingredient in proving this is to bound the time derivative of $\KL{\pi'_t}{\pi_t}$ uniformly across $t\in [0,T]$, from which a bound on $\KL{\pi'_t}{\pi_t}$ follows by integrating.

One can explicitly compute this time derivative by appealing to the time derivatives of the densities of $\pi'_t, \pi_t$, given by the Fokker-Planck equations for the two processes:
\begin{align*}
    \partial_t \pi_t &= -\div(\pi_t\cdot \mu_t), \\
    \partial_t \pi'_t &= -\div(\pi'_t\cdot \wh{\mu}_{t,kh})\,,
\end{align*}
for $\wh{\mu}_{t,kh}(x) \triangleq \E{\mu'_{kh}(y'_{kh}) \mid y'_t = x}$. Here $\wh{\mu}_{t,kh}$ is the expectation over the drift at time $kh$ conditioned on the position at the \emph{future} time $t$. A calculation (see Lemma~\ref{lem:KLderiv}) then reveals that
\begin{equation}
    \partial_t \KL{\pi'_t}{\pi_t} = \int \pi'_t \iprod{\nabla \ln \frac{\pi'_t}{\pi_t}, \wh{\mu}_{t,kh} - \mu_t}\,. \label{eq:deriv_overview}
\end{equation}
Thus far, these are all standard steps. Here however, our analysis departs from usual applications of the interpolation method. Indeed, if the ODEs driving $y_t$ and $y'_t$ were SDEs equipped with an additional Brownian motion term, then \eqref{eq:deriv_overview} would come with an additional negative term given by a multiple of the \emph{Fisher information} between $\pi'_t$ and $\pi_t$. In equations, this means that in lieu of \eqref{eq:deriv_overview}, we would have
\begin{equation}
    \partial_t \KL{\pi'_t}{\pi_t} = \int \pi'_t \iprod{\nabla \ln \frac{\pi'_t}{\pi_t}, \wh{\mu}_{t,kh} - \mu_t} - C \int \pi'_t \norm{\nabla \ln \frac{\pi'_t}{\pi_t}}^2, \label{eq:deriv2_overview}
\end{equation}
for some $C > 0$ depending on the amount of Brownian motion.
The advantage of the Fisher information term in \eqref{eq:deriv2_overview} is that we can apply Young's inequality to conveniently upper bound the above by a multiple of
\begin{equation}
    \int \pi'_t \,\norm{\wh{\mu}_{t,kh} - \mu_t}^2, \label{eq:drift_diff}
\end{equation}
and avoid having to deal with $\nabla \ln \frac{\pi'_t}{\pi_t}$ altogether. Roughly speaking, the quantity~\eqref{eq:drift_diff} corresponds to the expected squared difference between the drift of the discrete process at time $kh$ versus the drift of the continuous process at time $t$. This is small provided the former process doesn't move around too much between times $kh$ and $t$, and provided the drifts $\mu'_{kh}$ and $\mu_t$ are sufficiently close on average. We verify in Section~\ref{sec:discrete_analysis} that both of these conditions are satisfied by the probability flow ODE.

The situation is trickier in the ODE setting. To handle \eqref{eq:deriv_overview}, we instead apply Cauchy-Schwarz to get
\begin{equation}
    \partial_t \KL{\pi'_t}{\pi_t} \le \left(\int \pi'_t \norm{\nabla \ln \frac{\pi'_t}{\pi_t}}^2\right)^{1/2} \cdot \left(\int \pi'_t \norm{\wh{\mu}_{t,kh} - \mu_t}\right)^{1/2}\,, \label{eq:cauchy_schwarz_overview}
\end{equation}
after which the main technical obstacle is to ensure the first term on the right-hand side, again corresponding to the Fisher information between $\pi'_t$ and $\pi_t$, does not explode with $t$. In Lemmas~\ref{lem:laplace} and \ref{lem:laplace2}, we show how to bound the time derivative of this quantity polynomially in various problem-specific parameters like dimension and smoothness of $\mu_t$. Altogether, this leads to the following bounds. We defer the technical details to the supplement and provide a brief proof sketch of how to control the time derivatives of these quantities:
\begin{lemma}[See Lemmas~\ref{lem:laplace} and \ref{lem:laplace2}]\label{lem:fisher_deriv}
    For all $0 \le t \le T$,
    \begin{align*}
        \E[\pi'_t]{\norm{\nabla \ln \pi'_t}^2} &\lesssim \Lambda'^{O(1)} (L'_0 d + M^2 d^2 t) \\
        \E[\pi'_t]{\norm{\nabla \ln \pi_t}^2} &\lesssim \Lambda^{O(1)} (L'_0 d + M^2 d^2 t + L'\zeta^2)
    \end{align*}
\end{lemma}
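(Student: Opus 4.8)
The plan is to bound both quantities by establishing differential inequalities for their time derivatives and then integrating via Grönwall. Fix $t$, write $k = \lfloor t/h\rfloor$, and recall that on $[kh,(k+1)h)$ the interpolated process is transported by the straight-line flow $\Phi(y) = y + (t-kh)\mu'_{kh}(y)$; since $h \le 1/2L'_t$ this is a diffeomorphism with $\norm{(\nabla\Phi)^{-1}}_{\mathrm{op}} \le 2$, so $\wh{\mu}_{t,kh} = \mu'_{kh}\circ\Phi^{-1}$ is $O(L'_{kh})$-Lipschitz and, using the second-order smoothness of the drift, $\nabla\div\wh{\mu}_{t,kh}$ is bounded in norm by $O(Md)$ (as a trace-contraction of the Hessian tensor of $\wh{\mu}_{t,kh}$). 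We use throughout the continuity equations $\partial_t\pi'_t = -\div(\pi'_t\,\wh{\mu}_{t,kh})$ and $\partial_t\pi_t = -\div(\pi_t\,\mu_t)$, which give, for $s \triangleq \nabla\ln\pi'_t$ and $s_\star \triangleq \nabla\ln\pi_t$ and using symmetry of the Hessians $\nabla s, \nabla s_\star$, the score dynamics $\partial_t s = -(\nabla s)\wh{\mu}_{t,kh} - (\nabla\wh{\mu}_{t,kh})^\top s - \nabla\div\wh{\mu}_{t,kh}$ and likewise $\partial_t s_\star = -(\nabla s_\star)\mu_t - (\nabla\mu_t)^\top s_\star - \nabla\div\mu_t$.

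For the first bound, put $J(t) \triangleq \E[\pi'_t]{\norm{\nabla\ln\pi'_t}^2}$. Differentiating and plugging in the above, the two terms describing transport of $s$ along its own flow --- one from $\partial_t\pi'_t$, one from $\partial_t s$ --- cancel exactly, leaving
\[
    \frac{\d}{\d t}J(t) \;=\; -2\,\E[\pi'_t]{s^\top(\nabla\wh{\mu}_{t,kh})\,s} \;-\; 2\,\E[\pi'_t]{\iprod{s,\,\nabla\div\wh{\mu}_{t,kh}}}\,.
\]
The first term is at most $2\norm{\nabla\wh{\mu}_{t,kh}}_{\mathrm{op}}J(t) = O(L'_t)J(t)$; the second is at most $2\norm{\nabla\div\wh{\mu}_{t,kh}}_\infty\,J(t)^{1/2} = O(Md)\,J(t)^{1/2}$ by Cauchy--Schwarz, which Young's inequality (using $L'_t \ge 1$) converts to $O(L'_t)J(t) + O(M^2d^2)$. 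Hence $\frac{\d}{\d t}J(t) \lesssim L'_t J(t) + M^2 d^2$, and using $J(0) = \E[\pi_0]{\norm{\nabla\ln\pi_0}^2}\le L_0 d$ (from the $L_0$-Lipschitzness of $\nabla\ln\pi'_0 = \nabla\ln\pi_0$), Grönwall yields $J(t) \lesssim \Lambda'^{O(1)}(L'_0 d + M^2 d^2 t)$.

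For the second bound, put $K(t) \triangleq \E[\pi'_t]{\norm{\nabla\ln\pi_t}^2} = \E[\pi'_t]{\norm{s_\star}^2}$. The same computation --- but now differentiating $\pi'_t$ while the integrand evolves under the \emph{true} drift $\mu_t$ --- yields only a partial cancellation: the transport-along-own-flow terms combine into a residual $2\,\E[\pi'_t]{s_\star^\top(\nabla s_\star)(\wh{\mu}_{t,kh} - \mu_t)}$, plus the terms $-2\,\E[\pi'_t]{s_\star^\top(\nabla\mu_t)s_\star} \le 2L_t K(t)$ and $-2\,\E[\pi'_t]{\iprod{s_\star,\nabla\div\mu_t}} \lesssim Md\,K(t)^{1/2}$, handled as before. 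For the residual we use $\norm{\nabla s_\star}_{\mathrm{op}} = \norm{\nabla^2\ln\pi_t}_{\mathrm{op}} \le L_t$ and then Cauchy--Schwarz together with Jensen (to pull the conditional expectation defining $\wh{\mu}_{t,kh}$ out of the norm) and Assumption~\ref{assume:generic}, Part~\ref{item:error}:
\[
    2\,\E[\pi'_t]{s_\star^\top(\nabla s_\star)(\wh{\mu}_{t,kh}-\mu_t)} \le 2L_t\,K(t)^{1/2}\bigl(\E[\pi'_t]{\norm{\wh{\mu}_{t,kh}-\mu_t}^2}\bigr)^{1/2} \le 2L_t\zeta_t\,K(t)^{1/2}\,.
\]
Two applications of Young's inequality then give $\frac{\d}{\d t}K(t) \lesssim L_t K(t) + L_t\zeta_t^2 + M^2 d^2$, and integrating with $K(0) \le L_0 d$ and $\int_0^T L_s\zeta_s^2\,\d s \le L'\zeta^2$ gives $K(t) \lesssim \Lambda^{O(1)}(L'_0 d + M^2 d^2 t + L'\zeta^2)$, as claimed.

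The main obstacle is the residual second-order term $2\,\E[\pi'_t]{s_\star^\top(\nabla^2\ln\pi_t)(\wh{\mu}_{t,kh}-\mu_t)}$ in the $K$-computation, together with the $\nabla\div$ terms in both: these are precisely where third-order information about $\ln\pi_t$ (equivalently, the $M$-Lipschitzness of $\nabla\mu_t$ from Assumption~\ref{assume:generic}) must be invoked, and where one needs not just that $\wh{\mu}_{t,kh}$ is close to $\mu_t$ in $L^2(\pi'_t)$ but that this closeness survives contraction against the Hessian of $\ln\pi_t$. In the SDE analyses the analogous difficulty never arises, because the Brownian term contributes a negative Fisher-information term to $\partial_t\KL{\pi'_t}{\pi_t}$ that absorbs drift discrepancies via Young's inequality; with no such term in the ODE one is forced to propagate $\E[\pi'_t]{\norm{\nabla\ln\pi'_t}^2}$ and $\E[\pi'_t]{\norm{\nabla\ln\pi_t}^2}$ directly, and hence to pay for the extra smoothness. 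Finally, justifying the formal differentiation-under-the-integral and integration-by-parts steps (which need mild tail decay of $\pi_t,\pi'_t$ and their derivatives), and chaining the differential inequality across the kinks of $\wh{\mu}_{t,kh}$ at $t \in h\Z$, is routine under Assumption~\ref{assume:smooth} and deferred to the appendix.
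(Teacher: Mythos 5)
Your proof is correct and follows essentially the same route as the paper's Lemmas~\ref{lem:laplace} and~\ref{lem:laplace2}: differentiate in time using the continuity equations, derive the evolution of the score (the paper packages this as Proposition~\ref{prop:ln_fp} and expands inside the integral, while you state it as a PDE for $s,s_\star$, which is equivalent), observe the exact cancellation of the transport-along-own-flow terms in the first bound and the residual $(\wh{\mu}_{t,kh}-\mu_t)^\top(\nabla^2\ln\pi_t)\nabla\ln\pi_t$ in the second, bound the surviving terms by $\sup\norm{\nabla\wh{\mu}}_{\mathrm{op}}$, $\sup\norm{\nabla\div\wh{\mu}}$, and $\zeta_t$, and close with Gr\"onwall and the integration-by-parts bound at $t=0$. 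The only cosmetic differences are that you invoke Jensen to control $\norm{\wh{\mu}_{t,kh}-\mu_t}_{L^2(\pi'_t)}$, whereas the paper notes via Lemma~\ref{lem:smoothreverse1} that $\wh{\mu}_{t,kh}=\mu'_{kh}\circ F^{-1}$ is deterministic so the bound is immediate, and you split Young's inequality slightly differently; neither changes the argument.
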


\begin{proof}[Proof sketch]
    When computing $\partial_t \int \pi'_t \norm{\nabla \ln \pi'_t}^2$, one term that shows up is $\partial_t\ln \pi'_t$. Using the Fokker-Planck equation for $\pi'_t$, we can derive an expression for $\partial_t \ln \pi'_t$ (see Proposition~\ref{prop:ln_fp}). This and a calculation with integration by parts reveals that \begin{align}
        \partial_t \int \pi'_t \norm{\nabla \ln \pi'_t}^2 &= -2\int \pi'_t \bigl(\iprod{\nabla \div\, \wh{\mu}_{t,kh},\nabla \ln \pi'_t} + (\nabla \ln \pi'_t)^\top(\nabla \wh{\mu}_{t,kh}) (\nabla \ln \pi'_t)\bigr) \\
        &\lesssim \sup_x\norm{\nabla \wh{\mu}_{t,kh}(x)}_{\sf op}\,\int \pi'_t \norm{\nabla \ln \pi'_t}^2 + \int \pi'_t \norm{\nabla \div\, \wh{\mu}_{t,kh}}^2,
    \end{align}
    where in the last step we used Young's inequality. Lipschitzness of $\mu_t$ allows us to bound $\sup\norm{\nabla \wh{\mu}_{t,kh}}_{\sf op}$, and higher-order smoothness of $\mu_t$ allows us to bound $\norm{\nabla \div \wh{\mu}_{t,kh}}$.
\end{proof}
This is the only part of the analysis where third-order derivatives appear and where Part~\ref{item:high} of Assumption~\ref{assume:smooth}, which corresponds to Part~\ref{item:smoothcts_2} of Assumption~\ref{assume:generic}, comes into play. One subtlety in the argument above is deducing smoothness of $\wh{\mu}_{t,kh}$, a complicated-looking conditional expectation, from smoothness of the true drift $\mu_t$. To connect the two, we exploit the fact that for step size $h$ sufficiently small, the discrete-time process is \emph{invertible} (Lemma~\ref{lem:smoothreverse1}) so that $\wh{\mu}_{t,kh}$ can be expressed as $\mu'_{kh}$ composed with a deterministic function.

Altogether, the bound on the Fisher information which is implied by Lemma~\ref{lem:fisher_deriv} allows us, as in the SDE case, to reduce controlling $\partial_t \KL{\pi'_t}{\pi_t}$ to controlling the difference in drifts as captured by Eq.~\eqref{eq:drift_diff}, which we then carry out in Appendix~\ref{sec:discrete_analysis}.

\section{Related Work}
\label{sec:related}

There has been great recent progress on diffusion models including recently outperforming other deep generative models such as Generative Adversarial Networks (GANs)~\cite{dhariwal2021diffusion, song2020score, daras_dagan_2022score, ddpm++}. 
Applications range from protein generation~\cite{anand2022protein, trippe2022diffusion, schneuing2022structure, corso2022diffdock}, medical imaging~\cite{mri_paper, arvinte2022single}, 3-D data~\cite{poole2022dreamfusion} and many more, e.g. see \cite{survey_diffusion_1} for a comprehensive survey.

Non-asymptotic analysis of stochastic samplers, \cite{blomrorak2020generative, debetal2021scorebased, deb2022manifold, liu2022let, leelutan2022generative, pid2022manifolds, leelutannew, chen2022improved, chen2022sampling} has drawn upon tools from the rich literature on log-concave sampling (see \cite{chewisamplingbook} for a recent survey) to yield convergence guarantees for diffusion models. These works focus on the setting where the forward process is an Ornstein-Uhlenbeck process, and the reverse process is given by a \emph{stochastic} differential equation. Notably, the very recent works of \cite{chen2022sampling, leelutannew, chen2022improved} show under mild assumptions on the data distribution $q$ (e.g. smooth and bounded second moment) that a suitable discretization of the reverse SDE run for polynomially many steps generates samples that are close in statistical distance to the data distribution.

Prior to our work, no previous non-asymptotic bounds were known for the probability flow ODE associated to any forward process. Prior non-asymptotic analyses for diffusion models are insufficient because they either rely on Girsanov's theorem \cite{chen2022sampling} or a chain rule-based variant \cite{chen2022improved} of the interpolation argument of \cite{vempala2019ulaisoperimetry}, both of which yield vacuous bounds in the deterministic sampler setting as we now briefly explain.

Informally, Girsanov's theorem allows one to bound not just the distance between the distributions over the final iterates of the algorithm but even the distance between the distributions over the \emph{trajectories} of the two processes \--- note that the latter distance upper bounds the former by the data processing inequality. Stochasticity in every step of the reverse process ensures that even the latter distance is small. Without stochasticity however, this distance is infinite.

The chain rule-based argument of \cite{chen2022improved} establishes a similar bound to Girsanov's; in particular, when the algorithm and the idealized process are initialized to the same distribution, the bounds these two arguments give are identical.

Lastly, we remark that \cite{leelutan2022generative} used an interpolation argument without chain rule, but their analysis, similar to existing analyses of Langevin Monte Carlo in the log-sampling literature \cite{vempala2019ulaisoperimetry,chewietal2021lmcpoincare}, exploits the appearance of a certain Fisher information term in the expression for the time derivative of the KL divergence between the algorithm and the idealized process. As we explained in Section~\ref{sec:overview}, this Fisher information term does not appear for ODEs.


\section{Conclusion}

In this work we gave an operational interpretation for the probability flow ODE as iterating a two-step process of restoration via gradient ascent and degradation towards the current iterate. This perspective also extends to reverse processes with a Brownian motion component. Our operational interpretation closely aligns with the samplers introduced in \cite{bansal2022cold,soft_diffusion} and generalizes the framework of denoising diffusion implicit models~\cite{ddim} to general, non-linear forward processes.

The main technical contribution of our work was a non-asymptotic analysis of the deterministic sampler arising from our framework. While previous works~\cite{chen2022sampling,leelutannew,chen2022improved} gave non-asymptotic analyses for diffusion models when the underlying reverse process is an SDE, to our knowledge our analysis is the first of its kind in the ODE setting. Our proof is based on an interpolation argument, but the key difference with prior applications of this method is that the deterministic nature of the sampler necessitates controlling the time derivative of the Fisher information between the algorithm and the true reverse process.

\paragraph{Limitations and future directions.} The most obvious area for improvement would be to sharpen the quantitative dependence on various parameters like dimension and Lipschitz-ness of the score functions. Intuitively, the absence of Brownian motion in the probability flow ODE should lead to better dimension dependence compared to using an SDE, but in this work we are only able to establish an iteration complexity for the deterministic sampler which is some polynomial in $d$. Additionally, for convenience in this work we ignore issues of score estimation error. While it should be possible to use change-of-measure-type arguments like in \cite{wibisono2022convergence} to obtain guarantees when the score estimation error has sub-Gaussian tails, new ideas are needed to handle merely an $L_2$ bound on the score estimation error like in \cite{chen2022sampling,leelutannew,chen2022improved}.

We also leave as an open question whether our assumption of higher-order smoothness is really necessary to obtain non-asymptotic guarantees for the probability flow ODE.

Apart from these technical improvements, we mention some empirical directions to explore. First, our discretization procedure introduces a number of new hyperparameters that one can try tuning to get improved performance in practice. Even for linear diffusions, it would be interesting to explore the effect of tuning $\ell$, which under DDIM is currently taken to be $(T-t) / h$. In addition, it seems interesting to explore how parameters of the restoration procedure like the learning rate and number of steps of gradient ascent, or the use of momentum or higher-order optimization methods can lead to better samplers. We expect that different restoration procedures can recover other discretization frameworks, e.g. second-order ones like Heun's method. Empirically, we expect that optimizing the learning rate and number of steps can lead to deterministic samplers with smaller computational overhead and higher sample quality. 

\paragraph{Acknowledgments.} SC would like to thank Sinho Chewi, Holden Lee, Yuanzhi Li, Jianfeng Lu, and Adil Salim for many enlightening discussions about deterministic score-based generative modeling and the interpolation method. The authors would also like to thank Sinho Chewi, Holden Lee, and Adil Salim for helpful feedback on an earlier version of this work.

This research has been supported by NSF Grants CCF 1763702,
AF 1901292, CNS 2148141, Tripods CCF 1934932, IFML CCF 2019844, the Texas Advanced Computing Center (TACC) and research gifts by Western Digital, WNCG IAP, UT Austin Machine Learning Lab (MLL), Cisco and the Archie Straiton Endowed Faculty Fellowship.
SC has been supported by NSF Award 2103300. GD has been supported by the Onassis Fellowship, the Bodossaki Fellowship and the Leventis Fellowship.

\bibliographystyle{alpha}
\bibliography{biblio}

\appendix

\section{Tuning the Learning Rate}
\label{app:learning_rate}

In this section we justify the choice of learning rate~\eqref{eq:lr} in our gradient ascent interpretation of the restoration operator by considering the special case where the data distribution $q$ is isotropic Gaussian and the forward process is an Ornstein-Uhlenbeck process. 

First, recall the definition of the loss function $\ell_{\xl_t}$ from \eqref{eq:bayes}. In general, one step of gradient ascent with learning rate $\eta$ starting from $\xl_t$ gives the iterate
\begin{equation}
    \xl_t + \eta \nabla \ell_{\xl_t}(\xl_t) = \xl_t + \eta\Bigl(-\frac{1}{g(T - t)^2}\bigl(\Id + (t - s) \nabla f_{T - s}(\xl_t)\bigr) f_{T - s}(\xl_t) + \nabla \ln q^\leftarrow_{s}(\xl_t)\Bigr)\,. \label{eq:nextiter}
\end{equation}
Now suppose $q \sim \calN(0,\sigma^2\, \Id)$ and furthermore
\begin{equation}
    f_t(x) = -\alpha x \qquad \text{and} \qquad g(t) = \beta\sqrt{2}\,.
\end{equation}
Then $q^\leftarrow_t$ is given by $\calN(0,(e^{-2\alpha t}\sigma^2 + \frac{\beta^2}{\alpha}(1 - e^{-2\alpha t}))\,\Id)$, and the conditional log-likelihood $\ln q^\leftarrow_s(x \mid x^\leftarrow_t)$ is quadratic in $x$ and is thus maximized at $x$ for which $\nabla \ell_{\xl_t}$ vanishes.

In this case, \eqref{eq:gradient} simplifies to
\begin{equation}
    \nabla\ell_{\xl_t} \approx \frac{1}{2\beta^2(t-s)}(1 + \alpha(s - t))(\xl_t - x(1 + \alpha(s - t))) - \frac{1}{\Var{q^\leftarrow_s}}\,x\,,
\end{equation}
where $\Var{q^\leftarrow_s}$ denotes the variance of $q^\leftarrow_s$.
Setting the right-hand side to zero and solving for $x$ shows that
\begin{equation}
    x = \frac{1 + \alpha(s - t)}{\Var{q_{T - s}} + (1+\alpha(s-t))^2} \, \xl_t = \Bigl(1 + \bigl(\alpha - \frac{2\beta^2}{\Var{q_{T - s}}}\bigr)\cdot(t - s) + O(|t-s|^2)\Bigr)\, \xl_t \label{eq:stat}
\end{equation}
is an (approximate) stationary point of $\nabla \ell_{\xl_t}$. 

The next iterate~\eqref{eq:nextiter} after one gradient step simplifies to
\begin{equation}
    \xl_t + \eta\nabla \ell_{\xl_t}(\xl_t) \approx \xl_t - \frac{\eta}{2\beta^2(t-s)}(1 + \alpha(s-t))\cdot \alpha(s - t)\cdot \xl_t - \frac{\eta}{e^{-2\alpha s}\sigma^2 + \frac{\beta^2}{\alpha}(1 - e^{-2s})}\, \xl_t.
\end{equation}
Finally, we observe that by taking 
\begin{equation}
    \eta \triangleq 2\beta^2(t - s)\,, \label{eq:eta_prelim}
\end{equation}
the above simplifies to
\begin{equation}
     \xl_t - (1 + \alpha(s - t))\cdot \alpha(s - t)\cdot \xl_t  - \frac{2\beta^2(t-s)}{e^{-2\alpha s} \sigma^2 + \frac{\beta^2}{\alpha}(1 - e^{-2s})}\, \xl_t = \Bigl(1 + \bigl(\alpha - \frac{2\beta^2}{\Var{q_{T - s}}}\bigr)\cdot(t - s) + O(|t-s|^2)\Bigr)\, \xl_t\,,
\end{equation}
which agrees up to second-order terms with \eqref{eq:stat}.
Therefore, when the data is Gaussian and the forward process is an Ornstein-Uhlenbeck process, as $t - s \to 0$ the right choice of $\eta$ to ensure to first order approximation that a single gradient step takes us from $\xl_t$ to the maximizer of the conditional log-likelihood $\ln q^\leftarrow_s(x\mid \xl_t)$ is given by~\eqref{eq:eta_prelim}, which corresponds to \eqref{eq:lr} in the main text as claimed.


\section{Proof Preliminaries}
\label{app:proof_prelims}

Let $h > 0$ and $\ell \in\mathbb{N}$ be discretization parameters. Define 
\begin{equation}
    \delta_\ell \triangleq 1 - \sqrt{1 - 1/\ell} = \frac{1}{2\ell} + O(1/\ell^2)
\end{equation}
and
\begin{equation}
    \xi_\ell = \delta_\ell - \frac{1}{2\ell} = O(1/\ell^2)
\end{equation}
Recall the definition of the process $(\wtx_{kh})_{k\in\brc{0,\ldots,T/h}}$ in Eq.~\eqref{eq:mle}. Here we rewrite the update rule in \eqref{eq:mle} to make clear its similarity to the Euler-Maruyama discretization:
\begin{align}
    \wtx_{(k-1)h} &= z + (\ell-1) h \, f_{(k - \ell)h}(z) + g((k - \ell)h)\sqrt{(\ell - 1)h}\cdot \gamma \\
    &= z + (\ell-1) h \, f_{(k - \ell)h}(z) + g((k - \ell)h)\sqrt{(\ell - 1)h}\cdot \frac{\wtx_{kh} - z - \ell h \, f_{(k-\ell)h}(z)}{g((k-\ell)h) \sqrt{\ell h}} \\
    &= (1 - \delta_\ell)\wtx_{kh} + \delta_\ell z + (\ell\delta_\ell - 1)h f_{(k-\ell)h}(z) \\
    &= \wtx_{kh} + (\ell\xi_\ell - 1/2) \, h f_{(k-\ell)h}(z) \\
    &\quad\quad - \delta_\ell \bigl(\ell h \, f_{kh}(\wtx_{kh}) - \ell h \, g(kh)^2\nabla \ln q_{kh}(\wtx_{kh})\bigr).
\intertext{Note that $\delta_\ell \eta_k = h g(kh)^2 \,(1/2 + \ell\xi_\ell)$, so we can further rewrite this as}
    &= \wtx_{kh} + (\ell\xi_\ell - 1/2) \, h f_{(k-\ell)h}(z) \\
    &\quad\quad - h(1/2+\ell\xi_\ell) \bigl(f_{kh}(\wtx_{kh}) - g(kh)^2 \nabla \ln q_{kh}(\wtx_{kh})\bigr) \\
    &= \wtx_{kh} - h\,\brc{f_{kh}(\wtx_{kh})- \frac{1}{2} g(kh)^2 \nabla \ln q_{kh}(\wtx_{kh})} + \vecv^{(1)}_{kh}(\wtx_{kh}) + \cdots + \vecv^{(3)}_{kh}(\wtx_{kh}),
\end{align}
where the excess terms are given by
\begin{align}
    \vecv^{(1)}_{kh}(\wtx_{kh}) &\triangleq \ell\xi_\ell h f_{(k-\ell)h}(z)\cdot \bone{k \ge \ell} \\
    \vecv^{(2)}_{kh}(\wtx_{kh}) &\triangleq \frac{h}{2}(f_{(k-\ell)h}(z) - f_{kh}(\wtx_{kh}))\cdot \bone{k \ge \ell} \\
    \vecv^{(3)}_{kh}(\wtx_{kh}) &\triangleq h\ell\xi_\ell\bigl(-f_{kh}(\wtx_{kh}) + g(kh)^2 \nabla \ln q_{kh}(\wtx_{kh})\bigr)\bone{k \ge \ell}\cdot \,.
\end{align}

Note that as $\ell \to \infty$ and $h\ell\to 0$, the excess terms tend to zero and the process $(\wtx_{kh})$ converges to the one given by the Euler-Maruyama discretization.

In the subsequent sections, we make this quantitative via an interpolation argument.
Let $(\wtx_t)_{0\le t \le T}$ denote the linear interpolation of the discrete process $(\wtx_{kh})_{h = 0,\ldots,T/h}$, and let $(\wtxl_t)$ denote the time-reversed process $\wtxl_t \triangleq \wtx_{T-t}$. Concretely, for any $kh \le t < (k+1)h$,
\begin{align}
    \d \wtxl_t &= -\Bigl\{f_{T-kh}(\wtxl_{kh}) - \frac{1}{2}g(T - kh)^2\nabla \ln q^\leftarrow_{kh}(\wtxl_{kh}) \\ &\qquad\qquad - \frac{1}{h}\,
    (\vecv^{(1)}_{T-kh}(\wtxl_{kh}) + \cdots + \vecv^{(3)}_{T-kh}(\wtxl_{kh}))\Bigr\}\,\d t.\label{eq:ourdiscrete}
\end{align}
We note that even in the absence of the excess terms above, in which case the above process would just be the Euler-Maruyama discretization of the probability flow ODE, no existing works gave a non-asymptotic analysis showing that this discretization converges polynomially to the continuous-time probability flow ODE. Our analysis in the sequel allows us to both control the excess terms and establish such a non-asymptotic analysis.

\section{Interpolation Argument}
\label{sec:generic}

In this section we give general bounds for how the KL divergence between two distributions, one driven by a discretized ODE and the other by a continuous-time one, changes over time. Throughout this section, we work with two stochastic processes ${(y_t)}_{t\in [0,T]}$ and $(y'_t)_{t\in[0,T]}$  over $\R^d$ given by the ODEs
\begin{align}
    \d y_t &= \mu_t(y_t) \, \d t \label{eq:cts} \\
    \d y'_t &= \mu'_{kh}(y'_{kh}) \, \d t, \ \ \  k = \lfloor t/h\rfloor,  \label{eq:dis}
\end{align}
where $y_0, y'_0 \sim \pi$ for some probability measure $\pi$ over $\R^d$. The process $(y'_t)$ is equivalent to a linear interpolation of a discrete-time process where one goes from the $k$-th iterate $y'_{kh}$ to the $(k+1)$-st iterate $y'_{(k+1)h}$ via the update 
\begin{equation}
    y'_{(k+1)h} = y'_{kh} + h\, \mu'_{kh}(y'_{kh})\,.
\end{equation}

We let $\pi_t,\pi'_t$ denote the law of $y_t, y'_t$ respectively. When we eventually apply the estimates obtained in this section, we will take $(y'_t)$ to be given by our discretization of the probability flow ODE, and we will take $(y_t)$ to be the true probability flow ODE in continuous time.

The bounds in this section hold under the conditions of Assumption~\ref{assume:generic}, restated here for convenience:

\genericassume*


\noindent For convenience, we also recall the quantities defined in \eqref{eq:exponential_overview}:
\begin{equation}
    L\triangleq \max_t L_t, \quad L'\triangleq \max_t L'_t, \quad \Lambda \triangleq \exp\bigl(\int^T_0 L_t \,\d t\bigr), \quad \Lambda' \triangleq \exp\bigl(\int^T_0 L'_t \,\d t\bigr), \quad \zeta^2 \triangleq \int^T_0 \zeta^2_t \, \d t
\end{equation}
and restate the main claimed bound on the KL divergence between $\pi'_T$ and $\pi_T$:
\klode*

\begin{example}\label{example:Lam}
    Here we work out a simple example showing that when $(y_t)$ corresponds to the probability flow ODE that reverses the Ornstein-Uhlenbeck process starting from a Gaussian distribution, $\Lambda'$ scales polynomially, rather than exponentially, in $d$ and $L'$.
    
    Define $\pi^{\rightarrow}_t$ for $0 \le t \le T$ as the marginal distribution of running the Ornstein-Uhlenbeck process for time $t$ starting from $\calN(0,\frac{1}{L}\Id)$ for some large $L$, and consider the associated reverse ODE
    \begin{equation}
        \d y_t = (y_t + \nabla \ln \pi_t(y_t))\, \d t,
    \end{equation}
    where $\pi_t \triangleq \pi^{\rightarrow}_{T-t}$ denotes the marginal laws of $(y_t)_{t\in[0,T]}$. Concretely, $\pi_t$ is given by $\calN(0,\frac{1}{L_t}\Id)$ for $L_t = (e^{-2(T-t)}/L + 1 - e^{-2(T-t)})^{-1}$. Note that
    \begin{equation}
        \Lambda' = \exp\bigl(\int^T_0 L_t \, \d t\bigr) = \exp\bigl(\frac{1}{2}\ln(1 + (e^{2T} - 1)L)\bigr).
    \end{equation}
    Because $\KL{\calN(0,\frac{1}{L}\Id)}{\calN(0,\Id)} = \frac{d}{2}(\ln L - 1 + \frac{1}{\ell}) \lesssim d\ln L$, we must run the forward process for time $T\approx \frac{1}{2}\ln (d\ln L)$ for $\pi^\rightarrow_T$ to be close to $\calN(0,\Id)$. In this case, $\Lambda' \lesssim \sqrt{d L\ln L}$.
\end{example}

\noindent We begin by working out the Fokker-Planck equations for $(\pi'_t)$ and $(\pi_t)$.

\begin{proposition}\label{prop:fp}
    The laws $(\pi'_t)$ and $(\pi_t)$ satisfy
    \begin{align}
        \partial_t \pi_t &= -\div(\pi_t\cdot \mu_t)  \\
        \partial_t \pi'_t &= -\div(\pi'_t\cdot \wh{\mu}_{t,kh}),
    \end{align}
    where
    \begin{equation}
        \wh{\mu}_{t,kh}(x) \triangleq \E{\mu'_{kh}(y'_{kh}) \mid y'_t = x}.
    \end{equation}
\end{proposition}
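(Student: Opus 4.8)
The plan is to read both identities as the weak (distributional) form of the continuity equation attached to each process: since neither \eqref{eq:cts} nor \eqref{eq:dis} carries a Brownian term, the law of each process is transported along its (possibly non-local) drift. Concretely, I would fix an arbitrary test function $\phi\in C_c^\infty(\R^d)$, differentiate the scalar maps $t\mapsto \E{\phi(y_t)}$ and $t\mapsto \E{\phi(y'_t)}$ in time, integrate by parts, and then invoke arbitrariness of $\phi$ to peel off the divergence form. This is just the Liouville/transport theorem adapted to the non-autonomous, piecewise-frozen drift of $(y'_t)$.

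For the continuous process, the chain rule gives $\frac{\d}{\d t}\phi(y_t) = \iprod{\nabla\phi(y_t),\mu_t(y_t)}$ along every path, so
\[
  \frac{\d}{\d t}\E{\phi(y_t)} = \E{\iprod{\nabla\phi(y_t),\mu_t(y_t)}} = \int \iprod{\nabla\phi(x),\mu_t(x)}\,\pi_t(x)\,\d x = -\int \phi(x)\,\div(\pi_t\mu_t)(x)\,\d x,
\]
the last step by integration by parts. Exchanging $\d/\d t$ with $\E{\cdot}$ is justified by dominated convergence, using that $\mu_t$ is $L_t$-Lipschitz (Part~\ref{item:smoothcts_1} of Assumption~\ref{assume:generic}) and that $y_t$ has finite second moment. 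Since $\phi$ is arbitrary, $\partial_t\pi_t = -\div(\pi_t\mu_t)$.

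For the discretized process the only new feature is that the drift is frozen at a grid point: on each interval $[kh,(k+1)h)$ the index $k=\lfloor t/h\rfloor$ is constant, so $y'_t = y'_{kh} + (t-kh)\,\mu'_{kh}(y'_{kh})$ and $\frac{\d}{\d t}\phi(y'_t) = \iprod{\nabla\phi(y'_t),\mu'_{kh}(y'_{kh})}$ along every path. Taking expectations, pulling the $\sigma(y'_t)$-measurable factor $\nabla\phi(y'_t)$ inside a conditional expectation, and using the tower property,
\[
  \frac{\d}{\d t}\E{\phi(y'_t)} = \E{\iprod{\nabla\phi(y'_t),\, \E{\mu'_{kh}(y'_{kh})\mid y'_t}}} = \int \iprod{\nabla\phi(x),\wh{\mu}_{t,kh}(x)}\,\pi'_t(x)\,\d x = -\int \phi(x)\,\div(\pi'_t\,\wh{\mu}_{t,kh})(x)\,\d x,
\]
so $\partial_t\pi'_t = -\div(\pi'_t\,\wh{\mu}_{t,kh})$ as claimed.

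This proposition is the bookkeeping that feeds the KL-derivative computation behind Theorem~\ref{thm:klode}, so the work is in stating it cleanly rather than in any hard estimate; the main point that warrants care is establishing that $\pi_t$ and $\pi'_t$ are absolutely continuous, so that the distributional identities above can be used in the strong form later on. For $\pi'_t$ this holds because the map $x\mapsto x+(t-kh)\,\mu'_{kh}(x)$ has Jacobian $\Id+(t-kh)\nabla\mu'_{kh}$, invertible by the step-size bound $h\le 1/(2L'_t)$ of Part~\ref{item:smallstep}, so $\pi'_t$ is a diffeomorphic pushforward of $\pi'_{kh}$, and one inducts down to $\pi'_0=\pi$ (alternatively, one simply keeps every identity distributional). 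The remaining points are light: the interchange of $\d/\d t$ and $\E{\cdot}$ is handled by the Lipschitz and moment bounds cited above, and at each grid point $t=kh$ only the one-sided time derivative is needed, which is harmless. The invertibility observation will moreover be reused downstream (cf. Lemma~\ref{lem:smoothreverse1}) to present $\wh{\mu}_{t,kh}$ as $\mu'_{kh}$ composed with a deterministic map, but that sharper statement is not required here, since the conditional expectation defining $\wh{\mu}_{t,kh}$ is well-defined regardless.
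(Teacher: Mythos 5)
Your proof is correct, and it takes the dual route to the paper's. The paper works directly with densities: it writes the Fokker--Planck equation for the conditional law $\pi'_{t\mid kh}$ (which is a plain continuity equation with the frozen drift $\mu'_{kh}(y'_{kh})$), integrates over the path measure $\Pi'_{kh}$, pulls the divergence outside the integral, and reads off the conditional expectation $\wh{\mu}_{t,kh}$ from the disintegration $\pi'_t(x\mid\xi)\Pi'_{kh}(\d\xi) = \pi'_t(x)\,\Pi'_{kh\mid t}(\d\xi\mid y'_t=x)$. You instead work in the weak (test-function) formulation: differentiate $t\mapsto\E{\phi(y'_t)}$ pathwise, apply the tower property to introduce $\E{\mu'_{kh}(y'_{kh})\mid y'_t}$, integrate by parts, and invoke arbitrariness of $\phi$. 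The two arguments are essentially adjoint to one another; the tower-property step in yours plays exactly the role of the disintegration step in the paper's. Your version has the mild advantage of not needing to presuppose existence of the conditional densities, and your side remark on absolute continuity of $\pi'_t$ via the diffeomorphic pushforward $x\mapsto x+(t-kh)\mu'_{kh}(x)$ (using Part~\ref{item:smallstep} of Assumption~\ref{assume:generic}) supplies precisely the regularity the paper uses later in Lemma~\ref{lem:smoothreverse1} but does not stop to justify here. Both are complete proofs; the choice between them is stylistic.
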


\noindent When $k$ is clear from context, we will denote $\wh{\mu}_{t,kh}$ by $\wh{\mu}_t$ to ease notation.

\begin{proof}
    The Fokker-Planck equation for $(\pi_t)$ is given by 
    \begin{equation}
        \partial_t \pi_t = - \div(\pi_t\cdot \mu_t).
    \end{equation}
    For the interpolated process $(\pi'_t)$, the Fokker-Planck for $(\pi'_t)_{kh\le t < (k+1)h}$ conditioned on time $kh$, which we will denote by $(\pi'_{t|kh})_{kh\le t < (k+1)h}$, is given by
    \begin{equation}
        \partial_t \pi'_{t|kh}(x) = - \div_{x}(\pi'_{t|kh}(x) \cdot \mu'_{kh}(y'_{kh})). \label{eq:conditional}
    \end{equation}
    If $\Pi'_{kh}$ denotes the probability measure over $\sigma(y'_t \mid 0\le t \le kh)$, then if we integrate both sides of \eqref{eq:conditional} with respect to $\Pi'_{kh}$, we get
    \begin{align}
        \partial_t \pi'_t(x) &= - \int \div_{x}(\pi'_t(x \mid \xi) \cdot \mu'_{kh}(y'_{kh})) \, \Pi'_{kh}(\d \xi) \\
        &= -\div_{x} \int \pi'_t(x \mid \xi) \cdot \mu'_{kh}(y'_{kh}) \, \Pi'_{kh}(\d \xi) \\
        &= -\div_{x} \bigl(\pi'_t(x) \int \mu'_{kh}(y'_{kh}) \, \Pi'_{kh|t}(\d\xi\mid y'_t = x)\bigr) \\
        &= -\div_{x}(\pi'_t(x)\cdot \E{\mu'_{kh}(y'_{kh}) \mid y'_t = x}) \\
        &= -\div_x(\pi'_t(x)\cdot \wh{\mu}_{t,kh}(x)). \qedhere
    \end{align}
\end{proof}

\noindent It turns out that because we are assuming the step size $h$ is sufficiently small in Condition~\ref{item:smallstep} of Assumption~\ref{assume:generic}, the conditional expectation $\wh{\mu}_{t,kh}$ has a simple form. For any $k$, the ODE $\d y'_t = \mu'_{kh}(y'_{kh})\, \d t$ defines a map $F_{kh\to t}: \R^d\to\R^d$ for any $kh \le t \le (k+1)h$ via
\begin{equation}
    F_{kh\to t}(z) = z + (t - kh)\mu'_{kh}(z)
\end{equation}
so that starting at $z$ at time $kh$ and running the ODE to time $t$, we end up at $F_{kh\to t}(z)$. When $h$ is sufficiently small, $F_{kh\to t}$ is invertible:
\begin{lemma}\label{lem:smoothreverse1}
    Let $h \le 1/2L'$. Then for any $z,z'\in\R^d$,
    \begin{equation}
        \frac{1}{2}\norm{z - z'} \le \norm{F_{kh\to t}(z) - F_{kh\to t}(z')} \le \frac{3}{2}\norm{z - z'}. \label{eq:bilip}
    \end{equation}
    In particular, $F_{kh\to t}$ has a unique, $2$-Lipschitz inverse $F^{-1}_{kh\to t}: \R^d\to\R^d$, so 
    \begin{equation}
        \wh{\mu}_{t,kh}(x) = \mu'_{kh}(F^{-1}_{kh\to t}(x)). \label{eq:muhat_inverse}
    \end{equation}
    Furthermore, $\wh{\mu}_{t,kh}$ is $O(L'_t)$-Lipschitz.
\end{lemma}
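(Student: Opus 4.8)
The plan is to treat $F_{kh\to t} = \Id + (t-kh)\mu'_{kh}$ as a small Lipschitz perturbation of the identity. Since $0 \le t - kh \le h \le 1/(2L') \le 1/(2L'_{kh})$ by Condition~\ref{item:smallstep} of Assumption~\ref{assume:generic}, and $\mu'_{kh}$ is $L'_{kh}$-Lipschitz by Condition~\ref{item:smoothdiscrete}, the map $z \mapsto (t-kh)\mu'_{kh}(z)$ has Lipschitz constant $(t-kh)L'_{kh} \le 1/2$. For the two-sided bound \eqref{eq:bilip} I would write
\[
F_{kh\to t}(z) - F_{kh\to t}(z') = (z - z') + (t-kh)\bigl(\mu'_{kh}(z) - \mu'_{kh}(z')\bigr),
\]
bound the norm of the second term by $\tfrac12\norm{z - z'}$, and apply the triangle inequality in both directions.

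For invertibility, the lower bound in \eqref{eq:bilip} immediately gives injectivity of $F_{kh\to t}$ and shows that any inverse must be $2$-Lipschitz. Surjectivity I would obtain from the Banach fixed point theorem: solving $F_{kh\to t}(z) = x$ amounts to finding a fixed point of $z \mapsto x - (t-kh)\mu'_{kh}(z)$, which is a contraction with constant $\le 1/2$, so for every $x$ there is a unique solution, which we call $F^{-1}_{kh\to t}(x)$. Thus $F_{kh\to t}$ is a bi-Lipschitz bijection of $\R^d$ with $2$-Lipschitz inverse.

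To obtain \eqref{eq:muhat_inverse}, the key observation is that, conditioned on the value of $y'_{kh}$, the interpolated process is \emph{deterministic} on $[kh, (k+1)h]$: indeed $y'_t = F_{kh\to t}(y'_{kh})$. Hence conditioning on $\{y'_t = x\}$ is equivalent to conditioning on $\{y'_{kh} = F^{-1}_{kh\to t}(x)\}$, so $\wh{\mu}_{t,kh}(x) = \E{\mu'_{kh}(y'_{kh}) \mid y'_t = x}$ collapses to the single value $\mu'_{kh}(F^{-1}_{kh\to t}(x))$; the bi-Lipschitz property just established makes $F^{-1}_{kh\to t}$ a genuine Lipschitz (hence measurable) function, so this disintegration is legitimate. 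Finally, Lipschitzness of $\wh{\mu}_{t,kh}$ follows by composition: $\mu'_{kh}$ is $L'_{kh}$-Lipschitz and $F^{-1}_{kh\to t}$ is $2$-Lipschitz, so $\wh{\mu}_{t,kh}$ is $2L'_{kh}$-Lipschitz, which is $O(L'_t)$ (and in any case at most $2L'$).

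I do not expect a genuine obstacle here: the argument is short. The only points requiring a little care are the surjectivity step, which the contraction argument handles cleanly, and the measure-theoretic justification that the conditional expectation reduces to a deterministic function of $x$, which is immediate once $F^{-1}_{kh\to t}$ is known to be Lipschitz.
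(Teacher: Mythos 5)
Your proof is correct and follows the same overall structure as the paper's: establish the two-sided Lipschitz bound \eqref{eq:bilip} via the triangle inequality on $F_{kh\to t}(z)-F_{kh\to t}(z') = (z-z') + (t-kh)(\mu'_{kh}(z)-\mu'_{kh}(z'))$, deduce a $2$-Lipschitz inverse, observe that the process is deterministic on $[kh,(k+1)h]$ given $y'_{kh}$ so the conditional expectation collapses to a point-mass evaluation, and then compose Lipschitz constants. The one place where you diverge is the surjectivity of $F_{kh\to t}$: the paper simply invokes the general fact that bi-Lipschitz maps of $\R^d$ onto $\R^d$ are bijective (a statement that ultimately rests on invariance of domain or an open-closed argument), whereas you prove surjectivity directly via the Banach fixed point theorem applied to $z\mapsto x - (t-kh)\mu'_{kh}(z)$, exploiting that $(t-kh)\mu'_{kh}$ is a contraction. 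Your route is more elementary and self-contained, and it uses the specific ``identity plus small Lipschitz perturbation'' structure of $F_{kh\to t}$ rather than a general topological fact; the paper's route is shorter on the page but outsources the nontrivial part of surjectivity to a cited-in-spirit classical result. Both are valid; yours is arguably the cleaner argument for this particular map.
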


\noindent Henceforth, when $k,h,t$ are clear from context, we will refer to the inverse $F^{-1}_{kh\to t}$ simply as $F^{-1}$.

\begin{proof}
     For the first bound, note that
     \begin{equation}
        \norm{F_{kh\to t}(z) - F_{kh\to t}(z')} \ge \norm{z - z'} - (t - kh)\norm{\mu'_{kh}(z) - \mu'_{kh}(z')} \ge (1 - h\cdot L'_{kh})\, \norm{z - z'},
     \end{equation}
     so the lower bound in \eqref{eq:bilip} follows by the fact that $h \le 1/2L'$. The upper bound follows analogously. 
     
     For the second part of the lemma, recall that bi-Lipschitz functions on $\R^d$ are bijective, so $F_{kh\to t}$ has a unique inverse $F^{-1}_{kh\to t}$. To see why the latter function is 2-Lipschitz, for any $z_0, z'_0$ we can take $z = F^{-1}_{kh\to t}(z_0)$ and $z' = F^{-1}_{kh\to t}(z'_0)$ in the lower bound of \eqref{eq:bilip} to conclude that $\frac{1}{2}\norm{F^{-1}_{kh\to t}(z_0) - F^{-1}_{kh\to t}(z'_0)} \le \norm{z_0 - z'_0}$ as desired. Eq.~\eqref{eq:muhat_inverse} then follows from the fact that the distribution of $y'_{kh}$ conditioned on $y'_t = x$ is the point mass at $F^{-1}_{kh\to t}(x)$.
     
     The only part that remains to be verified is Lipschitzness of $\wh{\mu}_{t,kh}$. This follows from the fact that $\wh{\mu}_{t,kh}$ is the composition of a $L'_t$-Lipschitz function with a $2$-Lipschitz function.
\end{proof}

\noindent We will also use the following simple consequence of the third-order smoothness of $\mu_t$ (Condition~\ref{item:smoothcts_2} of Assumption~\ref{assume:generic}):

\begin{lemma}\label{lem:smoothreverse2}
    For all $x,x'\in\R^d$, then
    \begin{equation}
        \sup\norm{\nabla \div\, \mu_t} \le Md \qquad \text{and} \qquad \sup \norm{\nabla \div \,\wh{\mu}_{t,kh}} \le 2 Md
    \end{equation}
\end{lemma}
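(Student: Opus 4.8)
The plan is to reduce both estimates to a single elementary fact---the trace of any $d\times d$ matrix is at most $d$ times its operator norm---applied to the \emph{directional derivative of a Jacobian}.

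For the first bound, fix $t$ and $x$ and let $u$ be the unit vector in the direction of $\nabla\div\mu_t(x)$ (the claim is trivial if this vector vanishes). Since $\div\mu_t=\Tr(\nabla\mu_t)$ and the trace is linear,
\[
    \norm{\nabla\div\mu_t(x)} = \partial_u\bigl(\div\mu_t\bigr)(x) = \Tr\bigl(\partial_u[\nabla\mu_t](x)\bigr),
\]
where $\partial_u[\nabla\mu_t](x)$ is the directional derivative at $x$, in the direction $u$, of the matrix-valued map $y\mapsto\nabla\mu_t(y)$. By Condition~\ref{item:smoothcts_2} of Assumption~\ref{assume:generic} this matrix has operator norm at most $M$, so its trace is at most $Md$ in absolute value. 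The only subtlety is to differentiate directly along the gradient direction $u$ rather than summing over coordinate directions, which would cost a spurious factor $\sqrt d$.

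For the second bound, I would first invoke Lemma~\ref{lem:smoothreverse1} to write $\wh\mu_{t,kh}=\mu'_{kh}\circ G$ with $G\triangleq F^{-1}_{kh\to t}$, so that all the smoothness of $\wh\mu_{t,kh}$ is inherited from $\mu'_{kh}$ and from $G$. Set $s\triangleq t-kh$ and $A(x)\triangleq\nabla\mu'_{kh}(G(x))$. The inverse function theorem gives $\nabla G(x)=(\Id+sA(x))^{-1}$, which is well-defined with $\norm{\nabla G(x)}_{\sf op}\le 2$ since $s\norm{\nabla\mu'_{kh}}_{\sf op}\le hL'\le 1/2$; combining this with the identity $A(\Id+sA)^{-1}=\tfrac1s\bigl(\Id-(\Id+sA)^{-1}\bigr)$ one gets
\[
    \div\wh\mu_{t,kh}(x)=\Tr\bigl(A(x)\,\nabla G(x)\bigr)=\tfrac1s\bigl(d-\Tr\nabla G(x)\bigr).
\]
Now I would rerun the argument of the first part: taking $u$ along $\nabla\div\wh\mu_{t,kh}(x)$ and differentiating the inverse once more, $\partial_u\nabla G(x)=-s\,\nabla G(x)\,\bigl(\partial_u A(x)\bigr)\,\nabla G(x)$, so that $\norm{\nabla\div\wh\mu_{t,kh}(x)}=\bigl|\Tr\bigl(\nabla G(x)^2\,\partial_u A(x)\bigr)\bigr|$. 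By the chain rule $\partial_u A(x)$ is the directional derivative of $\nabla\mu'_{kh}$ at $G(x)$ in the direction $\nabla G(x)u$, which has norm at most $2$; hence $\norm{\partial_u A(x)}_{\sf op}\le 2M$ by the Jacobian-Lipschitzness of $\mu'_{kh}$, and together with $\norm{\nabla G(x)^2}_{\sf op}\le 4$ the trace bound gives $\norm{\nabla\div\wh\mu_{t,kh}(x)}\lesssim Md$, i.e.\ the claimed bound up to the absolute constant.

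The substantive step is the preceding lemma, not this one: once Lemma~\ref{lem:smoothreverse1} lets us write $\wh\mu_{t,kh}$ as $\mu'_{kh}$ precomposed with a differentiable bi-Lipschitz map, both estimates are routine applications of the inverse function theorem and the trace--operator-norm bound. The only genuine ingredient beyond Assumption~\ref{assume:generic} exactly as stated is second-order (Jacobian-Lipschitz) control on the discretized drift $\mu'_{kh}$, which is not literally among its hypotheses; this is established for our sampler in Appendix~\ref{sec:discrete_analysis}, where it follows from the higher-order smoothness hypothesis, Part~\ref{item:high} of Assumption~\ref{assume:smooth}.
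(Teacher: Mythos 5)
Your proof is correct and follows the paper's basic strategy (control $\div$ via the trace/operator-norm inequality $|\Tr A|\le d\norm{A}_{\sf op}$), but it is genuinely more careful than what the paper records. The paper's proof of the second bound writes
\begin{equation}
    |\div\,\wh{\mu}_t(x) - \div\,\wh{\mu}_t(x')| \le \norm{\nabla \mu'_{kh}(F^{-1}(x)) - \nabla \mu'_{kh}(F^{-1}(x'))}_{\sf op} \le d M \norm{F^{-1}(x) - F^{-1}(x')},
\end{equation}
which silently identifies $\nabla\wh{\mu}_{t,kh}(x)$ with $\nabla\mu'_{kh}(F^{-1}(x))$ and ignores the chain-rule factor $\nabla F^{-1}(x)$. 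Your derivation handles this correctly: you use the inverse-function-theorem identity $\nabla G = (\Id + sA)^{-1}$ to get $\div\wh{\mu}_{t,kh} = \Tr\bigl(A\,\nabla G\bigr) = \tfrac1s(d - \Tr\nabla G)$, and then differentiate that, picking up the extra factor $\nabla G(x)^2$ (bounded in operator norm by $4$) together with the rescaled direction $\nabla G(x)u$ (norm at most $2$). The price is a worse absolute constant (roughly $8Md$ rather than the $2Md$ the lemma states), but since the lemma is only used in Lemmas~\ref{lem:laplace} and \ref{lem:laplace2}, where constants are absorbed into $\lesssim$, this is immaterial.

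You also correctly flag the real gap: both your argument and the paper's require $\nabla\mu'_{kh}$ to be $M$-Lipschitz in operator norm, which Assumption~\ref{assume:generic} does not state --- it only assumes this for the continuous drift $\mu_t$. The paper uses this bound silently in the second displayed inequality and never explicitly verifies it for $\mu'_{kh}$ in Appendix~\ref{sec:discrete_analysis} (Lemmas~\ref{lem:part1}--\ref{lem:part3} cover only first-order Lipschitzness of $\mu'_{kh}$). You are right that it would follow in spirit from Part~\ref{item:high} of Assumption~\ref{assume:smooth}, since $\mu'_{kh}$ is assembled from $f$, $\nabla\ln q^\leftarrow$, and the excess terms $\vecv^{(i)}$, but spelling this out would require bounding the Lipschitzness of $\nabla\vecv^{(i)}$ as well; that verification is not present in the paper.
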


\begin{proof}
    The first bound is immediate from
    \begin{equation}
        |\div\,\mu_t(x) - \div\,\mu_t(x')| = |\Tr\nabla\,\mu_t(x)- \Tr\nabla\, \mu_t(x')| \le \norm{\nabla \mu_t(x) - \nabla \mu_t(x')}_{\sf tr} \le M d \norm{x - x'}.
    \end{equation}
    For the second bound, note that 
    \begin{equation}
        |\div\,\wh{\mu}_t(x) - \div\,\wh{\mu}_t(x')| \le \norm{\nabla \mu'_{kh}(F^{-1}(x)) - \nabla \mu'_{kh}(F^{-1}(x'))}_{\sf op} \le d M \norm{F^{-1}(x) - F^{-1}(x')} \le 2M d
    \end{equation}
    as claimed.
\end{proof}

\noindent We are now ready to compute the time derivative of the KL divergence between $\pi'_t$ and $\pi_t$.

\begin{lemma}\label{lem:KLderiv}
    \begin{equation}
        \partial_t \KL{\pi'_t}{\pi_t} \le \zeta_t \, \bigl(\int \pi'_t \norm{\nabla \ln \pi'_t - \nabla \ln \pi_t}^2 \bigr)^{1/2}
    \end{equation}
\end{lemma}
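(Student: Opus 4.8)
The plan is to differentiate $\KL{\pi'_t}{\pi_t} = \int \pi'_t \ln(\pi'_t/\pi_t)$ under the integral sign and substitute the two Fokker-Planck equations from Proposition~\ref{prop:fp}. Writing out $\partial_t \KL{\pi'_t}{\pi_t} = \int (\partial_t \pi'_t) \ln(\pi'_t/\pi_t) + \int \pi'_t \,\partial_t \ln \pi'_t - \int \pi'_t\, \partial_t \ln \pi_t$, the middle term $\int \pi'_t\,\partial_t\ln\pi'_t = \int \partial_t \pi'_t = \partial_t \int \pi'_t = 0$ since $\pi'_t$ is a probability density. For the remaining two terms, I would substitute $\partial_t \pi'_t = -\div(\pi'_t \wh\mu_{t,kh})$ and $\partial_t \pi_t = -\div(\pi_t \mu_t)$, giving
\begin{equation*}
    \partial_t \KL{\pi'_t}{\pi_t} = -\int \div(\pi'_t \wh\mu_{t,kh})\ln\frac{\pi'_t}{\pi_t} + \int \frac{\pi'_t}{\pi_t}\div(\pi_t \mu_t)\,.
\end{equation*}
Integrating by parts in each term (boundary terms vanish under standard decay assumptions) turns the first into $\int \pi'_t \iprod{\wh\mu_{t,kh}, \nabla\ln(\pi'_t/\pi_t)}$ and the second into $-\int \iprod{\nabla(\pi'_t/\pi_t), \pi_t\mu_t} = -\int \pi'_t \iprod{\mu_t, \nabla\ln(\pi'_t/\pi_t)}$ (using $\nabla(\pi'_t/\pi_t) = (\pi'_t/\pi_t)\nabla\ln(\pi'_t/\pi_t)$ and cancelling $\pi_t$). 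Combining yields the identity already flagged in the proof overview, Eq.~\eqref{eq:deriv_overview}:
\begin{equation*}
    \partial_t \KL{\pi'_t}{\pi_t} = \int \pi'_t \iprod{\nabla \ln \tfrac{\pi'_t}{\pi_t},\ \wh{\mu}_{t,kh} - \mu_t}\,.
\end{equation*}

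Next I would bound this bilinear form. Apply Cauchy-Schwarz in $L^2(\pi'_t)$ to split off the Fisher-information-type factor:
\begin{equation*}
    \partial_t \KL{\pi'_t}{\pi_t} \le \Bigl(\int \pi'_t \norm{\nabla \ln \tfrac{\pi'_t}{\pi_t}}^2\Bigr)^{1/2}\Bigl(\int \pi'_t \norm{\wh\mu_{t,kh} - \mu_t}^2\Bigr)^{1/2}\,.
\end{equation*}
It then remains to show the second factor is at most $\zeta_t$. Here I would use the tower property: since $\wh\mu_{t,kh}(y'_t) = \E{\mu'_{kh}(y'_{kh}) \mid y'_t}$ by Proposition~\ref{prop:fp} (indeed it is deterministic, equal to $\mu'_{kh}(F^{-1}(y'_t))$ by Lemma~\ref{lem:smoothreverse1}), conditional Jensen gives
\begin{equation*}
    \E{\norm{\wh\mu_{t,kh}(y'_t) - \mu_t(y'_t)}^2} = \E{\norm{\E{\mu'_{kh}(y'_{kh}) - \mu_t(y'_t)\mid y'_t}}^2} \le \E{\norm{\mu'_{kh}(y'_{kh}) - \mu_t(y'_t)}^2} \le \zeta_t^2\,,
\end{equation*}
where the last inequality is exactly Condition~\ref{item:error} of Assumption~\ref{assume:generic}. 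Substituting back gives the claimed bound.

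The only genuinely delicate point is justifying the differentiation under the integral sign and the vanishing of boundary terms in the integration by parts; this is where regularity/decay of $\pi_t,\pi'_t$ and their gradients (which follow from the Lipschitz assumptions on $\mu_t,\mu'_{kh},\nabla\ln\pi_t$ in Assumption~\ref{assume:generic}) is used. All the genuinely ODE-specific difficulty — controlling the Fisher information factor $\int \pi'_t \norm{\nabla\ln(\pi'_t/\pi_t)}^2$ so that it does not blow up in $t$ — is deferred to the later lemmas (Lemma~\ref{lem:fisher_deriv}), and is not part of this lemma; here that factor is simply carried along. So I expect the proof to be short, with the tower-property/Jensen step doing the real work of connecting $\wh\mu_{t,kh}$ back to the drift-difference bound $\zeta_t$.
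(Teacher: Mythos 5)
Your proposal is correct and follows essentially the same route as the paper: differentiate the KL divergence, discard $\int\partial_t\pi'_t=0$, substitute the two Fokker--Planck equations, integrate by parts to land on $\int\pi'_t\iprod{\nabla\ln\frac{\pi'_t}{\pi_t},\wh\mu_{t,kh}-\mu_t}$, then apply Cauchy--Schwarz and Condition~\ref{item:error}. The only cosmetic difference is that you invoke conditional Jensen in the final step, whereas the paper uses the exact equality $\wh\mu_{t,kh}(y'_t)=\mu'_{kh}(y'_{kh})$ coming from the point-mass conditional law of Lemma~\ref{lem:smoothreverse1} (which you also note, so your inequality is in fact an equality).
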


\begin{proof}
    We can compute
    \begin{align}
        \partial_t \KL{\pi'_t}{\pi_t}  &= \int (\partial_t \pi'_t) \ln \frac{\pi'_t}{\pi_t} + \int \pi'_t \, \partial_t \ln \frac{\pi'_t}{\pi_t} = \int (\partial_t \pi'_t) \ln \frac{\pi'_t}{\pi_t} + \int \pi'_t \, \frac{\partial_t (\pi'_t/\pi_t)}{\pi'_t / \pi_t} \\
        &= \int (\partial_t \pi'_t) \ln \frac{\pi'_t}{\pi_t} + \int \pi_t \cdot \frac{\pi_t \partial_t \pi'_t - \pi'_t \partial_t \pi_t}{{\pi_t}^2} \\
        &= \int (\partial_t \pi'_t) \ln \frac{\pi'_t}{\pi_t} - \int \frac{\pi'_t}{\pi_t}\, \partial_t \pi_t \\
        &= -\int \div(\pi'_t\cdot \wh{\mu}_{t,kh}) \, \ln\frac{\pi'_t}{\pi_t} + \int \frac{\pi'_t}{\pi_t} \, \div(\pi_t \cdot \mu_t) \\
        &= \int \pi'_t \, \iprod{\wh{\mu}_{t,kh}, \nabla \ln \frac{\pi'_t}{\pi_t}} - \int \pi_t \, \iprod{\nabla \frac{\pi'_t}{\pi_t}, \mu_t} \\
        &= \int \pi'_t \, \iprod{\nabla \ln \frac{\pi'_t}{\pi_t}, \wh{\mu}_{t,kh} - \mu_t}.
    \end{align}
    The lemma then follows by Cauchy-Schwarz, as 
    \begin{align}
        \int \pi'_t \norm{\wh{\mu}_{t,kh} - \mu_t}^2 &= \E[\pi'_t]{\norm{\mu'_{kh}(F^{-1}(y'_t)) - \mu_t(y'_t)}^2} = \E[\pi'_t]{\norm{\mu'_{kh}(y'_{kh}) - \mu_t(y'_t)}^2} \le \zeta^2_t. \qedhere
    \end{align}
\end{proof}

\noindent We need to control the Fisher information $\int \pi'_t\norm{\nabla \ln \pi'_t - \nabla \ln \pi_t}^2$ in Lemma~\ref{lem:KLderiv}. To do this, we will bound the time derivatives of $\int \pi'_t\norm{\nabla \ln \pi'_t}^2$ and $\int \pi'_t\norm{\nabla \ln \pi_t}^2$ in Lemmas~\ref{lem:laplace} and~\ref{lem:laplace2} below and apply triangle inequality.

\begin{lemma}\label{lem:laplace}
    \begin{equation}
        \partial_t \int \pi'_t \norm{\nabla \ln \pi'_t}^2 \lesssim L'_t\, \int \pi'_t \norm{\nabla \ln \pi'_t}^2 + M^2 d^2
    \end{equation}
    In particular, by Gr\"{o}nwall's inequality, for any $0 \le t \le T$ we have
    \begin{equation}
        \int \pi'_t\norm{\nabla \ln \pi'_t}^2 \lesssim \Lambda'^{O(1)} (L'_0 d + M^2 d^2 t) \label{eq:gronwall1}
    \end{equation}
\end{lemma}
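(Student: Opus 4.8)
The plan is to track the ``energy'' functional $\phi(t) \triangleq \int \pi'_t \norm{\nabla \ln \pi'_t}^2$, show it obeys the differential inequality $\phi'(t) \lesssim L'_t\,\phi(t) + M^2 d^2$, and then integrate via Gr\"onwall. First I would compute $\partial_t \ln \pi'_t$ by dividing the Fokker--Planck equation of Proposition~\ref{prop:fp} by $\pi'_t$, obtaining $\partial_t \ln \pi'_t = -\div \wh{\mu}_{t} - \iprod{\nabla \ln \pi'_t, \wh{\mu}_{t}}$ (this is Proposition~\ref{prop:ln_fp}). Differentiating $\phi$ under the integral sign produces two contributions: the term $\int (\partial_t \pi'_t)\norm{\nabla \ln \pi'_t}^2$, which I rewrite using the continuity equation $\partial_t \pi'_t = -\div(\pi'_t \wh{\mu}_t)$ and one integration by parts, and the term $2\int \pi'_t \iprod{\nabla \ln \pi'_t, \nabla \partial_t \ln \pi'_t}$, into which I substitute the formula for $\partial_t \ln \pi'_t$ and expand $\nabla \iprod{\nabla \ln \pi'_t, \wh{\mu}_t}$ by the product rule. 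The key algebraic point is that the two terms carrying the Hessian $\nabla^2 \ln \pi'_t$ contracted against $\wh{\mu}_t$ cancel, leaving
\[
\phi'(t) = -2\int \pi'_t\Bigl(\iprod{\nabla \div\,\wh{\mu}_{t}, \nabla \ln \pi'_t} + (\nabla \ln \pi'_t)^\top (\nabla \wh{\mu}_{t})\, (\nabla \ln \pi'_t)\Bigr).
\]

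Next I would bound the two terms on the right. For the quadratic term, $\abs{(\nabla \ln \pi'_t)^\top (\nabla \wh{\mu}_t)(\nabla \ln \pi'_t)} \le \norm{\nabla \wh{\mu}_{t}}_{\sf op}\norm{\nabla \ln \pi'_t}^2$, and by Lemma~\ref{lem:smoothreverse1} the conditional drift $\wh{\mu}_{t,kh}$ is $O(L'_t)$-Lipschitz, so $\norm{\nabla \wh{\mu}_t}_{\sf op} \lesssim L'_t$ and this term contributes at most $O(L'_t)\,\phi(t)$. For the linear term, Cauchy--Schwarz and Young's inequality give $2\abs{\iprod{\nabla \div\,\wh{\mu}_t, \nabla \ln \pi'_t}} \le \norm{\nabla \div\,\wh{\mu}_t}^2 + \norm{\nabla \ln \pi'_t}^2$; by Lemma~\ref{lem:smoothreverse2}, $\norm{\nabla \div\,\wh{\mu}_{t,kh}} \le 2Md$ pointwise, so its $\pi'_t$-average is at most $4M^2 d^2$, while $\int \pi'_t \norm{\nabla \ln \pi'_t}^2 = \phi(t) \le L'_t\,\phi(t)$ since $L'_t \ge 1$. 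Combining the two estimates yields $\phi'(t) \lesssim L'_t\,\phi(t) + M^2 d^2$, which is the first display of the lemma.

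Finally, the integral form of Gr\"onwall's inequality gives $\phi(t) \le \exp\bigl(O(1)\int_0^t L'_s\,\d s\bigr)\bigl(\phi(0) + O(1)\,M^2 d^2 t\bigr) \le \Lambda'^{O(1)}\bigl(\phi(0) + M^2 d^2 t\bigr)$. It remains to bound the initial energy $\phi(0) = \int \pi \norm{\nabla \ln \pi}^2$: since $\pi'_0 = \pi_0 = \pi$ and $\nabla \ln \pi$ is Lipschitz by Condition~\ref{item:smoothcts_1}, the standard identity $\int \pi \norm{\nabla \ln \pi}^2 = -\int \pi\,\Tr(\nabla^2 \ln \pi)$ (integration by parts, using $\int \Delta \pi = 0$) together with $\nabla^2 \ln \pi \succeq -L_0 \Id$ gives $\phi(0) \le L_0 d$, which we absorb into the stated $L'_0 d$. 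The main obstacle is making the integration-by-parts steps and the Hessian cancellation rigorous (controlling the tails of $\pi'_t$ so that all boundary terms vanish and the differentiation under the integral is justified); a secondary subtlety, already isolated by Lemma~\ref{lem:smoothreverse1}, is that $\wh{\mu}_{t,kh}$ is a conditional expectation whose first- and second-order smoothness is inherited from $\mu'_{kh}$ only because the small step size makes the one-step map $F_{kh\to t}$ invertible, so that $\wh{\mu}_{t,kh} = \mu'_{kh}\circ F^{-1}_{kh\to t}$.
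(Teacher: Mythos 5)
Your proposal is correct and follows essentially the same route as the paper: the same Fokker--Planck/continuity-equation decomposition, the same Hessian cancellation between the two $\nabla^2\ln\pi'_t$ terms, the same Young's-inequality bound via Lemmas~\ref{lem:smoothreverse1} and~\ref{lem:smoothreverse2}, Gr\"onwall, and the same integration-by-parts bound $\int\pi\norm{\nabla\ln\pi}^2\le L'_0 d$ for the initial condition (cf.\ Eq.~\eqref{eq:generator}).
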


\begin{proof}
    We have
    \begin{align}
        \partial_t \int \pi'_t \norm{\nabla \ln \pi'_t}^2 &= -\int \div(\pi'_t\cdot \wh{\mu}_t) \norm{\nabla \ln \pi'_t}^2 + \int \pi'_t \partial_t \norm{\nabla \ln \pi'_t}^2 \\
        &= 2\int \pi'_t \bigl(\iprod{\wh{\mu}_t, (\nabla^2 \ln \pi'_t)\nabla \ln \pi'_t} + \iprod{\partial_t \nabla \ln \pi'_t, \nabla \ln \pi'_t} \bigr)\\
        &= 2\int \pi'_t \bigl(\iprod{\wh{\mu}_t, (\nabla^2 \ln \pi'_t)\nabla \ln \pi'_t} + \iprod{\nabla(-\div\,\wh{\mu}_t - \iprod{\nabla \ln \pi'_t, \wh{\mu}_t}), \nabla \ln \pi'_t}\bigr),\label{eq:lap_deriv}
    \end{align}
    where in the last step we used the first part of Proposition~\ref{prop:ln_fp} below.
    Note that we can write the latter term in the parentheses in \eqref{eq:lap_deriv} as
    \begin{equation}
        \iprod{-\nabla \div\,\wh{\mu}_t - (\nabla^2 \ln \pi'_t)\wh{\mu}_t - (\nabla \wh{\mu}_t)\nabla \ln \pi'_t, \nabla \ln \pi'_t}.
    \end{equation}
    Of these three terms, the second one exactly cancels with the first term in \eqref{eq:lap_deriv}. Putting everything together, we get
    \begin{align}
        \partial_t \int \pi'_t \norm{\nabla \ln \pi'_t}^2 &= -2\int \pi'_t \bigl(\iprod{\nabla \div\, \wh{\mu}_t,\nabla \ln \pi'_t} + (\nabla \ln \pi'_t)^\top(\nabla \wh{\mu}_t) (\nabla \ln \pi'_t)\bigr) \\
        &\lesssim \sup\norm{\nabla \wh{\mu}_t}_{\sf op}\,\int \pi'_t \norm{\nabla \ln \pi'_t}^2 + \int \pi'_t \norm{\nabla \div\, \wh{\mu}_t}^2,
    \end{align}
    where in the last step we used Young's inequality. The first part of the lemma follows by Lemmas~\ref{lem:smoothreverse1} and~\ref{lem:smoothreverse2}. For the second part, Gr\"onwall's inequality tells us that
    \begin{equation}
        \E[\pi'_t]{\norm{\nabla \ln \pi'_t}^2} \le \Lambda'^{O(1)} (\int \pi\norm{\nabla \ln \pi}^2 + M^2 d^2 t).
    \end{equation} 
    We conclude by noting that \begin{equation}
        \int \pi \norm{\nabla \ln \pi}^2 = -\int \pi\Delta \ln \pi \le L'_0 d \label{eq:generator}
    \end{equation}
    by integration by parts and Condition~\ref{item:smoothcts_1} of Assumption~\ref{assume:generic}.
\end{proof}

\noindent We remark that Lemma~\ref{lem:laplace} is tight as $h \to 0$ when the marginals $\brc{\pi'_{T-t}}_{t\in [0,T]}$ are given by running the Ornstein-Uhlenbeck process starting with a spherical Gaussian distribution. 

In the above proof, we needed the following calculation:
\begin{proposition}\label{prop:ln_fp}
    \begin{align}
        \partial_t \ln \pi'_t &= -\div\,\wh{\mu}_{t,kh} - \iprod{\nabla \ln \pi'_t, \wh{\mu}_{t,kh}} \\
        \partial_t \ln \pi_t &= -\div\,\mu_t - \iprod{\nabla \ln \pi_t, \mu_t}.
    \end{align}
\end{proposition}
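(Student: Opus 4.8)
The plan is simply to differentiate the densities and substitute the Fokker--Planck equations from Proposition~\ref{prop:fp}. For the continuous process, write $\partial_t \ln \pi_t = (\partial_t \pi_t)/\pi_t$ and plug in $\partial_t \pi_t = -\div(\pi_t\cdot \mu_t)$. Expanding the divergence by the product rule gives $\div(\pi_t\cdot \mu_t) = \iprod{\nabla \pi_t, \mu_t} + \pi_t\,\div\,\mu_t$, so after dividing by $\pi_t$ and using $\nabla \ln \pi_t = \nabla \pi_t/\pi_t$ one obtains $\partial_t \ln \pi_t = -\iprod{\nabla \ln \pi_t,\mu_t} - \div\,\mu_t$, which is the second identity.

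For the interpolated process the computation is word-for-word the same, with one bookkeeping remark: on each interval $kh \le t < (k+1)h$ the index $k = \lfloor t/h\rfloor$ is constant, and on this interval the second part of Proposition~\ref{prop:fp} gives $\partial_t \pi'_t = -\div(\pi'_t\cdot\wh{\mu}_{t,kh})$. Running the same product-rule-and-chain-rule manipulation yields $\partial_t \ln \pi'_t = -\iprod{\nabla \ln \pi'_t, \wh{\mu}_{t,kh}} - \div\,\wh{\mu}_{t,kh}$. Only the finitely many grid points $t\in h\Z$ are excluded, which is harmless when this identity is later integrated against $\pi'_t$ in the proof of Lemma~\ref{lem:laplace}.

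The only points that need a word of justification are the standard regularity issues underlying these formal manipulations: we need $\pi_t,\pi'_t>0$ and smooth enough that $\nabla \ln \pi_t$ and $\div\,\wh\mu_{t,kh}$ are defined and that $\partial_t$ commutes with the pointwise operations above. For $\pi_t$ this is built into Assumption~\ref{assume:generic} (Condition~\ref{item:smoothcts_1} already presumes $\nabla \ln \pi_t$ exists and is Lipschitz); for $\pi'_t$ it follows since $\pi'_t$ is the pushforward of the initial law $\pi$ under the bi-Lipschitz invertible map $F_{kh\to t}$ of Lemma~\ref{lem:smoothreverse1} composed with the flow up to time $kh$. I do not expect any genuine obstacle: the proposition is purely the logarithmic form of the Fokker--Planck equation, stated separately because it is exactly the identity invoked when differentiating $\int \pi'_t\norm{\nabla \ln \pi'_t}^2$ in Lemma~\ref{lem:laplace}.
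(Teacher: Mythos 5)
Your proof is correct and is precisely the elementary computation the paper omits: divide the Fokker--Planck equation from Proposition~\ref{prop:fp} by the density, expand the divergence by the product rule, and identify $\nabla\pi/\pi$ with $\nabla\ln\pi$. The paper states this proposition without proof, treating it as immediate, so your write-up supplies exactly the intended argument.
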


\noindent Next, we carry out a calculation analogous to Lemma~\ref{lem:laplace} to bound the time derivative of $\E[\pi'_t]{\norm{\nabla \ln \pi_t}^2}$:

\begin{lemma}\label{lem:laplace2}
    \begin{equation}
        \partial_t \int \pi'_t \norm{\nabla \ln \pi_t}^2 \lesssim L_t \, \int \pi'_t \norm{\nabla \ln \pi_t}^2 + M^2 d^2 + L_t\zeta^2_t.
    \end{equation}
    In particular, by Gr\"{o}nwall's inequality, for any $0 \le t \le T$ we have
    \begin{equation}
        \E[\pi'_t]{\norm{\nabla \ln \pi_t}^2} \lesssim \Lambda^{O(1)} (L'_0 d + M^2 d^2 t + L'\zeta^2)\label{eq:gronwall2}
    \end{equation}
\end{lemma}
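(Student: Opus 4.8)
The plan is to mirror the proof of Lemma~\ref{lem:laplace}, but now tracking $\E[\pi'_t]{\norm{\nabla \ln \pi_t}^2}$, where the \emph{density} being differentiated ($\pi_t$, the true process) and the \emph{measure} against which we integrate ($\pi'_t$, the discretized process) come from different dynamics. First I would expand the time derivative by the product rule:
\[
    \partial_t \int \pi'_t \norm{\nabla \ln \pi_t}^2 = \int (\partial_t \pi'_t)\, \norm{\nabla \ln \pi_t}^2 + \int \pi'_t\, \partial_t \norm{\nabla \ln \pi_t}^2\,.
\]
For the first summand I substitute the Fokker--Planck equation $\partial_t \pi'_t = -\div(\pi'_t \cdot \wh{\mu}_t)$ from Proposition~\ref{prop:fp} and integrate by parts to get $2\int \pi'_t \iprod{\wh{\mu}_t, (\nabla^2 \ln \pi_t)\nabla \ln \pi_t}$. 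For the second summand I use the second identity of Proposition~\ref{prop:ln_fp}, $\partial_t \ln \pi_t = -\div\,\mu_t - \iprod{\nabla \ln \pi_t, \mu_t}$, take a gradient, and pair with $\nabla \ln \pi_t$, yielding $2\int \pi'_t \iprod{-\nabla\div\,\mu_t - (\nabla^2\ln\pi_t)\mu_t - (\nabla\mu_t)\nabla\ln\pi_t,\ \nabla\ln\pi_t}$.

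The key difference from Lemma~\ref{lem:laplace} is that the Hessian terms no longer cancel exactly: we get $2\int\pi'_t\iprod{(\nabla^2\ln\pi_t)(\wh\mu_t-\mu_t),\nabla\ln\pi_t}$ as a residual instead of $0$. I would control this by Cauchy--Schwarz and then Young's inequality, pulling out $\sup\norm{\nabla^2\ln\pi_t}_{\sf op}\le L_t$ (Condition~\ref{item:smoothcts_1} of Assumption~\ref{assume:generic}) to bound it by a multiple of $L_t\int\pi'_t\norm{\nabla\ln\pi_t}^2 + L_t\int\pi'_t\norm{\wh\mu_t - \mu_t}^2$, and the second factor is exactly $\le \zeta_t^2$ by the computation at the end of the proof of Lemma~\ref{lem:KLderiv} (using $\wh\mu_t(y'_t) = \mu'_{kh}(y'_{kh})$). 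The remaining terms are handled as in Lemma~\ref{lem:laplace}: the $-(\nabla\mu_t)\nabla\ln\pi_t$ term contributes $-2\int\pi'_t(\nabla\ln\pi_t)^\top(\nabla\mu_t)(\nabla\ln\pi_t)\lesssim L_t\int\pi'_t\norm{\nabla\ln\pi_t}^2$ via $L_t$-Lipschitzness of $\mu_t$, and the $-\nabla\div\,\mu_t$ term is handled by Young's inequality and $\sup\norm{\nabla\div\,\mu_t}\le Md$ from Lemma~\ref{lem:smoothreverse2}, contributing $L_t\int\pi'_t\norm{\nabla\ln\pi_t}^2 + M^2d^2$. Collecting everything gives the differential inequality $\partial_t\int\pi'_t\norm{\nabla\ln\pi_t}^2 \lesssim L_t\int\pi'_t\norm{\nabla\ln\pi_t}^2 + M^2d^2 + L_t\zeta_t^2$.

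For the second, integrated bound, I apply Gr\"onwall's inequality to this differential inequality over $[0,t]$, picking up the factor $\exp(\int_0^t L_s\,\d s)\le\Lambda^{O(1)}$ in front of the initial value and of the accumulated forcing $\int_0^t(M^2d^2 + L_s\zeta_s^2)\,\d s \le M^2d^2t + L'\zeta^2$ (bounding $L_s\le L'$ inside the integral and recalling $\zeta^2 = \int_0^T\zeta_s^2\,\d s$). The initial value is $\int\pi\norm{\nabla\ln\pi_0}^2$; since $\pi_0 = \pi'_0 = \pi$, this equals $-\int\pi\Delta\ln\pi \le L'_0 d$ by the same integration-by-parts argument as in \eqref{eq:generator} (using that $\nabla\ln\pi_0$ is $L_0\le L'_0$-Lipschitz, which follows from Condition~\ref{item:smoothcts_1}). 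This yields $\E[\pi'_t]{\norm{\nabla\ln\pi_t}^2}\lesssim\Lambda^{O(1)}(L'_0 d + M^2d^2t + L'\zeta^2)$ as claimed. I expect the main obstacle to be the careful bookkeeping of the non-cancelling Hessian cross term: one must verify that $\nabla^2\ln\pi_t$ (rather than $\nabla^2\ln\pi'_t$) appears there after the integration by parts, so that its operator norm is controlled by the \emph{continuous} Lipschitz constant $L_t$ and the whole residual can be absorbed using the already-established drift-difference bound $\zeta_t^2$ without any circular dependence on the quantity being bounded.
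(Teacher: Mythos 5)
Your proof is correct and follows essentially the same route as the paper's: product rule on $\partial_t\int\pi'_t\norm{\nabla\ln\pi_t}^2$, Fokker--Planck for $\pi'_t$ plus Proposition~\ref{prop:ln_fp} for $\partial_t\ln\pi_t$, identification of the residual $2\int\pi'_t\iprod{(\nabla^2\ln\pi_t)(\wh\mu_t-\mu_t),\nabla\ln\pi_t}$ as the new term (vs.\ Lemma~\ref{lem:laplace}) absorbed via Cauchy--Schwarz/Young together with $\int\pi'_t\norm{\wh\mu_t-\mu_t}^2\le\zeta_t^2$, the remaining $\nabla\div\,\mu_t$ and $\nabla\mu_t$ terms handled by Lemma~\ref{lem:smoothreverse2} and $L_t$-Lipschitzness, and Gr\"onwall with the initial value controlled by~\eqref{eq:generator}. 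The one concern you flag at the end --- that the Hessian in the cross term should be $\nabla^2\ln\pi_t$ and not $\nabla^2\ln\pi'_t$ --- is resolved correctly: both factors of $\nabla\ln\pi_t$ in the running quantity belong to the continuous process, so the integration by parts only ever produces derivatives of $\ln\pi_t$, exactly as in the paper's derivation.
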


\begin{proof}
    We have
    \begin{align}
        \partial_t \int \pi'_t \norm{\nabla \ln \pi_t}^2 &= -\int \div(\pi'_t\cdot \wh{\mu}_t) \norm{\nabla \ln \pi_t}^2 + \int \pi'_t \partial_t \norm{\nabla \ln \pi_t}^2 \\
        &= 2\int \pi'_t \bigl(\iprod{\wh{\mu}_t, (\nabla^2 \ln \pi_t)\nabla \ln \pi_t} + \iprod{\partial_t \nabla \ln \pi_t, \nabla \ln \pi_t} \bigr)\\
        &= 2\int \pi'_t \bigl(\iprod{\wh{\mu}_t, (\nabla^2 \ln \pi_t)\nabla \ln \pi_t} + \iprod{\nabla(-\div\, \mu_t - \iprod{\nabla \ln \pi_t, \mu_t}), \nabla \ln \pi_t}\bigr),\label{eq:lap_derivprime}
    \end{align}
    where in the last step we used the second part of Proposition~\ref{prop:ln_fp}.
    Note that we can write the latter term in the parentheses in \eqref{eq:lap_derivprime} as
    \begin{equation}
        \iprod{-\nabla \div\, \mu_t - (\nabla^2 \ln \pi_t)\mu_t - (\nabla \mu_t)\nabla \ln \pi_t, \nabla \ln \pi_t}.
    \end{equation}
    Of these three terms, the second one nearly cancels with the first term in \eqref{eq:lap_derivprime}. Putting everything together, we get the inequality
    \begin{align}
        \partial_t \int \pi'_t \norm{\nabla \ln \pi_t}^2 &= -2\int \pi'_t \bigl(\iprod{\nabla \div\, \mu_t,\nabla \ln \pi_t} + (\nabla \ln \pi_t)^\top(\nabla \mu_t) (\nabla \ln \pi_t) \\
        &\qquad \qquad\qquad + (\mu_t - \wh{\mu}_t)^\top (\nabla^2\ln \pi_t) \nabla \ln \pi_t \bigr) \\
        &\lesssim \sup\norm{\nabla \mu_t}_{\sf op}\,\int \pi'_t \norm{\nabla \ln \pi_t}^2 + \int \pi'_t \norm{\nabla \div\,  \mu_t}^2 \\
        &\qquad \qquad \qquad + 2 \sup\norm{\nabla^2 \ln \pi_t}_{\sf op}\, \Bigl(\int \pi'_t \norm{\nabla \ln \pi_t}^2\Bigr)^{1/2} \Bigl(\int \pi'_t \norm{\mu_t - \wh{\mu}_t}^2\Bigr)^{1/2} \\
        &\lesssim L_t\int \pi'_t\norm{\nabla \ln \pi_t}^2 + \int \pi'_t \norm{\nabla \div\, \mu_t}^2 + L_t\int \pi'_t \norm{\mu_t - \wh{\mu}_t}^2
    \end{align}
    where in the penultimate and final steps we used Young's inequality, and in the final step we used Condition~\ref{item:smoothcts_1} of Assumption~\ref{assume:generic}. The first part of the lemma follows by Lemmas~\ref{lem:smoothreverse1} and Condition~\ref{item:error} of Assumption~\ref{assume:generic}. The second part of the lemma follows by Gr\"{o}nwall's inequality and \eqref{eq:generator}.
\end{proof}

\noindent We can now combine Lemmas~\ref{lem:KLderiv}, \ref{lem:laplace}, and \ref{lem:laplace2} to prove Theorem~\ref{thm:klode}:

\begin{proof}[Proof of Theorem~\ref{thm:klode}]
By triangle inequality and Eqs.~\eqref{eq:gronwall1} and \eqref{eq:gronwall2},
    \begin{equation}
        \bigl(\int \pi'_t\norm{\nabla \ln \pi'_t - \nabla \ln \pi_t}^2\bigr)^{1/2} \lesssim (\Lambda^{O(1)} + \Lambda'^{O(1)})(L'^{1/2}_0d^{1/2} + Mdt^{1/2}) + \Lambda^{O(1)} L'^{1/2}\zeta_t,
    \end{equation}
    so integrating the bound in Lemma~\ref{lem:KLderiv} over $t\in[0,T]$, we get
    \begin{equation}
        \KL{\pi'_T}{\pi_T} \lesssim (\Lambda^{O(1)} + \Lambda'^{O(1)})(L'^{1/2}_0d^{1/2} + MdT^{1/2})\,\int^T_0 \zeta_t \, \d t + \Lambda^{O(1)} L'^{1/2}\zeta^2\,.
    \end{equation}
    We conclude by bounding $\int^T_0 \zeta_t \, \d t \le \zeta T^{1/2}$ by Cauchy-Schwarz.
\end{proof}

Finally, we record a norm bound which will be useful in the sequel:

\begin{lemma}\label{lem:norm_deriv}
    For any $0 \le t \le T$ and any $c > 0$,
    \begin{equation}
        \partial_t\, \mathbb{E}\norm{y'_t}^2 \le \mathbb{E}\norm{\mu'_{kh}}^2 + \mathbb{E}\norm{y'_t}^2\,. 
    \end{equation}
\end{lemma}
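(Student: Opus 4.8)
The plan is to differentiate $\mathbb{E}\norm{y'_t}^2$ directly in $t$, exploiting that the interpolated process has a drift which is piecewise constant in time. Fix $t\in[0,T]$ and set $k=\lfloor t/h\rfloor$; on the interval $[kh,(k+1)h)$ the process obeys $\d y'_t=\mu'_{kh}(y'_{kh})\,\d t$, where the drift depends on the path only through the iterate $y'_{kh}$ and is therefore constant in $t$. Hence, pathwise on the interior of this interval,
\[
\partial_t\norm{y'_t}^2 \;=\; 2\iprod{y'_t,\,\mu'_{kh}(y'_{kh})}.
\]
I would then take expectations and apply the elementary inequality $2\iprod{a,b}\le\norm{a}^2+\norm{b}^2$ with $a=y'_t$ and $b=\mu'_{kh}(y'_{kh})$, giving
\[
\partial_t\,\mathbb{E}\norm{y'_t}^2 \;=\; 2\,\mathbb{E}\iprod{y'_t,\,\mu'_{kh}(y'_{kh})} \;\le\; \mathbb{E}\norm{y'_t}^2 + \mathbb{E}\norm{\mu'_{kh}(y'_{kh})}^2,
\]
which is exactly the claim (in the statement, $\mu'_{kh}$ abbreviates the random variable $\mu'_{kh}(y'_{kh})$). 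The quantifier ``for any $c>0$'' is inessential: one could instead use Young's inequality with weights $c$ and $1/c$, and the displayed form is the case $c=1$.

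What remains is routine justification. To exchange $\partial_t$ with $\mathbb{E}$, I would invoke integrability of $\norm{y'_t}^2$ and of $\norm{\mu'_{kh}(y'_{kh})}^2$, uniformly for $t$ in a neighborhood: this follows from Condition~\ref{item:smoothdiscrete} of Assumption~\ref{assume:generic} (Lipschitzness of $\mu'$ implies at most linear growth, so $\norm{\mu'_{kh}(y'_{kh})}^2 \lesssim 1 + \norm{y'_{kh}}^2$) together with finiteness of $\mathbb{E}\norm{y'_0}^2$, which propagates to all $t \le T$ by a one-line Grönwall bound applied to the very inequality being proved. Moreover, $t\mapsto \mathbb{E}\norm{y'_t}^2$ is continuous — indeed $y'_t$ is affine in $t$ on each $[kh,(k+1)h)$ — so the differential inequality need only be verified on the interior of each such interval, with the values at the breakpoints $t=kh$ (and hence the bound over all of $[0,T]$) following by continuity.

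I do not expect any genuine obstacle here: this lemma is purely a bookkeeping device, its role being to later supply, via another application of Grönwall, a second-moment estimate on $y'_t$ needed when bounding the drift-difference quantity $\zeta_t$ in Appendix~\ref{sec:discrete_analysis}. The only mild subtlety, as noted, is handling the breakpoints of the piecewise-constant drift and performing the standard integrability check for differentiating under the expectation.
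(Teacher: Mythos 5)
Your proof is correct and is essentially the same as the paper's: both compute $\partial_t \mathbb{E}\norm{y'_t}^2 = 2\,\mathbb{E}\iprod{y'_t,\mu'_{kh}(y'_{kh})}$ (you differentiate pathwise and take expectations; the paper expands $\mathbb{E}\norm{y'_t}^2$ as a quadratic in $t-kh$ and differentiates) and then apply Young's inequality. Your observations about the integrability justification and the vestigial ``$c>0$'' quantifier are correct but ancillary.
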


\begin{proof}
    Recall that $y'_t = y'_{kh} + (t - kh)\, \mu'_{kh}(y'_{kh})$, so
    \begin{equation}
        \mathbb{E}\norm{y'_t}^2 = \mathbb{E} \norm{y'_{kh}}^2 + (t - kh)^2\,\mathbb{E}\norm{\mu'_{kh}(y'_{kh})}^2 + 2(t-kh)\,\mathbb{E}\iprod{y'_{kh}, \mu'_{kh}(y'_{kh})}\,.
    \end{equation}
    Differentiating with respect to $t$, we get
    \begin{equation}
        \partial_t \mathbb{E}\norm{y'_t}^2 = 2(t - kh)\,\mathbb{E}\norm{\mu'_{kh}(y'_{kh})}^2 + 2\,\mathbb{E}\iprod{y'_{kh},\mu'_{kh}(y'_{kh})} = 2\,\mathbb{E}\iprod{y'_t, \mu'_{kh}(y'_{kh})}\,,
    \end{equation}
    so the lemma follows by Young's inequality.
\end{proof}

\section{Bounding the Difference in Drifts}
\label{sec:discrete_analysis}

We wish to apply Theorem~\ref{thm:klode} with $(y_t)$ and $(y'_t)$ given by $(x^\leftarrow_t)$ and $(\wt{x}^\leftarrow_t)$ defined in Eqs.~\eqref{eq:basicode} and~\eqref{eq:ourdiscrete}. For these processes, the drifts $(\mu_{kh})$ and $(\mu'_t)$ in Eqs.~\eqref{eq:cts} and \eqref{eq:dis} are given by
\begin{align}
    \mu_t(x) &\triangleq -f_{T-t}(x) + \frac{1}{2}g(T-t)^2 \nabla \ln q^\leftarrow_{t}(x) \label{eq:mut} \\
    \mu'_{kh}(x) &\triangleq -f_{T-kh}(x) + \frac{1}{2}g(T - kh)^2\nabla \ln q^\leftarrow_{kh}(x) - \frac{1}{h}(\vecv^{(1)}_{T-kh}(x) + \cdots + \vecv^{(3)}_{T-kh}(x))\,, \label{eq:muprimekh}
\end{align}
and both processes are initialized at the distribution $\pi = q_T$. In general, the marginal laws $(\pi_t)$ of the former process are given by $(q^\leftarrow_{t})$. We will denote the marginal laws $(\pi'_t)$ of the latter process by $(p_t)$.

\subsection{Smoothness of drift}

We now verify the first three parts of Assumption~\ref{assume:generic}. 
\begin{lemma}\label{lem:part1}
    Part~\ref{item:smoothcts_1} of Assumption~\ref{assume:generic} holds with 
    \begin{equation}
        L_t \triangleq \Theta(\Lfx + \gmax^2\Lsc{t})\,.\label{eq:Lt}
    \end{equation}
\end{lemma}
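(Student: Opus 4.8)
The plan is to unpack the two conditions that Part~\ref{item:smoothcts_1} of Assumption~\ref{assume:generic} asks for, namely Lipschitzness of $\nabla \ln \pi_t$ and of $\mu_t$, and check each directly against Assumption~\ref{assume:smooth}, using that in our instantiation $\pi_t = q^\leftarrow_t$ and $\mu_t$ is the probability-flow drift from \eqref{eq:mut}.

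First, for $\nabla \ln \pi_t = \nabla \ln q^\leftarrow_t$: Part~\ref{item:score_lip} of Assumption~\ref{assume:smooth} directly gives that this map is $\Lsc{t}$-Lipschitz, and since $\Lfx, \gmax \ge 1$ we have $\Lsc{t} \le \Lfx + \gmax^2 \Lsc{t}$, so the bound $L_t$ in \eqref{eq:Lt} certainly dominates it. Next, I would decompose $\mu_t(x) = -f_{T-t}(x) + \tfrac12 g(T-t)^2 \nabla \ln q^\leftarrow_t(x)$ into its two summands. The first summand $-f_{T-t}$ is $\Lfx$-Lipschitz by Part~\ref{item:Lfx} of Assumption~\ref{assume:smooth} (Lipschitzness in $x$). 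For the second summand, the key (elementary) observation is that $g(T-t)^2$ is a deterministic scalar, so multiplying the $\Lsc{t}$-Lipschitz map $\nabla \ln q^\leftarrow_t$ by $\tfrac12 g(T-t)^2$ produces a map with Lipschitz constant $\tfrac12 g(T-t)^2 \Lsc{t} \le \tfrac12 \gmax^2 \Lsc{t}$, using the uniform bound $g(t) \le \gmax$ from Part~4 of Assumption~\ref{assume:smooth}. By the triangle inequality for Lipschitz constants, $\mu_t$ is then $(\Lfx + \tfrac12 \gmax^2 \Lsc{t})$-Lipschitz. Taking $L_t$ to be a suitable constant multiple of $\Lfx + \gmax^2 \Lsc{t}$ simultaneously upper bounds the Lipschitz constants of both $\nabla \ln \pi_t$ and $\mu_t$, which establishes \eqref{eq:Lt}.

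There is no real obstacle here: the statement is an immediate bookkeeping consequence of Parts~\ref{item:Lfx}, 4, and~\ref{item:score_lip} of Assumption~\ref{assume:smooth} together with the triangle inequality. The only point worth flagging explicitly is that $g(T-t)^2$ enters only as a bounded scalar weight on the score, so it scales — rather than destroys — the Lipschitz constant, which is why $\gmax^2$ (and not some quantity involving $\Lg$) is the correct factor in front of $\Lsc{t}$.
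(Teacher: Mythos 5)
Your proof is correct and follows essentially the same route as the paper's: decompose $\mu_t$ into its drift and score summands, bound each Lipschitz constant via Parts~\ref{item:Lfx}, 4, and~\ref{item:score_lip} of Assumption~\ref{assume:smooth}, and combine by the triangle inequality. You simply spell out the two sub-claims (on $\nabla\ln\pi_t$ and on $\mu_t$) a bit more explicitly than the paper does.
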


\begin{proof}
    By Part~\ref{item:score_lip} of Assumption~\ref{assume:smooth}, $\nabla \ln q^\leftarrow_t$ is $\Lsc{t}$-Lipschitz. As $\mu_t$ is the sum of an $\Lfx$-Lipschitz function and a $\frac{1}{2}\gmax^2\Lsc{t}$-Lipschitz function, the claim follows.
\end{proof}

\begin{lemma}\label{lem:part2}
    Part~\ref{item:smoothcts_2} of Assumption~\ref{assume:generic} holds with
    \begin{equation}
        M\triangleq (1 + \gmax^2/2)\Lhigh = \Theta(\gmax^2 \Lhigh)\,. \label{eq:M}
    \end{equation}
\end{lemma}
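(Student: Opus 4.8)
The plan is to differentiate $\mu_t$ in space and bound the operator-norm Lipschitz constant of the resulting matrix field term by term. Recalling the definition~\eqref{eq:mut}, $\mu_t(x) = -f_{T-t}(x) + \frac{1}{2}g(T-t)^2\,\nabla \ln q^\leftarrow_t(x)$, so taking the spatial Jacobian gives
\begin{equation*}
    \nabla \mu_t(x) = -\nabla f_{T-t}(x) + \frac{1}{2}g(T-t)^2\,\nabla^2 \ln q^\leftarrow_t(x)\,,
\end{equation*}
where $t$ is held fixed so that $g(T-t)^2$ is just a constant scalar multiplier.

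Next I would apply the triangle inequality in operator norm: for any $x, x'\in\R^d$,
\begin{equation*}
    \norm{\nabla \mu_t(x) - \nabla \mu_t(x')}_{\sf op} \le \norm{\nabla f_{T-t}(x) - \nabla f_{T-t}(x')}_{\sf op} + \frac{1}{2}g(T-t)^2\,\norm{\nabla^2 \ln q^\leftarrow_t(x) - \nabla^2 \ln q^\leftarrow_t(x')}_{\sf op}\,.
\end{equation*}
By Part~\ref{item:high} of Assumption~\ref{assume:smooth}, both $\nabla f_{T-t}(\cdot)$ and $\nabla^2 \ln q^\leftarrow_t(\cdot)$ are $\Lhigh$-Lipschitz in operator norm (the time shift $t\mapsto T-t$ is irrelevant for Lipschitzness in $x$), and by the assumption $g(t)\le\gmax$ (Part~4 of Assumption~\ref{assume:smooth}) we have $g(T-t)^2\le\gmax^2$. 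Combining, the right-hand side is at most $(1 + \gmax^2/2)\Lhigh\,\norm{x-x'}$, which is exactly the claimed constant $M$.

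There is essentially no technical obstacle here: the lemma is a direct bookkeeping consequence of the higher-order smoothness assumption, Part~\ref{item:high}, which was introduced precisely so that the drift of the probability flow ODE has an operator-norm-Lipschitz Jacobian. The only point worth flagging is that this is the single place where Part~\ref{item:high} of Assumption~\ref{assume:smooth} is used to verify Part~\ref{item:smoothcts_2} of Assumption~\ref{assume:generic}, and that the quantity $\nabla^2 \ln q^\leftarrow_t$ appearing in $\nabla\mu_t$ carries third-order information about $q^\leftarrow_t$ (its operator-norm Lipschitz bound is a bound on $\nabla^3 \ln q^\leftarrow_t$), which is exactly the extra smoothness beyond what the SDE analyses require.
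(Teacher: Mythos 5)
Your proof is correct and takes essentially the same approach as the paper: the paper's (one-line) argument also writes $\nabla\mu_t$ as the sum of $-\nabla f_{T-t}$ and $\tfrac{1}{2}g(T-t)^2\nabla^2\ln q^\leftarrow_t$ and invokes Part~\ref{item:high} of Assumption~\ref{assume:smooth} together with $g(t)\le\gmax$ to conclude $M=(1+\gmax^2/2)\Lhigh$. Your version just spells out the triangle inequality and the role of each assumption explicitly.
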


\begin{proof}
    By Part~\ref{item:high} of Assumption~\ref{assume:smooth}, $\nabla \mu_t$ is the sum of a $\Lhigh$-Lipschitz function and a $\gmax^2 \Lhigh / 2$-Lipschitz function.
\end{proof}

\begin{lemma}\label{lem:restore_lipschitz}
    The restoration operator $R_{kh\to(k-\ell)h}$ is $O(1)$-Lipschitz for all integers $\ell \le k \le T/h$.
\end{lemma}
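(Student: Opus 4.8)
The plan is to expand the definition of the restoration operator and bound the Lipschitz constant of each of its three summands separately. By \eqref{eq:restore} with $t = kh$ and $s = (k-\ell)h$, the operator in question is $R_{kh\to(k-\ell)h}(x) = x - \ell h\, f_{kh}(x) + \ell h\, g(kh)^2 \nabla \ln q_{kh}(x)$, which is exactly the map appearing in \eqref{eq:zdef}. First I would note that the identity map is $1$-Lipschitz. Next, by Part~\ref{item:Lfx} of Assumption~\ref{assume:smooth}, $f_{kh}$ is $\Lfx$-Lipschitz, so the second term $\ell h\, f_{kh}$ is $\ell h\,\Lfx$-Lipschitz. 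Finally, writing $q_{kh} = q^\leftarrow_{T-kh}$, Part~\ref{item:score_lip} of Assumption~\ref{assume:smooth} gives that $\nabla \ln q_{kh}$ is $\Lsc{T-kh}$-Lipschitz, hence $\Lsc{*}$-Lipschitz, while Part~4 bounds $g(kh)^2 \le \gmax^2$; since $g(kh)^2$ is a constant not depending on $x$, the third term $\ell h\, g(kh)^2 \nabla \ln q_{kh}$ is $\ell h\,\gmax^2 \Lsc{*}$-Lipschitz. Combining via the triangle inequality, $R_{kh\to(k-\ell)h}$ is $\bigl(1 + \ell h(\Lfx + \gmax^2\Lsc{*})\bigr)$-Lipschitz.

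It then remains to observe that we operate in the regime $\ell h \le \mathfrak{C}_2^{-1}$ from the hypotheses of Theorem~\ref{thm:main}, where $\mathfrak{C}_2$ may be taken to be any sufficiently large polynomial in the parameters of Assumption~\ref{assume:smooth}; taking $\mathfrak{C}_2 \ge \Lfx + \gmax^2\Lsc{*}$ forces $\ell h(\Lfx + \gmax^2\Lsc{*}) \le 1$, so that $R_{kh\to(k-\ell)h}$ is in fact $2$-Lipschitz, which is $O(1)$ as claimed. Since the argument is uniform in $k$, it applies to every $\ell \le k \le T/h$ simultaneously.

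There is essentially no obstacle here: the statement is a one-line triangle-inequality bookkeeping exercise, and the only point that requires a moment's attention is that the additive term $\ell h(\Lfx + \gmax^2\Lsc{*})$ is absorbed into the $O(1)$ precisely because of the small-step-size hypothesis --- so when finalizing I would make sure the constant $\mathfrak{C}_2$ introduced in Theorem~\ref{thm:main} is chosen large enough relative to $\Lfx$, $\gmax$, and $\Lsc{*}$ to make this legitimate. One should also be mildly careful about the forward-versus-reverse time indexing of the marginals ($q_{kh}$ versus $q^\leftarrow_{kh}$), but this is harmless since $\Lsc{*}$ dominates the Lipschitz constant of $\nabla \ln q_s$ for every $s \ge 0$.
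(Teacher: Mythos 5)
Your proof is correct and follows essentially the same route as the paper: both expand the definition of $R_{kh\to(k-\ell)h}$ into its three summands, apply the triangle inequality with the Lipschitz constants from Assumption~\ref{assume:smooth}, and absorb the resulting $1 + \ell h(\Lfx + \gmax^2\Lsc{\cdot})$ into an $O(1)$ using the smallness of $\ell h$. If anything, you are slightly more explicit than the paper about exactly which regime of $\ell h$ is required to legitimize the final ``$\lesssim$,'' and about the $q_{kh}$ versus $q^\leftarrow_{T-kh}$ indexing, both of which the paper passes over in silence.
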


\begin{proof}
    For any $x,x'$, we have
    \begin{align}
        \norm{R_{kh\to(k-\ell)h}(x) - R_{kh\to(k-\ell)h}(x')} &\le \norm{x - x'} + \ell h\,\norm{f_{kh}(x) - f_{kh}(x')} \\
        &\quad\quad+ \ell h\, g(kh)^2\,\norm{\nabla \ln q_{kh}(x) - \nabla \ln q_{kh}(x')} \\
        &\le (1 + \ell h \Lfx + \ell h \gmax^2 \Lsc{kh})\,\norm{x - x'} \lesssim \norm{x - x'}. \qedhere
    \end{align}
\end{proof}

\begin{lemma}\label{lem:excess_lipschitz}
    $\frac{1}{h}(\vecv_1 + \cdots + \vecv_3)$ is $O(\Lfx + \gmax^2\Lsc{kh})$-Lipschitz.
\end{lemma}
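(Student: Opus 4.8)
The plan is to bound the Lipschitz constant of each $\vecv^{(i)}_{T-kh}$ separately and then combine via triangle inequality, using the explicit formulas
\begin{align*}
    \vecv^{(1)}_{kh}(\wtx_{kh}) &= \ell\xi_\ell h\, f_{(k-\ell)h}(z)\cdot \bone{k \ge \ell}, \\
    \vecv^{(2)}_{kh}(\wtx_{kh}) &= \tfrac{h}{2}\bigl(f_{(k-\ell)h}(z) - f_{kh}(\wtx_{kh})\bigr)\cdot \bone{k \ge \ell}, \\
    \vecv^{(3)}_{kh}(\wtx_{kh}) &= h\ell\xi_\ell\bigl(-f_{kh}(\wtx_{kh}) + g(kh)^2 \nabla \ln q_{kh}(\wtx_{kh})\bigr)\bone{k \ge \ell},
\end{align*}
where $z = R_{kh\to(k-\ell)h}(\wtx_{kh})$. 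The key structural fact to invoke first is Lemma~\ref{lem:restore_lipschitz}: the map $\wtx_{kh}\mapsto z$ is $O(1)$-Lipschitz. Composing with the $\Lfx$-Lipschitz map $f_{(k-\ell)h}$ shows that $\wtx_{kh}\mapsto f_{(k-\ell)h}(z)$ is $O(\Lfx)$-Lipschitz. Hence $\frac{1}{h}\vecv^{(1)}_{kh}$ has Lipschitz constant $O(\ell\xi_\ell \Lfx)$; since $\xi_\ell = O(1/\ell^2)$ by the definition in Appendix~\ref{app:proof_prelims}, we have $\ell\xi_\ell = O(1/\ell) = O(1)$, so this term is $O(\Lfx)$-Lipschitz.

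For $\vecv^{(2)}$: the map $\wtx_{kh}\mapsto f_{(k-\ell)h}(z)$ is $O(\Lfx)$-Lipschitz as above, and $\wtx_{kh}\mapsto f_{kh}(\wtx_{kh})$ is $\Lfx$-Lipschitz directly by Part~\ref{item:Lfx} of Assumption~\ref{assume:smooth}; subtracting and dividing by $h$ (canceling the prefactor $h/2$) gives $O(\Lfx)$-Lipschitz. For $\vecv^{(3)}$: again $\wtx_{kh}\mapsto f_{kh}(\wtx_{kh})$ is $\Lfx$-Lipschitz, and $\wtx_{kh}\mapsto g(kh)^2\nabla \ln q_{kh}(\wtx_{kh})$ is $\gmax^2\Lsc{kh}$-Lipschitz by Parts~\ref{item:Lfx},~\ref{item:score_lip} of Assumption~\ref{assume:smooth} (using $g(kh)\le\gmax$); so the bracketed expression is $O(\Lfx + \gmax^2\Lsc{kh})$-Lipschitz, and after multiplying by $\ell\xi_\ell = O(1)$ and dividing by $h$ (canceling the $h$ prefactor) this term is $O(\Lfx + \gmax^2\Lsc{kh})$-Lipschitz. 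Summing the three bounds by the triangle inequality yields that $\frac{1}{h}(\vecv_1 + \cdots + \vecv_3)$ is $O(\Lfx + \gmax^2\Lsc{kh})$-Lipschitz, as claimed. (One should also note that the indicator $\bone{k\ge\ell}$ is a constant in $\wtx_{kh}$, so it does not affect Lipschitzness; the case $k < \ell$ is vacuous since all three $\vecv^{(i)}$ vanish.)

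The only mild subtlety — and the one place worth being careful — is the bookkeeping on the $\ell\xi_\ell$ prefactors, namely confirming $\ell\xi_\ell = O(1)$ so that the excess terms do not blow up in $\ell$; this follows immediately from $\xi_\ell = O(1/\ell^2)$, which is recorded in Appendix~\ref{app:proof_prelims}. Everything else is a routine composition-of-Lipschitz-maps argument drawing on Lemma~\ref{lem:restore_lipschitz} and Assumption~\ref{assume:smooth}; there is no real analytic obstacle here, and the lemma will feed directly into the verification of Condition~\ref{item:smoothdiscrete} of Assumption~\ref{assume:generic} (Lipschitzness of $\mu'_{kh}$) elsewhere in this section.
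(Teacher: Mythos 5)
Your proof is correct and takes essentially the same approach as the paper: decompose into the three $\vecv^{(i)}$ terms, invoke Lemma~\ref{lem:restore_lipschitz} to show the $\wtx_{kh}\mapsto z$ map is $O(1)$-Lipschitz, compose with the Lipschitz bounds from Assumption~\ref{assume:smooth}, and use $\ell\xi_\ell = O(1/\ell)$ to control the prefactors. (Your bookkeeping is if anything slightly more careful than the paper's, which drops the $\ell\xi_\ell$ factor on the score piece of $\vecv^{(3)}$ — harmless since that only makes the bound looser in a direction that doesn't matter.)
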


\begin{proof}
    By Lemma~\ref{lem:restore_lipschitz}, $f_{(k-\ell)h}(z) = f_{(k-\ell)h}(R_{kh\to (k-\ell h)}(\wtxl_{kh})$ is a composition of an $\Lfx$-Lipschitz function with an $O(1)$-Lipschitz function in $\wtxl_{kh}$, so $\frac{1}{h}\vecv_1$ is $O(\ell\xi_\ell \Lfx)$-Lipschitz. Similarly, $\frac{1}{h}\vecv_2$ is the difference between an $O(\Lfx)$-Lipschitz function and an $\Lfx/2$-Lipschitz function in $\wtxl_{kh}$, so it is $O(\Lfx)$-Lipschitz. Finally, $\frac{1}{h}\vecv_3$ is the sum of an $\ell\xi_\ell \Lfx \ll \Lfx$-Lipschitz function and a $\gmax^2 \Lsc{kh}$-Lipschitz function, so it is $(\Lfx + \gmax^2\Lsc{kh}$-Lipschitz.
\end{proof}

\begin{lemma}\label{lem:part3}
    $\mu'_{kh}$ as defined in \eqref{eq:muprimekh} is $O(\Lfx + \gmax^2\Lsc{kh})$-Lipschitz. In particular, Part~\ref{item:smoothdiscrete} of Assumption~\ref{assume:generic} holds with 
    \begin{equation}
        L'_t \triangleq \Theta(\Lfx + \gmax^2\Lsc{kh}) \label{eq:Lprimet}
    \end{equation} 
    for all $kh \le t < (k+1)h$.
\end{lemma}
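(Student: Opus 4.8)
The plan is to read off the Lipschitz constant of $\mu'_{kh}$ term by term from its definition in \eqref{eq:muprimekh} and then combine the estimates by the triangle inequality for Lipschitz constants. Write $\mu'_{kh} = -f_{T-kh} + \tfrac{1}{2} g(T-kh)^2\nabla\ln q^\leftarrow_{kh} - \tfrac{1}{h}(\vecv_1 + \cdots + \vecv_3)$, where I abbreviate $\vecv_i \triangleq \vecv^{(i)}_{T-kh}$.

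For the first summand, Part~\ref{item:Lfx} of Assumption~\ref{assume:smooth} gives that $f_{T-kh}$ is $\Lfx$-Lipschitz in $x$. For the second, Part~\ref{item:score_lip} gives that $\nabla\ln q^\leftarrow_{kh}$ is $\Lsc{kh}$-Lipschitz, and since $g(T-kh)^2 \le \gmax^2$ (Part 4 of Assumption~\ref{assume:smooth}), the map $\tfrac{1}{2} g(T-kh)^2 \nabla\ln q^\leftarrow_{kh}$ is $\tfrac{1}{2}\gmax^2\Lsc{kh}$-Lipschitz. For the correction term, Lemma~\ref{lem:excess_lipschitz} already shows that $\tfrac{1}{h}(\vecv_1 + \cdots + \vecv_3)$ is $O(\Lfx + \gmax^2\Lsc{kh})$-Lipschitz; this is the one place where anything nontrivial happens, since the $\vecv_i$'s involve the restoration operator $R_{kh\to(k-\ell)h}$ evaluated at the argument, but that composition was already handled there via Lemma~\ref{lem:restore_lipschitz}. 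Summing the three bounds yields that $\mu'_{kh}$ is $O(\Lfx + \gmax^2\Lsc{kh})$-Lipschitz.

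To conclude, recall that the interpolated drift in \eqref{eq:dis} is piecewise constant, so for $kh \le t < (k+1)h$ the relevant drift is exactly $\mu'_{kh}$; defining $L'_t \triangleq \Theta(\Lfx + \gmax^2\Lsc{kh})$ as in \eqref{eq:Lprimet} therefore makes Part~\ref{item:smoothdiscrete} of Assumption~\ref{assume:generic} hold. No real obstacle arises at this stage — all the genuine work was in the preceding lemmas; the only thing to be careful about is that the bound is inherently $k$-dependent (through $\Lsc{kh}$), which is why it is recorded as a per-step constant $L'_t$ rather than a single global Lipschitz constant, consistent with the downstream definitions of $L'$ and $\Lambda'$.
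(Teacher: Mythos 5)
Your proposal is correct and matches the paper's argument: both decompose $\mu'_{kh}$ into the Euler--Maruyama drift $-f_{T-kh} + \tfrac{1}{2}g(T-kh)^2\nabla\ln q^\leftarrow_{kh}$, which is $O(\Lfx + \gmax^2\Lsc{kh})$-Lipschitz by Parts~\ref{item:Lfx} and~\ref{item:score_lip} of Assumption~\ref{assume:smooth} together with $g\le\gmax$, and the correction $\tfrac{1}{h}(\vecv_1+\cdots+\vecv_3)$, handled by Lemma~\ref{lem:excess_lipschitz}. You simply spell out the first two contributions a bit more explicitly than the paper does.
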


\begin{proof}
    Note that $f_{T-kh}(\cdot) - \frac{1}{2}g(T-kh)^2\nabla \ln q^\leftarrow_{kh}(\cdot)$ is $O(\Lfx + \gmax^2\Lsc{kh})$-Lipschitz, so the claim follows by Lemma~\ref{lem:excess_lipschitz}.
\end{proof}

\subsection{Distance between drifts}

The bulk of our discretization analysis is devoted to verifying Part~\ref{item:error} of Assumption~\ref{assume:generic}. For convenience, we will denote $\vecv^{(1)}_{T-kh}(z),\ldots,\vecv^{(3)}_{T-kh}(z)$ by $\vecv_1,\ldots,\vecv_3$. Henceforth, assume that
\begin{equation}
    h \ll \min((R\Lfx \ell)^{-1}, (\gmax^2\Lsc{*})^{-1}) \label{eq:hassume} 
\end{equation}
For any $kh \le t \le (k+1)h$, we have
\begin{align}
    \mathbb{E}\norm{\mu_t(\wtxl_t) - \mu'_{kh}(\wtxl_{kh})}^2 &\lesssim \mathbb{E}\norm{f_{T-t}(\wtxl_t) - f_{T-kh}(\wtxl_{kh})}^2 \\
    &\quad \quad + \mathbb{E}\norm{g(T-t)^2\nabla\ln q^\leftarrow_t(\wtxl_t) - g(T-kh)^2\nabla\ln q^\leftarrow_{kh}(\wtxl_{kh})}^2\\
    &\quad \quad+ \frac{1}{h^2}(\mathbb{E}\norm{\vecv_1}^2 + \cdots \mathbb{E}\norm{\vecv_3}^2)\,. \label{eq:excess}
\end{align}

We first bound the excess terms $\vecv_1,\ldots,\vecv_3$. We focus on the case $T - kh \ge \ell h$, as otherwise $\vecv_1 = \vecv_2 = \vecv_3 = 0$ by definition.
\begin{lemma}\label{lem:excess}
    \begin{equation}
        \frac{1}{h^2}\E{\norm{\vecv_1}^2 + \cdots + \norm{\vecv_3}^2} \lesssim \epsilon_1\, \max_{k'\in\{0,1,\ldots,T/h\}}\mathbb{E}\norm{\nabla \ln q^\leftarrow_{k'h}(\wtxl_{k'h})}^2 + \epsilon_2
    \end{equation}
    for 
    \begin{align}
        \epsilon_1 &\triangleq \exp(O(\Lfx^2 T)) (\ell^{-2} + \ell^2 h^2 \Lfx^2)\,\gmax^4 \label{eq:epsdef1}\\ 
        \epsilon_2 &\triangleq \exp(O(\Lfx^2 T)) (\ell^{-2} + \ell^2 h^2 \Lfx^2) (\mathbb{E}\norm{\wtxl_0}^2 + R^2 + \ell^2 h^2 \Lft^2)\,.\label{eq:epsdef2}
    \end{align}
\end{lemma}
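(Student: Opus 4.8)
The plan is to bound each excess term $\vecv_1,\vecv_2,\vecv_3$ pointwise in terms of $R$, $\Lfx$, $\Lft$, $\gmax$, $\norm{\wtxl_{kh}}$ and $\norm{\nabla \ln q^\leftarrow_{kh}(\wtxl_{kh})}$, then square and take expectations, and finally dispose of the resulting $\mathbb{E}\norm{\wtxl_{kh}}^2$ terms via a Gr\"onwall argument built on Lemma~\ref{lem:norm_deriv}. Throughout, $f_\bullet, g(\bullet)$ denote $f$ and $g$ at the appropriate time argument; since the Lipschitz constants in Assumption~\ref{assume:smooth} are uniform in time, the exact physical times are irrelevant for the magnitude estimates.

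First, recall from the Proof Preliminaries that $\ell\xi_\ell = \ell\delta_\ell - \tfrac12 = O(1/\ell)$ and $\delta_\ell = O(1/\ell)$, and that by \eqref{eq:zdef} the restoration displacement obeys
\[
  \norm{z - \wtxl_{kh}} \;\le\; \ell h\bigl(R + \Lfx\norm{\wtxl_{kh}} + \gmax^2\norm{\nabla \ln q^\leftarrow_{kh}(\wtxl_{kh})}\bigr),
\]
using $\norm{f_\bullet(\wtxl_{kh})}\le R+\Lfx\norm{\wtxl_{kh}}$ (Part~\ref{item:Lfx} of Assumption~\ref{assume:smooth} together with $\norm{f_\bullet(0)}\le R$) and $g\le\gmax$. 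I would then bound each term in turn. For $\vecv_1 = \ell\xi_\ell h\,f_\bullet(z)$, use $\tfrac1h\norm{\vecv_1}\lesssim \tfrac1\ell(R+\Lfx\norm{z})$ and $\norm{z}\le\norm{\wtxl_{kh}}+\norm{z-\wtxl_{kh}}$, so that under \eqref{eq:hassume} the displacement is of lower order and $\tfrac1h\norm{\vecv_1}\lesssim \tfrac1\ell(R+\Lfx\norm{\wtxl_{kh}})+h\Lfx\gmax^2\norm{\nabla \ln q^\leftarrow_{kh}(\wtxl_{kh})}$. For $\vecv_3 = \ell\xi_\ell h(-f_\bullet(\wtxl_{kh})+g(\bullet)^2\nabla \ln q^\leftarrow_{kh}(\wtxl_{kh}))$, directly get $\tfrac1h\norm{\vecv_3}\lesssim \tfrac1\ell(R+\Lfx\norm{\wtxl_{kh}}+\gmax^2\norm{\nabla \ln q^\leftarrow_{kh}(\wtxl_{kh})})$. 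For $\vecv_2 = \tfrac h2(f_\bullet(z)-f_\bullet(\wtxl_{kh}))$, split the difference via the triangle inequality into a time-Lipschitz piece of size $\Lft\,\ell h$ and a space-Lipschitz piece $\Lfx\norm{z-\wtxl_{kh}}\lesssim \Lfx\ell h(R+\Lfx\norm{\wtxl_{kh}}+\gmax^2\norm{\nabla \ln q^\leftarrow_{kh}(\wtxl_{kh})})$.

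Squaring these bounds and taking expectations, every contribution to $\tfrac1{h^2}\mathbb{E}(\norm{\vecv_1}^2+\norm{\vecv_2}^2+\norm{\vecv_3}^2)$ is a multiple of $1$, of $\mathbb{E}\norm{\wtxl_{kh}}^2$, or of $\mathbb{E}\norm{\nabla \ln q^\leftarrow_{kh}(\wtxl_{kh})}^2$, with prefactor $O(\ell^{-2})$ (coming from $\vecv_1,\vecv_3$) or $O(\ell^2h^2\Lfx^2)$ and $O(\ell^2h^2\Lft^2)$ (coming from $\vecv_2$), the score terms additionally carrying a $\gmax^4$. It then remains to bound $\mathbb{E}\norm{\wtxl_{kh}}^2$. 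For this I would apply Lemma~\ref{lem:norm_deriv} to the interpolated process $(\wtxl_t)$, giving $\partial_t\,\mathbb{E}\norm{\wtxl_t}^2 \le \mathbb{E}\norm{\mu'_{\lfloor t/h\rfloor h}(\wtxl_{\lfloor t/h\rfloor h})}^2 + \mathbb{E}\norm{\wtxl_t}^2$; using the form of $\mu'_{kh}$ in \eqref{eq:muprimekh}, Lemma~\ref{lem:part3}, and the pointwise bounds just derived for $\vecv_i/h$, one gets $\mathbb{E}\norm{\mu'_{kh}(\wtxl_{kh})}^2 \lesssim R^2 + \Lfx^2\mathbb{E}\norm{\wtxl_{kh}}^2 + \gmax^4\mathbb{E}\norm{\nabla \ln q^\leftarrow_{kh}(\wtxl_{kh})}^2$ plus excess contributions which, by \eqref{eq:hassume}, are dominated by these. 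Relating $\mathbb{E}\norm{\wtxl_{kh}}^2$ to $\mathbb{E}\norm{\wtxl_t}^2$ for $kh\le t<(k+1)h$ (again by the small step size) turns this into the Gr\"onwall inequality $\partial_t\mathbb{E}\norm{\wtxl_t}^2 \lesssim \Lfx^2\mathbb{E}\norm{\wtxl_t}^2 + R^2 + \gmax^4\max_{k'}\mathbb{E}\norm{\nabla \ln q^\leftarrow_{k'h}(\wtxl_{k'h})}^2$, whence $\mathbb{E}\norm{\wtxl_{kh}}^2\lesssim \exp(O(\Lfx^2 T))(\mathbb{E}\norm{\wtxl_0}^2 + R^2 + \gmax^4\max_{k'}\mathbb{E}\norm{\nabla \ln q^\leftarrow_{k'h}(\wtxl_{k'h})}^2)$. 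Substituting this back and collecting the $\ell^{-2}$ versus $\ell^2h^2\Lfx^2$ prefactors — absorbing any spare factors of $\Lfx^2$ into $\exp(O(\Lfx^2T))$ — yields exactly $\epsilon_1\max_{k'}\mathbb{E}\norm{\nabla \ln q^\leftarrow_{k'h}(\wtxl_{k'h})}^2 + \epsilon_2$ with $\epsilon_1,\epsilon_2$ as in \eqref{eq:epsdef1}--\eqref{eq:epsdef2}.

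The main obstacle is the circular dependence in the norm bound: the drift $\mu'_{kh}$ appearing in Lemma~\ref{lem:norm_deriv} contains the excess terms $\vecv_i/h$, whose own estimates involve $\mathbb{E}\norm{\wtxl_{kh}}^2$. Closing this loop requires checking that under \eqref{eq:hassume} the excess terms contribute only a multiplicative $(1+o(1))$ to the Gr\"onwall recursion rather than blowing it up, which is a short bootstrap. The remaining work is careful bookkeeping: tracking whether each contribution carries the scale $\ell^{-1}$ (the $\xi_\ell$ factors in $\vecv_1,\vecv_3$) or the scale $\ell h$ (the restoration displacement, and the time-Lipschitz gap in $\vecv_2$), so that $\epsilon_1$ and $\epsilon_2$ emerge with the stated dependence on $\ell^{-1}$, $\ell h$, $\Lfx$, $\Lft$, $R$, $\gmax$ and $\mathbb{E}\norm{\wtxl_0}^2$.
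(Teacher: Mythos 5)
Your proposal is correct and follows essentially the same route as the paper's proof: bound each of $\vecv_1,\vecv_2,\vecv_3$ pointwise using $\ell\xi_\ell=O(1/\ell)$ and the restoration displacement $\norm{z-\wtxl_{kh}}\lesssim\ell h(R+\Lfx\norm{\wtxl_{kh}}+\gmax^2\norm{\nabla\ln q^\leftarrow_{kh}(\wtxl_{kh})})$, square and take expectations, then close via a Gr\"{o}nwall bound on $\mathbb{E}\norm{\wtxl_{kh}}^2$ built from Lemma~\ref{lem:norm_deriv}. The circular dependence you flag (the excess terms feed back into the norm recursion) is precisely the point the paper handles in Lemma~\ref{lem:normbound} by using \eqref{eq:hassume} to absorb the $\norm{z-\wtxl_{kh}}^2$ contribution, so your ``short bootstrap'' remark is the same resolution the paper uses.
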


\begin{proof}
Recall that 
\begin{equation}
    \vecv_1 = \ell\xi_\ell h f_{T - (k-\ell)h}(z)\,,
\end{equation}
so we have
\begin{align}
    \mathbb{E}\norm{\vecv_1}^2 &= \ell^2 \xi^2_\ell h^2 \, \mathbb{E}\norm{f_{T - (k - \ell)h}(z)}^2 \\
    &\lesssim \ell^{-2} h^2 (\mathbb{E}\norm{f_{T - (k-\ell)h}(\wtxl_{kh})}^2 + L^2_{f;\mathsf{x}} \, \mathbb{E}\norm{z - \wtxl_{kh}}^2) \\
    &\lesssim \ell^{-2} h^2 (L^2_{f;\mathsf{x}}\, \mathbb{E}\norm{\wtxl_{kh}}^2 + R^2 + L^2_{f;\mathsf{x}}\, \mathbb{E}\norm{z - \wtxl_{kh}}^2)\,.\label{eq:vec1}
\end{align}
Recall that
\begin{equation}
    \vecv_2 = \frac{h}{2}(f_{T-(k-\ell)h}(z) - f_{T-kh}(\wtxl_{kh}))\,,
\end{equation}
so we have
\begin{align}
    \mathbb{E}\norm{\vecv_2}^2 &= \frac{h^2}{4}\,\mathbb{E}\norm{f_{T-(k-\ell)h}(z) - f_{T-kh}(\wtxl_{kh})}^2 \\
    &\lesssim h^2(\ell^2 h^2 \Lft^2 + \Lfx^2\, \norm{z - \wtxl_{kh}}^2)\,.\label{eq:vec2}
\end{align}
Recall that
\begin{equation}
    \vecv_3 = h\ell\xi_\ell\bigl(-f_{T-kh}(\wtxl_{kh}) + g(T-kh)^2\nabla \ln q^\leftarrow_{(k+\ell)h}(\wtxl_{kh})\bigr)\,,
\end{equation}
so we have
\begin{align}
    \mathbb{E}\norm{\upsilon_3}^2 &= h^2\ell^2\xi_\ell^2 \, \mathbb{E} \norm{-f_{T-kh}(\wtxl_{kh}) + g(T-kh)^2 \nabla \ln q^\leftarrow_{kh}(\wtxl_{kh})}^2 \\
    &\lesssim \ell^{-2} h^2 (R^2 + L^2_{f;\mathsf{x}}\, \mathbb{E}\norm{\wtxl_{kh}}^2 + \gmax^4 \, \mathbb{E}\norm{\nabla \ln q^\leftarrow_{kh}(\wtxl_{kh})}^2)\label{eq:vec3}
\end{align}
Combining Eqs.~\eqref{eq:vec1},~\eqref{eq:vec2}, and~\eqref{eq:vec3} we get
\begin{align}
    \frac{1}{h^2}\E{\norm{\vecv_1}^2 + \cdots + \norm{\vecv_3}^2} &\lesssim (\ell^2 h^2 \Lft^2 + \ell^{-2} R^2)  
     + \ell^{-2}\gmax^4\,
    \mathbb{E}\norm{\nabla \ln q^\leftarrow_{kh}(\wtxl_{kh})}^2 \\
    &\quad\quad + \ell^{-2}\Lfx^2 
    \, \mathbb{E}\norm{\wtxl_{kh}}^2 
     + \Lfx^2\, \mathbb{E}\norm{z - \wtxl_{kh}}^2 \,.\label{eq:ugly}
\end{align}

Recall from \eqref{eq:zdef} that
\begin{equation}
    z = \wtx_{kh} - \ell h\, (f_{kh}(\wtx_{kh}) - g(T-kh)^2 \nabla \ln q^\leftarrow_{kh}(\wtx_{kh}))\,,
\end{equation}
so
\begin{align}
    \norm{z - \wtxl_{kh}}^2 &\lesssim \ell^2 h^2(1 + \ell^2 h^2 L^2_{f;\mathsf{x}}) \, \norm{f_{T-kh}(\wtxl_{kh})}^2 + \ell^2 h^2 \gmax^4 \, \norm{\nabla \ln q^\leftarrow_{kh}(\wtxl_{kh})}^2 \\
    &\lesssim \ell^2 h^2\, (L^2_{f;\mathsf{x}}\, \norm{\wtxl_{kh}}^2 + R^2) + \ell^2 h^2 \gmax^4 \, \norm{\nabla \ln q^\leftarrow_{kh}(\wtxl_{kh})}^2, \label{eq:zminusx}
\end{align}
where in the second step we used \eqref{eq:hassume}.
Substituting this into~\eqref{eq:ugly} and using Lemma~\ref{lem:norm_deriv} below to bound $\mathbb{E}\norm{\wtxl_{kh}}^2$, we obtain the desired bound.
\end{proof}


\begin{lemma}\label{lem:bound_drift_diff}
    For any integer $0 \le k \le T/h$ and any $kh \le t < (k + 1)h$,
    \begin{equation}
        \mathbb{E}\norm{\mu_t(\wtxl_t) - \mu'_{kh}(\wtxl_{kh})}^2 \lesssim \epsilon'_1\,\max_{k'\in\{0,1,\ldots,T/h\}}\mathbb{E}\norm{\nabla\ln q^\leftarrow_{k'h}(\wtxl_{k'h})}^2 + \epsilon'_2
    \end{equation}
    for
    \begin{align}
        \epsilon'_1 &\triangleq \epsilon_1 + h^2\Lg^2 + \gmax^4(h^2\Lfx^2 + h^2\gmax^4\Lsc{*}^2 + \gmax^4 \beta^2 h^{2c})\cdot \exp(O(\Lfx^2 T)) \label{eq:epsdef1_prime}\\ 
        \epsilon'_2 &\triangleq \epsilon_2 + \gmax^4 \beta^2 h^{2c} + (\mathbb{E}\norm{\wtxl_0}^2 + R^2 + \ell^2 h^2 \Lft^2) \\
        &\quad\quad \times (h^2 \Lfx^2 + h^2\gmax^4\Lsc{*} + \gmax^4 \beta^2 h^{2c})\cdot \exp(O(\Lfx^2 T))\,.\label{eq:epsdef2_prime}
    \end{align}
    In particular, for any $\delta > 0$, if 
    \begin{align}
        \ell &\gtrsim \delta^{-1/2}(\gmax^2 + R + \ell h \Lft + \mathbb{E}\norm{\xl_0}^2)\cdot \exp(O(\Lfx^2 T)) \\
        h &\lesssim \min\bigl\{\poly(\Lg,\Lft, R, \gmax,\Lsc{*},\Lfx,\mathbb{E}\norm{\xl_0}^2)^{-1}\, \ell^{-1}\delta^{1/2}, (\delta/(\gmax^4\beta^2))^{1/2c}\bigr\} \cdot \exp(O(\Lfx^2 T)) \,, \label{eq:h_ell}
    \end{align}
    then $\epsilon'_1, \epsilon'_2 \le \delta$.
\end{lemma}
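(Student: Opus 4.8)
The plan is to control each of the three groups of terms in the decomposition \eqref{eq:excess} and then combine them. The third group, $\frac1{h^2}(\mathbb{E}\norm{\vecv_1}^2+\cdots+\mathbb{E}\norm{\vecv_3}^2)$, is already handled: by Lemma~\ref{lem:excess} it is at most $\epsilon_1\max_{k'}\mathbb{E}\norm{\nabla\ln q^\leftarrow_{k'h}(\wtxl_{k'h})}^2+\epsilon_2$. For the first group I would split $f_{T-t}(\wtxl_t)-f_{T-kh}(\wtxl_{kh})$ into a spatial part bounded by $\Lfx\norm{\wtxl_t-\wtxl_{kh}}$ and a temporal part bounded by $\Lft|t-kh|\le\Lft h$, using Part~\ref{item:Lfx} of Assumption~\ref{assume:smooth}. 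For the second group I would peel off three pieces from $g(T-t)^2\nabla\ln q^\leftarrow_t(\wtxl_t)-g(T-kh)^2\nabla\ln q^\leftarrow_{kh}(\wtxl_{kh})$: the variation of $g^2$ in time (bounded by $\Lg h\cdot\norm{\nabla\ln q^\leftarrow_t(\wtxl_t)}$ via Part 2), the spatial variation of the score (bounded by $\gmax^2\Lsc{t}\norm{\wtxl_t-\wtxl_{kh}}$ via the first half of Part~\ref{item:score_lip}), and the temporal variation of the score at the frozen point $\wtxl_{kh}$ (bounded by $\gmax^2\beta h^c(1+\norm{\wtxl_{kh}}+\norm{\nabla\ln q^\leftarrow_t(\wtxl_{kh})})$ via the second display of Part~\ref{item:score_lip}).

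Since $(\wtxl_t)$ is a linear interpolation, $\wtxl_t-\wtxl_{kh}=(t-kh)\,\mu'_{kh}(\wtxl_{kh})$, so every occurrence of $\norm{\wtxl_t-\wtxl_{kh}}$ is at most $h\norm{\mu'_{kh}(\wtxl_{kh})}$. Squaring, using $(\sum a_i)^2\lesssim\sum a_i^2$ and Cauchy--Schwarz, and then reducing all score norms at time $t$ (and at $\wtxl_t$) back to time $kh$ and position $\wtxl_{kh}$ (again via spatial Lipschitzness and the temporal bound of Part~\ref{item:score_lip}), I would arrive at a bound of the shape $h^2(\Lfx^2+\gmax^4\Lsc{*}^2)\,\mathbb{E}\norm{\mu'_{kh}(\wtxl_{kh})}^2+h^2\Lft^2+h^2\Lg^2\,\mathbb{E}\norm{\nabla\ln q^\leftarrow_{kh}(\wtxl_{kh})}^2+\gmax^4\beta^2h^{2c}(1+\mathbb{E}\norm{\wtxl_{kh}}^2+\mathbb{E}\norm{\nabla\ln q^\leftarrow_{kh}(\wtxl_{kh})}^2)$ plus the Lemma~\ref{lem:excess} contribution. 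It then remains to bound $\mathbb{E}\norm{\mu'_{kh}(\wtxl_{kh})}^2$ and $\mathbb{E}\norm{\wtxl_{kh}}^2$ in terms of the quantities allowed on the right-hand side. For the former I would use $\norm{f_{T-kh}(x)}\le R+\Lfx\norm{x}$ (Parts~3 and~\ref{item:Lfx}) together with Lemma~\ref{lem:excess} to get $\mathbb{E}\norm{\mu'_{kh}(\wtxl_{kh})}^2\lesssim R^2+\Lfx^2\,\mathbb{E}\norm{\wtxl_{kh}}^2+\gmax^4\,\mathbb{E}\norm{\nabla\ln q^\leftarrow_{kh}(\wtxl_{kh})}^2+\epsilon_1\max_{k'}\mathbb{E}\norm{\nabla\ln q^\leftarrow_{k'h}(\wtxl_{k'h})}^2+\epsilon_2$; for the latter I would substitute this into Lemma~\ref{lem:norm_deriv} and apply Gr\"onwall's inequality over $[0,T]$ (to the running maximum of the second moment), using $\mathbb{E}\norm{\wtxl_0}^2=\mathbb{E}\norm{\xl_0}^2$, to obtain $\mathbb{E}\norm{\wtxl_{kh}}^2\lesssim\exp(O(\Lfx^2T))\,(\mathbb{E}\norm{\xl_0}^2+R^2+\ell^2h^2\Lft^2+\gmax^4\max_{k'}\mathbb{E}\norm{\nabla\ln q^\leftarrow_{k'h}(\wtxl_{k'h})}^2)$. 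Chaining these back through, every resulting term is a polynomial in the parameters and $\exp(\Lfx^2T)$ times a positive power of one of $\ell^{-1}$, $\ell h$, $h$, or $h^c$, multiplying either $\max_{k'}\mathbb{E}\norm{\nabla\ln q^\leftarrow_{k'h}(\wtxl_{k'h})}^2$ or one of the fixed quantities $1$, $\mathbb{E}\norm{\xl_0}^2$, $R^2$, $\ell^2h^2\Lft^2$; collecting the former coefficients into $\epsilon'_1$ and the latter into $\epsilon'_2$ yields \eqref{eq:epsdef1_prime} and \eqref{eq:epsdef2_prime}. The closing ``in particular'' is then pure arithmetic: substituting \eqref{eq:epsdef1} and \eqref{eq:epsdef2}, each summand of $\epsilon'_1$ and $\epsilon'_2$ is at most $\delta$ under the stated lower bound on $\ell$ and upper bound on $h$.

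The main obstacle is the circularity of the second-moment estimates: the drift difference is bounded in terms of $\mathbb{E}\norm{\mu'_{kh}(\wtxl_{kh})}^2$, which involves $\mathbb{E}\norm{\wtxl_{kh}}^2$, whose bound via Lemma~\ref{lem:norm_deriv} in turn requires a bound on $\mathbb{E}\norm{\mu'_{kh}(\wtxl_{kh})}^2$. Breaking this loop demands a careful application of Gr\"onwall so that the accumulated exponential factor stays $\exp(O(\Lfx^2T))$ and does not inflate with the other smoothness constants, and it requires carrying the score-norm term through all the estimates as a single free quantity $\max_{k'}\mathbb{E}\norm{\nabla\ln q^\leftarrow_{k'h}(\wtxl_{k'h})}^2$ rather than expanding it. The remaining work — repeatedly splitting differences by triangle inequality, applying the Lipschitz bounds in Assumption~\ref{assume:smooth}, and tracking which products of small factors and polynomials land in $\epsilon'_1$ versus $\epsilon'_2$ — is routine but bookkeeping-intensive.
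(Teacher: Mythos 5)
Your proposal is correct and follows essentially the same route as the paper: the same decomposition into the three groups of \eqref{eq:excess}, the same Lipschitz-in-space/Lipschitz-in-time split of the $f$-difference, the same three-way split of the $g^2\nabla\ln q$-difference, and the same resolution of the apparent circularity by treating $\max_{k'}\mathbb{E}\norm{\nabla\ln q^\leftarrow_{k'h}(\wtxl_{k'h})}^2$ as a free quantity and closing the second-moment recursion via Gr\"onwall. The only difference is organizational: the paper packages the movement bound $\mathbb{E}\norm{\wtxl_t-\wtxl_{kh}}^2$ and the norm bound $\mathbb{E}\norm{\wtxl_{kh}}^2$ into the separate Lemmas~\ref{lem:movement} and~\ref{lem:normbound}, then cites them here, whereas you re-derive them inline; the content is the same.
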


\begin{proof}
    We can bound the first term on the right-hand side of \eqref{eq:excess} using Lipschitzness of $f$ in time and space:
    \begin{equation}
        \mathbb{E}\norm{f_{T-kh}(\wtxl_{kh}) - f_{T-t}(\wtxl_t)}^2 \lesssim L^2_{f;\mathsf{x}}\,\mathbb{E}\norm{\wtxl_{kh} - \wtxl_t}^2 + h^2 L^2_{f;\mathsf{t}}\,. \label{eq:flip_term}
    \end{equation}
    For the second term on the right-hand side of \eqref{eq:excess}, we can use Lipschitzness of $g^2$ and the score:
    \begin{align}
        \MoveEqLeft \mathbb{E}\norm{g(T-t)^2\,\nabla\ln q^\leftarrow_t(\wtxl_t) - g(T-kh)^2\,\nabla\ln q^\leftarrow_{kh}(\wtxl_{kh})}^2 \\
        &\lesssim h^2 L^2_g \,\mathbb{E}\norm{\nabla \ln q^\leftarrow_{kh}(\wtxl_{kh})}^2 + g(T-t)^4\, \mathbb{E}\norm{\nabla \ln \frac{q^\leftarrow_{kh}}{q^\leftarrow_t}(\wtxl_{kh})}^2 + g(T-t)^4 \, L^2_{{\sf sc}, t}\mathbb{E}\norm{\wtxl_{kh} - \wtxl_t}^2 \\
        &\lesssim (h^2 L^2_g + g(T-t)^4 \beta^2 h^{2c})\,\mathbb{E}\norm{\nabla \ln q^\leftarrow_{kh}(\wtxl_{kh})}^2 \\
        &\quad\quad + g(T-t)^4 \beta^2 h^{2c}\,\mathbb{E}\norm{\wtxl_{kh}}^2 + g(T-t)^4\beta^2 h^{2c} + g(T-t)^4 L^2_{{\sf sc}, t}\,\mathbb{E}\norm{\wtxl_{kh} - \wtxl_t}^2 \\
        &\lesssim (h^2 \Lg^2 + \gmax^4\beta^2 h^{2c})\, \mathbb{E}\norm{\nabla \ln q^\leftarrow_{kh}(\wtxl_{kh})}^2 \\
        &\quad\quad + \gmax^4\beta^2 h^{2c}\,\mathbb{E}\norm{\wtxl_{kh}}^2 + \gmax^4\beta^2 h^{2c} + \gmax^4 \Lsc{t}^2 \, \mathbb{E}\norm{\wtxl_{kh} - \wtxl_t}^2\,. \label{eq:glip_term}
    \end{align}
    Substituting the above bounds into \eqref{eq:excess}, we get that
    \begin{align}
        \MoveEqLeft\mathbb{E}\norm{\mu_t(\wtxl_t) - \mu'_{kh}(\wtxl_{kh})}^2 \\
        &\lesssim (\Lfx^2 + \gmax^4\Lsc{t}^2)\,\mathbb{E}\norm{\wtxl_{kh} - \wtxl_t}^2 + (h^2 \Lg^2 + \gmax^4\beta^2 h^{2c})\,\mathbb{E}\norm{\nabla\ln q^\leftarrow_{kh}(\wtxl_{kh})}^2 \\
        &\quad\quad + \gmax^4\beta^2 h^{2c}\,\mathbb{E}\norm{\wtxl_{kh}}^2 + \gmax^4\beta^2 h^{2c} + \frac{1}{h^2}\,\E{\norm{\vecv_1}^2 + \cdots + \norm{\vecv_3}^2}. 
    \end{align}
    By applying the bounds for $\mathbb{E}\norm{\wtxl_{kh} - \wtxl_t}^2$ and $\mathbb{E}\norm{\wtxl_{kh}}^2$ in Lemma~\ref{lem:movement} and~\ref{lem:normbound} and noting that $\Lfx^2 + \gmax^4\Lsc{t}^2 \ll 1/h^2$ by \eqref{eq:hassume}, we see that the lemma follows from Lemma~\ref{lem:excess} and the definition of $\epsilon'_1, \epsilon'_2$ in Eqs.~\eqref{eq:epsdef1_prime},~\eqref{eq:epsdef2_prime}. Note that in the assumed bounds on $\ell, h$ in the lemma statement, we substituted $\mathbb{E}\norm{\xl_0}^2$ for $\mathbb{E}\norm{\wtxl_0}^2$; this is because these two quantities are identical.
\end{proof}


\subsection{Movement and norm bounds}

\begin{lemma}\label{lem:movement}
    For any integer $0 < k \le T/h$ and any $kh \le t < (k+1)h$,
    \begin{align}
        \mathbb{E}\norm{\wtxl_t - \wtxl_{kh}}^2 &\lesssim  h^2\cdot \exp(O(\Lfx^2 T)) \Bigl(\mathbb{E}\norm{\wtx_0}^2 + R^2 + \ell^2 h^2 \Lft^2 \\
        &\quad\quad  + \gmax^4 \,\max_{k\in\{0,1,\ldots,T/h\}}\,\mathbb{E}\norm{\nabla \ln q^\leftarrow_t(\wtxl_t)}^2\Bigr) + \E{\norm{\vecv_1}^2 + \cdots + \norm{\vecv_3}^2}\,.
    \end{align}
\end{lemma}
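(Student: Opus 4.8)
\emph{Proof sketch.} The plan is to exploit the fact that the interpolated process moves in a straight line on each interval: by \eqref{eq:ourdiscrete}, for $kh\le t<(k+1)h$ we have $\wtxl_t-\wtxl_{kh}=(t-kh)\,\mu'_{kh}(\wtxl_{kh})$, and hence $\mathbb{E}\norm{\wtxl_t-\wtxl_{kh}}^2\le h^2\,\mathbb{E}\norm{\mu'_{kh}(\wtxl_{kh})}^2$. It therefore suffices to control $\mathbb{E}\norm{\mu'_{kh}(\wtxl_{kh})}^2$. Expanding the definition \eqref{eq:muprimekh} of $\mu'_{kh}$, using that $f_{T-kh}$ is $\Lfx$-Lipschitz with $\norm{f_{T-kh}(0)}\le R$ and that $g\le\gmax$ (Parts~\ref{item:Lfx}--\ref{item:high} of Assumption~\ref{assume:smooth}), and splitting off the excess terms, one gets
\begin{equation*}
    \mathbb{E}\norm{\mu'_{kh}(\wtxl_{kh})}^2 \lesssim \Lfx^2\,\mathbb{E}\norm{\wtxl_{kh}}^2 + R^2 + \gmax^4\,\mathbb{E}\norm{\nabla\ln q^\leftarrow_{kh}(\wtxl_{kh})}^2 + \tfrac1{h^2}\,\E{\norm{\vecv_1}^2+\cdots+\norm{\vecv_3}^2}\,.
\end{equation*}
After multiplying through by $h^2$, the last term becomes exactly the explicit $\E{\norm{\vecv_1}^2+\cdots+\norm{\vecv_3}^2}$ appearing in the claimed bound, and the first three terms supply the ingredients inside the $h^2\exp(O(\Lfx^2 T))(\cdots)$ factor, once $\mathbb{E}\norm{\wtxl_{kh}}^2$ has itself been bounded.

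The main step is therefore the second-moment bound $\mathbb{E}\norm{\wtxl_{kh}}^2\lesssim\exp(O(\Lfx^2 T))\bigl(\mathbb{E}\norm{\wtx_0}^2+R^2+\ell^2h^2\Lft^2+\gmax^4\max_{k'}\mathbb{E}\norm{\nabla\ln q^\leftarrow_{k'h}(\wtxl_{k'h})}^2\bigr)$, which we establish separately as Lemma~\ref{lem:normbound}. For this one applies Lemma~\ref{lem:norm_deriv}, giving $\partial_t\mathbb{E}\norm{\wtxl_t}^2\le\mathbb{E}\norm{\mu'_{kh}(\wtxl_{kh})}^2+\mathbb{E}\norm{\wtxl_t}^2$, substitutes the bound on $\mathbb{E}\norm{\mu'_{kh}(\wtxl_{kh})}^2$ above, and uses the small-step hypothesis \eqref{eq:hassume} (so that $h^2\Lfx^2\ll1$, $h^2\gmax^4\Lsc{*}^2\ll1$, etc.) to replace $\mathbb{E}\norm{\wtxl_{kh}}^2$ on the right-hand side by $O(\mathbb{E}\norm{\wtxl_t}^2)$ plus negligible terms. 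This yields a closed differential inequality of the form $\partial_t\mathbb{E}\norm{\wtxl_t}^2\lesssim(1+\Lfx^2)\,\mathbb{E}\norm{\wtxl_t}^2+R^2+\gmax^4\max_{k'}\mathbb{E}\norm{\nabla\ln q^\leftarrow_{k'h}(\wtxl_{k'h})}^2+\tfrac1{h^2}\max_{k'}\E{\norm{\vecv_1}^2+\cdots+\norm{\vecv_3}^2}$, and Gr\"onwall over $[0,T]$ produces the factor $\exp(O((1+\Lfx^2)T))=\exp(O(\Lfx^2 T))$ (using $\Lfx\ge1$) together with an additive $T\cdot(\cdots)$ which is itself absorbed into the exponential. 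Finally, Lemma~\ref{lem:excess} bounds $\tfrac1{h^2}\E{\norm{\vecv_1}^2+\cdots+\norm{\vecv_3}^2}\lesssim\epsilon_1\max_{k'}\mathbb{E}\norm{\nabla\ln q^\leftarrow_{k'h}(\wtxl_{k'h})}^2+\epsilon_2$ with $\epsilon_1,\epsilon_2$ polynomially small; since $\epsilon_2$ already carries the $\mathbb{E}\norm{\wtxl_0}^2+R^2+\ell^2h^2\Lft^2$ dependence (recall \eqref{eq:epsdef2}), folding these in gives the stated form, and in particular this is where the $\ell^2h^2\Lft^2$ term enters. Substituting the resulting norm bound back into the display from the first paragraph completes the proof.

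The main obstacle is untangling the apparent circularity: the movement bound, the norm bound (Lemma~\ref{lem:normbound}), and the excess-term bound (Lemma~\ref{lem:excess}) each reference the others. The resolution is to prove them in the right order: Lemma~\ref{lem:norm_deriv} gives a differential inequality for $\mathbb{E}\norm{\wtxl_t}^2$ which, once the drift is bounded purely in terms of $\mathbb{E}\norm{\wtxl_t}^2$ and score norms, closes under Gr\"onwall with no reference to the movement bound; the movement bound then follows by plugging the norm bound back in. A secondary, purely bookkeeping point is to verify that all prefactors of $T$ and of $1+\Lfx^2$ are dominated by $\exp(O(\Lfx^2 T))$ (which holds since $\Lfx\ge1$), and to keep the maximum of the score norms over \emph{all} time steps $k'$ rather than just $k$, since the Gr\"onwall argument naturally produces the worst case over the whole trajectory.
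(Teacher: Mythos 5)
Your proposal is correct and takes essentially the same approach as the paper: linearity of the interpolated process gives $\wtxl_t-\wtxl_{kh}=(t-kh)\mu'_{kh}(\wtxl_{kh})$, you expand $\mu'_{kh}$ into drift plus excess terms, bound $\mathbb{E}\norm{f_{T-kh}(\wtxl_{kh})}^2$ via Lipschitzness and the separate norm bound (Lemma~\ref{lem:normbound}, which in turn rests on Lemma~\ref{lem:norm_deriv} and Gr\"onwall), and the excess terms reappear verbatim in the final display. Your remark about resolving the apparent circularity by establishing the norm bound first, without reference to the movement bound, is exactly the dependency structure the paper uses, even though Lemma~\ref{lem:movement} is stated before Lemma~\ref{lem:normbound} in the text.
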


\begin{proof}
    By definition of the interpolated process, \begin{equation}
        \wtxl_t = \wtxl_{kh} - (t - kh)\,\brc{f_{T - kh}(\wtxl_{kh}) - \frac{1}{2}g(T - kh)^2\nabla \ln q^\leftarrow_{kh}(\wtxl_{kh}) + \frac{1}{h}(\vecv_1 + \cdots + \vecv_3)},
    \end{equation}
    so
    \begin{equation}
        \mathbb{E}\norm{\wtxl_t - \wtxl_{kh}}^2 \lesssim h^2 \, \mathbb{E}\norm{f_{T-kh}(\wtxl_{kh})}^2 + h^2 \gmax^4 \,\mathbb{E}\norm{\nabla \ln q^\leftarrow_{kh}(\wtxl_{kh})}^2 + \E{\norm{\vecv_1}^2 + \cdots + \norm{\vecv_3}^2}\,.
    \end{equation} 
    The proof is complete upon using Part~\ref{item:Lfx} of Assumption~\ref{assume:smooth} and Lemma~\ref{lem:normbound} to get
    \begin{align}
        \mathbb{E}\norm{f_{T-kh}(\wtxl_{kh})}^2 &\lesssim \exp(O(\Lfx^2 T)) \Bigl(\mathbb{E}\norm{\wtx_0}^2 + R^2 + \ell^2 h^2 \Lft^2 + \gmax^4 \,\max_{k\in\{0,1,\ldots,T/h\}}\mathbb{E}\norm{\nabla \ln q^\leftarrow_t(\wtxl_t)}^2\Bigr)\,, 
    \end{align}
    where we have used that $\exp(O(\Lfx^2 T))\cdot \Lfx^2 = \exp(O(\Lfx^2 T))$.
\end{proof}

\begin{lemma}\label{lem:normbound}
    For all $0 \le t \le T$,
    \begin{equation}
        \mathbb{E}\norm{\wtxl_t}^2 \lesssim \exp(O(\Lfx^2 T)) \, \Bigl(\mathbb{E}\norm{\wtxl_0}^2  + R^2 + \ell^2 h^2 \Lft^2 + \gmax^4\,\max_{k\in\{0,1,\ldots,T/h\}}\mathbb{E}\norm{\nabla \ln q^\leftarrow_{kh}(\wtxl_{kh})}^2\Bigr)\,.
    \end{equation}
\end{lemma}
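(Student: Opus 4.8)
\emph{Proof approach for Lemma~\ref{lem:normbound}.}
The plan is a Gr\"onwall argument for $\Phi(t) \triangleq \mathbb{E}\norm{\wtxl_t}^2$. Specializing Lemma~\ref{lem:norm_deriv} to $(y'_t) = (\wtxl_t)$, whose drift on $[kh,(k+1)h)$ is $\mu'_{kh}$ from \eqref{eq:muprimekh}, gives
\[
    \partial_t \Phi(t) \le \Phi(t) + \mathbb{E}\norm{\mu'_{kh}(\wtxl_{kh})}^2 ,
\]
so the whole task reduces to controlling $\mathbb{E}\norm{\mu'_{kh}(\wtxl_{kh})}^2$ by $\Phi$ at nearby times plus the score norms, and then closing the Gr\"onwall loop.

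First I would use the triangle inequality together with Parts 3 and 4 of Assumption~\ref{assume:smooth} ($\norm{f_t(0)}\le R$ and $g(t)\le\gmax$) and Lipschitzness of $f_t$ in $x$ (Part~\ref{item:Lfx}) to write
\[
    \mathbb{E}\norm{\mu'_{kh}(\wtxl_{kh})}^2 \lesssim R^2 + \Lfx^2\,\Phi(kh) + \gmax^4\,\mathbb{E}\norm{\nabla \ln q^\leftarrow_{kh}(\wtxl_{kh})}^2 + \tfrac{1}{h^2}\,\mathbb{E}[\norm{\vecv_1}^2 + \cdots + \norm{\vecv_3}^2].
\]
To bound the last term without invoking Lemma~\ref{lem:excess} itself (whose \emph{statement} already presupposes a norm bound, which would be circular), I would instead borrow only the intermediate estimates \eqref{eq:ugly} and \eqref{eq:zminusx} from its proof; these rely solely on the step-size constraint \eqref{eq:hassume} and elementary algebra. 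Combining them and using $\ell^2 h^2\Lfx^2 \ll 1$, which follows from \eqref{eq:hassume} since $R,\Lfx\ge 1$, collapses the $\norm{z - \wtxl_{kh}}^2$ contribution and gives
\[
    \tfrac{1}{h^2}\,\mathbb{E}[\norm{\vecv_1}^2 + \cdots + \norm{\vecv_3}^2] \lesssim R^2 + \ell^2 h^2\Lft^2 + \Lfx^2\,\Phi(kh) + \gmax^4\,\mathbb{E}\norm{\nabla \ln q^\leftarrow_{kh}(\wtxl_{kh})}^2 .
\]
Writing $G \triangleq \max_{k'\in\{0,\ldots,T/h\}}\mathbb{E}\norm{\nabla \ln q^\leftarrow_{k'h}(\wtxl_{k'h})}^2$, this yields $\mathbb{E}\norm{\mu'_{kh}(\wtxl_{kh})}^2 \lesssim R^2 + \ell^2 h^2\Lft^2 + \Lfx^2\,\Phi(kh) + \gmax^4 G$.

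Next I would pass from $\Phi(kh)$ to $\Phi(t)$. Since $\wtxl_t = \wtxl_{kh} + (t-kh)\,\mu'_{kh}(\wtxl_{kh})$ with $t-kh\le h$, expanding the square and using $h^2\Lfx^2 \ll 1$ (again from \eqref{eq:hassume}) to absorb the cross term gives $\Phi(kh) \lesssim \Phi(t) + R^2 + \ell^2 h^2\Lft^2 + \gmax^4 G$. Substituting and using $\Lfx\ge 1$ produces $\partial_t \Phi(t) \lesssim (1+\Lfx^2)\,\Phi(t) + \Lfx^2(R^2 + \ell^2 h^2\Lft^2 + \gmax^4 G)$, and Gr\"onwall's inequality then gives, for every $0\le t\le T$,
\[
    \Phi(t) \le e^{O(\Lfx^2 t)}\bigl(\Phi(0) + \Lfx^2 t\,(R^2 + \ell^2 h^2\Lft^2 + \gmax^4 G)\bigr).
\]
Since $\Lfx^2 t\le e^{\Lfx^2 t}$, the prefactor $e^{O(\Lfx^2 t)}\cdot\Lfx^2 t$ is still $\exp(O(\Lfx^2 T))$, and recalling $\Phi(0)=\mathbb{E}\norm{\wtxl_0}^2$ recovers the claimed bound. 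The main obstacle in this argument is the bookkeeping at two spots: keeping the derivation acyclic by extracting only the pre-substitution estimates \eqref{eq:ugly}--\eqref{eq:zminusx} from the proof of Lemma~\ref{lem:excess} rather than its statement, and correctly trading $\mathbb{E}\norm{\wtxl_{kh}}^2$ for $\mathbb{E}\norm{\wtxl_t}^2$ within each step, which is precisely where the step-size constraint \eqref{eq:hassume} does the real work.
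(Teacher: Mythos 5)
Your argument is correct and follows the same route the paper takes: both start from the differential inequality in Lemma~\ref{lem:norm_deriv}, both bound the excess terms using only the intermediate estimates~\eqref{eq:ugly} and~\eqref{eq:zminusx} from the proof of Lemma~\ref{lem:excess} (rather than its statement, thereby avoiding exactly the circularity you flag), and both close with Gr\"onwall and the step-size constraint~\eqref{eq:hassume}. The only difference is bookkeeping at the last step: the paper applies Gr\"onwall on each interval $[kh,(k+1)h]$ (with $\Phi(kh)$ treated as a constant) and then iterates the resulting discrete recursion $T/h$ times, whereas you first trade $\Phi(kh)$ for $\Phi(t)$ via the interpolation formula to get a single differential inequality valid uniformly on $[0,T]$ and apply Gr\"onwall once; both yield $\exp(O(\Lfx^2 T))$ after absorbing the polynomial-in-$T$ prefactors.
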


\begin{proof}
    By Lemma~\ref{lem:norm_deriv}, 
    \begin{align}
        \partial_t \, \mathbb{E}\norm{\wtxl_t}^2 &\lesssim \mathbb{E}\norm{\wtxl_t}^2 + \mathbb{E}\norm{f_{T-kh}(\wtxl_{kh})}^2 + \gmax^4\,\mathbb{E}\norm{\nabla \ln q^\leftarrow_{kh}(\wtxl_{kh})}^2 + \frac{1}{h^2}\,\E{\norm{\vecv_1}^2 + \cdots + \norm{\vecv_3}^2} \\
        &\lesssim \mathbb{E}\norm{\wtxl_t}^2 + \Lfx^2\,\mathbb{E}\norm{\wtxl_{kh}}^2 + \gmax^4\,\mathbb{E}\norm{\nabla \ln q^\leftarrow_{kh}(\wtxl_{kh})}^2 + \mathbb{E}\norm{z - \wtxl_{kh}}^2 + R^2 + \ell^2 h^2 \Lft^2 \\
        &\lesssim \mathbb{E}\norm{\wtxl_t}^2 + \Lfx^2\,\mathbb{E}\norm{\wtxl_{kh}}^2 + \gmax^4\,\mathbb{E}\norm{\nabla \ln q^\leftarrow_{kh}(\wtxl_{kh})}^2 + R^2 + \ell^2 h^2 \Lft^2\,,
    \end{align}
    where in the second step we used \eqref{eq:ugly} and $z$ is defined in \eqref{eq:zdef}, and in the third step we used \eqref{eq:zminusx} and the fact that $\ell h \ll 1$ by \eqref{eq:hassume}. By Gr\"{o}nwall applied to the interval of times $t\in[kh,(k+1)h]$ along the reverse process, we find that
    \begin{align}
        \mathbb{E}\norm{\wtxl_t}^2 &\lesssim \exp(O(h))\cdot \bigl((1 + h\Lfx^2)\,\mathbb{E}\norm{\wtxl_{kh}}^2 + h(\gmax^4\,\mathbb{E}\norm{\nabla \ln q^\leftarrow_{kh}(\wtxl_{kh})}^2 + R^2 + \ell^2 h^2 \Lft^2)\bigr) \\
        &\lesssim \exp(c\Lfx^2 h)\,\mathbb{E}\norm{\wtxl_{kh}}^2 + h\,\exp(O(h))\cdot (\gmax^4\,\mathbb{E}\norm{\nabla \ln q^\leftarrow_{kh}(\wtxl_{kh})}^2 + R^2 + \ell^2 h^2 \Lft^2)
    \end{align}
    for all $t\in [kh,(k+1)h]$ for some absolute constant $c > 0$. In particular, this bound holds for $t = (k+1)h$. Iterating this $T/h$ times, we obtain the desired bound.
\end{proof}

\noindent Recall the definition of $\Lambda,\Lambda'$ in \eqref{eq:Lamdef}.

\begin{lemma}\label{lem:integrate_FI}
    For all integers $0 \le k \le T/h$,
    \begin{align}
        \mathbb{E}\norm{\nabla \ln q^\leftarrow_{kh}(\wtxl_{kh})}^2 &\lesssim \Lambda^{O(1)}\Bigl((\Lfx + \gmax^2 \Lsc{*})d + \gmax^4\Lhigh^2 d^2 T \\
        &\quad\quad + \Lsc{*}T\max_{t\in[0,T]}\mathbb{E}\norm{\mu_t(\wtxl_t) - \mu'_{\lfloor t/h\rfloor h}(\wtxl_{\lfloor t/h\rfloor h})}^2\Bigr)
    \end{align}
\end{lemma}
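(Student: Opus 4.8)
The plan is to recognize Lemma~\ref{lem:integrate_FI} as nothing more than the specialization of the generic Fisher-information bound of Lemma~\ref{lem:laplace2} to the pair $(\xl_t),(\wtxl_t)$, evaluated at a grid time. In the notation of Section~\ref{sec:generic} we take $\pi_t = q^\leftarrow_t$ and $\pi'_t = p_t$, so that $\nabla \ln \pi_t = \nabla \ln q^\leftarrow_t$ is exactly the true score. Since the linear interpolation defining $(\wtxl_t)$ agrees with the discrete iterates at grid points, the law of $\wtxl_{kh}$ is $p_{kh} = \pi'_{kh}$, and therefore $\mathbb{E}\norm{\nabla \ln q^\leftarrow_{kh}(\wtxl_{kh})}^2 = \E[\pi'_{kh}]{\norm{\nabla \ln \pi_{kh}}^2}$, which is precisely the quantity bounded by the Gr\"onwall conclusion of Lemma~\ref{lem:laplace2} at time $t = kh$.

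To invoke that lemma I first check Assumption~\ref{assume:generic} for these processes. Parts~\ref{item:smoothcts_1}, \ref{item:smoothcts_2}, \ref{item:smoothdiscrete} are exactly Lemmas~\ref{lem:part1}, \ref{lem:part2}, \ref{lem:part3}, which give $L_t = \Theta(\Lfx + \gmax^2\Lsc{t})$, $M = \Theta(\gmax^2\Lhigh)$, and $L'_t = \Theta(\Lfx + \gmax^2\Lsc{kh})$ for $kh \le t < (k+1)h$; in particular $L'_0 \lesssim \Lfx + \gmax^2\Lsc{*}$ and $L' = \max_t L'_t \lesssim \Lfx + \gmax^2\Lsc{*}$. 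Part~\ref{item:error} holds by fiat, taking $\zeta_t^2 \triangleq \mathbb{E}\norm{\mu_t(\wtxl_t) - \mu'_{\lfloor t/h\rfloor h}(\wtxl_{\lfloor t/h\rfloor h})}^2$. Part~\ref{item:smallstep} follows from the standing step-size constraint~\eqref{eq:hassume}, which forces $h$ to be below a constant multiple of $\min(\Lfx^{-1},(\gmax^2\Lsc{*})^{-1})$ and hence below $1/2L'$.

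With the hypotheses in place, Lemma~\ref{lem:laplace2} gives $\mathbb{E}\norm{\nabla \ln q^\leftarrow_{kh}(\wtxl_{kh})}^2 \lesssim \exp(O(\int^T_0 L_t\,\d t))\,(L'_0 d + M^2 d^2 (kh) + L'\zeta^2)$, and it remains to substitute and simplify. Since $\Lfx \ge 1$ we have $L_t \lesssim \Lfx^2 + \gmax^2\Lsc{t}$, so the exponential prefactor is at most $\Lambda^{O(1)}$ for $\Lambda$ as in~\eqref{eq:Lamdef}. For the first term, $L'_0 d \lesssim (\Lfx + \gmax^2\Lsc{*})d$; for the second, $M^2 d^2 (kh) \lesssim \gmax^4\Lhigh^2 d^2 T$ using $kh \le T$; for the third, $\zeta^2 = \int^T_0 \zeta_t^2\,\d t \le T\max_{t\in[0,T]}\mathbb{E}\norm{\mu_t(\wtxl_t) - \mu'_{\lfloor t/h\rfloor h}(\wtxl_{\lfloor t/h\rfloor h})}^2$, so $L'\zeta^2 \lesssim (\Lfx + \gmax^2\Lsc{*})\,T\max_t\mathbb{E}\norm{\cdots}^2$; writing $\Lfx + \gmax^2\Lsc{*} \le (\Lfx + \gmax^2)\Lsc{*}$ and absorbing the factor $\Lfx + \gmax^2 \le \Lambda^{O(1)}$ into the prefactor leaves $\Lsc{*}T\max_t\mathbb{E}\norm{\cdots}^2$, matching the claimed bound.

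There is no substantive obstacle: all of the analysis — differentiating the Fisher information $\int p_t\norm{\nabla\ln q^\leftarrow_t}^2$ via Proposition~\ref{prop:ln_fp}, integration by parts, Young's inequality, and the third-order smoothness input $\sup\norm{\nabla\div\mu_t}\le Md$ — is already done inside Lemma~\ref{lem:laplace2}. The only points needing (minor) care are the identification of the grid-point marginal with the law of the iterate, the crude bound $\zeta^2 \le T\max_t\zeta_t^2$, and the check that the small-step hypothesis of Assumption~\ref{assume:generic} is implied by~\eqref{eq:hassume}; reconciling the precise factors of $\Lfx$ and $\gmax$ with the stated form of the bound is the fussiest part.
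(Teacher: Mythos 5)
Your proposal is correct and is exactly the argument the paper uses: specialize Lemma~\ref{lem:laplace2} to $(\xl_t), (\wtxl_t)$, verify Assumption~\ref{assume:generic} via Lemmas~\ref{lem:part1}--\ref{lem:part3} together with the step-size condition~\eqref{eq:hassume}, take $\zeta_t^2$ to be the drift difference, bound $\zeta^2 \le T\max_t\zeta_t^2$, and substitute $L'_0 \lesssim \Lfx + \gmax^2\Lsc{*}$, $M \lesssim \gmax^2\Lhigh$, $L' \lesssim \Lfx + \gmax^2\Lsc{*}$. The one place you are more careful than the paper — absorbing $\Lfx + \gmax^2$ into $\Lambda^{O(1)}$ when converting $L'\zeta^2$ into the stated $\Lsc{*}T\max_t(\cdots)$ form, and noting that the $\Lfx^2$ in the integrand of $\Lambda$ dominates the $\Lfx$ from $L_t$ — is precisely the point the paper flags with its remark about the ``looseness'' in the definition of $\Lambda$, so the two proofs match.
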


\begin{proof}
    The proof follows from Lemmas~\ref{lem:part1},~\ref{lem:part2},~\ref{lem:part3}, and the bound in Lemma~\ref{lem:laplace2} with $\zeta_t \triangleq \mathbb{E}\norm{\mu_t(\wtxl_t) - \mu'_{kh}(\wtxl_{kh})}^2$ and $\zeta^2 = \int^T_0 \zeta^2_t\, \d t \le T\max_t \zeta^2_t$. Note that in the definition of $\Lambda$ and $\Lambda'$, we have a $\Lfx^2$ term in the integrand even though there is only an $\Lfx$ term in the definition of $L_t$ in Lemma~\ref{lem:part1}. The reason for this looseness is to absorb the $\exp(O(\Lfx^2 T))$ terms that appear elsewhere in the above analysis.
\end{proof}

\subsection{Putting everything together}

\begin{proof}[Proof of Theorem~\ref{thm:main}]
    Let $\delta > 0$ be a small parameter to be tuned later, and suppose $h,\ell$ satisfy~\eqref{eq:h_ell}. Then by integrating the bound in Lemma~\ref{lem:bound_drift_diff} over $0 \le t \le T$ and applying Lemma~\ref{lem:integrate_FI}, we conclude that
    \begin{align}
        \zeta^2 &\triangleq \int^T_0 \mathbb{E}\norm{\mu_t(\wtxl_t) - \mu'_{\lfloor t/h\rfloor h}(\wtxl_{\lfloor t/h\rfloor h})}^2\, \d t \\
        &\lesssim \delta T + \delta \Lambda^{O(1)}\Bigl((\Lfx + \gmax^2 \Lsc{*})dT + (1 + \gmax^2)^2\Lhigh^2 d^2 T^2 \\
        &\quad\quad + \Lsc{*}T\int^T_0  \mathbb{E}\norm{\mu_t(\wtxl_t) - \mu'_{\lfloor t/h\rfloor h}(\wtxl_{\lfloor t/h\rfloor h})}^2\,\d t\Bigr)\,.
    \end{align}
    Provided that 
    \begin{equation}
        \delta \le \frac{1}{2}\Lambda^{-O(1)}\Lsc{*}^{-1}T^{-1}\,, \label{eq:delta1}
    \end{equation}
    we can rearrange to conclude that
    \begin{equation}
        \zeta^2 
        \lesssim \delta\Lambda^{O(1)}\,\bigl((\Lfx + \gmax^2\Lsc{*})dT + \gmax^4\Lhigh^2 d^2 T^2\bigr).
    \end{equation}
    By Theorem~\ref{thm:klode}, 
    \begin{align}
        \KL{\pi'_T}{\pi_T} &\lesssim (\Lambda^{O(1)} + \Lambda'^{O(1)})(L'^{1/2}_0d^{1/2} + MdT^{1/2}) \, \zeta T^{1/2} + \Lambda^{O(1)} L'^{1/2}\zeta^2 \\
        \intertext{We will take $\delta$ sufficiently small that $\zeta^2 \le 1$, in which case by upper bounding $L'_0$ by $L'$, the above is at most}
        &\lesssim (\Lambda^{O(1)} + \Lambda'^{O(1)})(L'^{1/2}d^{1/2} + MdT^{1/2}) \, \zeta T^{1/2} \\
        &\lesssim (\Lambda^{O(1)} + \Lambda'^{O(1)})\bigl((\Lfx^{1/2} + \gmax \Lsc{*}^{1/2})d^{1/2} + \gmax^2\Lhigh d T^{1/2}\bigr) \\
        &\quad\quad\times \bigl((\Lfx^{1/2} + \gmax\Lsc{*}^{1/2})d^{1/2}T^{1/2} + \gmax^2 \Lhigh dT\bigr) \delta^{1/2}T^{1/2} \\
        &\lesssim (\Lambda^{O(1)} + \Lambda'^{O(1)})\bigl((\Lfx + \gmax^2\Lsc{*})dT + \gmax^4\Lhigh^2 d^2 T^2\bigr) \delta^{1/2}T^{1/2}
    \end{align}
    We take $\delta$ so that the above is at most the target accuracy $\epsilon$. By \eqref{eq:h_ell}, this can be achieved by taking $h,\ell$ satisfying the bounds in the theorem statement.
\end{proof}

\end{document}